\newcommand{\R}{\mathbb{R}}
\let\bbl\Bigl
\let\bbbbl\Biggl
\let\bbr\Bigr
\let\bbbbr\Biggr
\newcommand{\norm}[1]{\lVert{#1}\rVert}
\newcommand{\normmm}[1]{\bbl\lVert{#1}\bbr\rVert}
\newcommand{\normmmmm}[1]{\bbbbl\lVert{#1}\bbbbr\rVert}
\newcommand{\bmat}[1]{\begin{bmatrix}#1\end{bmatrix}}
\newtheorem{thm}{Theorem}
\newtheorem{prop}[thm]{Proposition}
\newtheorem{lem}[thm]{Lemma}
\newtheorem{defn}[thm]{Definition}
\newtheorem{corr}[thm]{Corollary}
\newcommand{\ip}[2]{\langle #1, #2 \rangle}
\newcommand{\mcl}[1]{\mathcal{#1}}
\newcommand{\mbf}[1]{\mathbf{#1}}
\newcommand{\half}{\frac{1}{2}}
\renewcommand{\S}{\mathbb{S}}
\newcommand{\N}{\mathbb{N}}
\begin{document}

\title{Efficient Convex Algorithms for Universal Kernel Learning}

\author{
	\name Aleksandr Talitckii \email atalitck@asu.edu \\
       \addr Department of Mechanical and Aerospace Engineering\\
       Arizona State University\\
       Tempe, AZ 85281-1776, USA
       \AND	
	\name Brendon Colbert \email bkcolbe1@asu.edu \\
       \addr Department of Mechanical and Aerospace Engineering\\
       Arizona State University\\
       Tempe, AZ 85281-1776, USA
       \AND
       \name Matthew M. Peet \email mpeet@asu.edu \\
       \addr Department of Mechanical and Aerospace Engineering\\
       Arizona State University\\
       Tempe, AZ 85281-1776, USA}

\editor{Ambuj Tewari}

\maketitle

\begin{abstract}
The accuracy and complexity of machine learning algorithms based on kernel optimization are determined by the set of kernels over which they are able to optimize. An ideal set of kernels should: admit a linear parameterization (for tractability); be dense in the set of all kernels (for robustness); be universal (for accuracy). Recently, a framework was proposed for using positive matrices to parameterize a class of positive semi-separable kernels. Although this class can be shown to meet all three criteria, previous algorithms for optimization of such kernels were limited to classification and furthermore relied on computationally complex Semidefinite Programming (SDP) algorithms. In this paper, we pose the problem of learning semiseparable kernels as a minimax optimization problem and propose a SVD-QCQP primal-dual algorithm which dramatically reduces the computational complexity as compared with previous SDP-based approaches. Furthermore, we provide an efficient implementation of this algorithm for both classification and regression -- an implementation which enables us to solve problems with 100 features and up to 30,000 datums. Finally, when applied to benchmark data, the algorithm demonstrates the potential for significant improvement in accuracy over typical (but non-convex) approaches such as Neural Nets and Random Forest with similar or better computation time.
\end{abstract}

\begin{keywords}
  kernel functions, multiple kernel learning, semi-definite programming, supervised learning, universal kernels
\end{keywords}

\section{Introduction}
{ 





Kernels allow for a convex formulation of the nonlinear classification and regression problems -- improving accuracy and robustness of data-based modeling. Specifically, every kernel defines a feature map of the data to a Reproducing Kernel Hilbert Space (RKHS) wherein a linear classification or regression problem may be solved. Furthermore, if the kernel is universal, the RKHS will be infinite dimensional -- implying that, e.g. classification data will always be linearly separable in the infinite-dimensional RKHS.  

While any universal kernel (for example Gaussian or Laplacian kernel) will yield a linear separation in its associated RKHS, the robustness of that separation will be strongly influenced by the topology of the RKHS. For a poorly chosen kernel, the resulting fit will be sensitive to noise and hence the classifier or regressor may perform poorly on untrained data. However, for a well chosen kernel, the separation of data will be robust -- yielding improved performance on untrained data. For example, when considering kernel selection in cancer diagnosis~\citep[See][]{misc_breast_cancer_wisconsin} (a problem with significant variations in data collection mechanisms), lack of robustness of the classifier may result in incorrectly labelling a malignant tumour as benign. While rigorous numerical experimentation has been used to find suitable kernels for well-studied problems such as cancer classification~\citep[See][]{hussain2011comparison}, when the underlying data generating mechanism is new or speculative, kernel selection is itself an optimization problem known as learning the kernel. 
Specifically, Kernel Learning (KL) algorithms~\citep[such as those found in][]{xu2010simple,sonnenburg2010shogun,yang2011efficient} have been proposed to find the kernel, $k \in \mcl K$ which optimizes an achievable metric such as the soft margin (for classification). However, the set of kernels, $k \in \mcl K$, over which the algorithm can optimize strongly influences the performance and robustness of the resulting classifier or predictor. }

To understand how the choice of kernel influences performance and robustness, several properties of positive kernels have been considered. For example, a \textit{characteristic} kernel was defined in~\cite{fukumizu2007kernel} to be a kernel whose associated integral operator is injective. Alternatively, a kernel, $k$, is \textit{strictly positive definite} if the associated integral operator has trivial null-space  -- See, e.g.~\cite{steinwart2008support}. As stated in, e.g.~\cite{steinwart2001influence}, it has been observed that kernels with the characteristic and strictly positive definite properties are able to perform arbitrarily well on large sets of training data. Similar to characteristic and strictly positive definite kernels, perhaps the most well-known kernel property is that of $c_0$-universality, which implies that the associated RKHS is dense in $\mcl C$.
The relationship between these and other kernel properties (using, e.g. alternative norms) has been studied extensively in, e.g.~\cite{sriperumbudur2011universality}. In particular, the universal property implies the kernel is both characteristic and strictly positive definite, while under certain conditions the converse also holds and all three properties are equivalent~\citep{simon2018kernel}. As a result of these studies, there is a consensus that in order to be effective on large data sets, a kernel should have the universal property.

While the universal property is now well-established as being a desirable property in any kernel, much less attention has been paid to the question of what are the desirable properties of a set of kernels. This question arises when we use kernel learning algorithms to find the optimal kernel in some parameterized set. The assumption, of course, is that for any given data generating process, there is some ideal kernel whose associated feature map maximizes separability of data generated by that underlying process. Clearly, then, when constructing a set of kernels to be used in kernel learning, we would like to ensure that this set contains the ideal kernel or at least a kernel whose associated feature map closely approximates the feature map of this ideal kernel. With this in mind,~\citep{JMLR} proposed three desirable properties of a set of kernels $\mcl K$ - tractability, density, and universality. Specifically, $\mcl K$ is said to be tractable if $\mcl K$ is convex (or, preferably, a linear variety) - implying the kernel learning problem is solvable in polynomial time \citep[e.g.][]{rakotomamonjy_2008,jain_2012,lanckriet_2004,qiu2005multiple}. The set $\mcl K$ has the density property if, for any $\epsilon>0$ and any positive kernel, $k^*$ there exists a $k \in \mcl K$ where $\norm{k-k^*}\le \epsilon$. The density property implies that kernels from this set can approximate the feature map of the ideal kernel arbitrarily well (which then implies the resulting learned kernel will perform well on untrained data). Finally, the set $\mcl K$ is said to have the universal property if any $k \in \mcl K$ is universal.

Having defined desirably properties in a set of kernels, the question becomes how to parameterize such a set. While there are many ways of parameterizing sets of kernels \citep[See][for a survey]{gonen2011multiple}, not all such parameterizations result in a convex kernel learning algorithm. Furthermore, at present, there is no tractable  parameterization which is dense in the set of all possible kernels. To address this problem, in~\cite{JMLR}, a general framework was proposed for using positive matrices and bases to parameterize families of positive kernels~\citep[as opposed to positive kernel matrices as in][]{lanckriet_2004,qiu2005multiple,ni2006learning}. This framework allows one to define a basis of kernels for a class of integral operators and then to use SDP to find kernels which represent squares of such operators -- implying that the resulting kernels define positive operators. In particular, \cite{JMLR} proposed a set of basis functions which were then used to parameterize positive integral operators of the form (using one-dimension for simplicity)
\begin{equation}
(\mcl P \mbf x)(s):=k_{1a}(s) \int_0^s k_{1b}(\theta)\mbf x(\theta)d \theta+k_{2a}(s)\int_s^1 k_{2b}(\theta)\mbf x(\theta)d \theta,\label{eqn:semisep}
\end{equation}
which correspond to what are known (in one-dimension) as semiseparable kernels -- See, e.g.~\cite{gohberg_book}. In $n$-dimensions, a kernel constructed using this particular parameterization was referred to as a Tessellated Kernel (TK) -- indicative of their blockwise partition of the domain (a feature resulting from the semi-separable structure and reminiscent of the activation functions used in neural nets). It was further shown that the interior of this class of kernels was universal and the set of such kernels was dense in the set of all positive kernels. Using this positive matrix parameterization of the family of Tessellated Kernels, it was shown in~\cite{JMLR} that the associated SVM kernel learning algorithm could be posed as an SDP and that the solution to this SDP achieves superior Test Set Accuracy (TSA) when compared with a representative sample of existing classification algorithms (including non-convex kernel learning methods such as simple Neural Nets).

Unfortunately, however, although the TSA data reported in~\cite{JMLR} showed improvement over existing classification algorithms, this accuracy came at the cost of significant increases in computational complexity -- a factor attributable to the high complexity of primal-dual interior point algorithms for solving SDP.
Unlike Quadratic Programming (QP), which is used to solve the underlying SVM or Support Vector Regression (SVR) problem for a fixed kernel, the use of SDP, Quadratically Constrained Quadratic Programming (QCQP) and Second Order Cone Programming (SOCP) for kernel learning significantly limits the amount of training data which can be processed. Note that while this complexity issue has been partially addressed in the context of MKL \citep[which considered an efficient reduction to QP complexity in][]{jain_2012}, such a reduction to QP has not previously been proposed for SDP-based algorithms such as kernel matrix learning.
The {  main} goal of this paper, then, is to propose a new algorithm for optimizing over families of kernels parameterized by positive matrices, but without the use of SDP and its associated computational overhead.

{  Fundamentally, the algorithm proposed in this paper is based on a reformulation of the SDP defined in~\cite{JMLR} as a saddle-point optimization (See Section~\ref{sec:Ideal} and Section~\ref{sec:GKL}). This saddle-point formulation allows us to then decompose the optimization problem into primal and dual sub-problems, $OPT\_A$ and $OPT\_P$ (Section~\ref{sec:2step}). Based on this decomposition, we propose a Franke-Wolfe type algorithm for solving the kernel learning problem - an approach based on the work in~\cite{rakotomamonjy_2008} and \cite{jain_2012}. Critically, we then show that the SDP in the subproblem $OPT\_P$ admits an analytic solution using the Singular Value Decomposition - implying a worst-case computational complexity of $O(n_P^3)$ where $n_P^2/2+n_P$ is the number of parameters in the family of kernels, $\mcl K$. In addition, we show that $OPT\_A$ is a convex QP and may be similarly solved with a complexity of $O(m^3)$ (or even $O(m^{2.3}$) for LibSVM implementation),  where $m$ is the number of data points. As a result, the resulting computational complexity of the proposed algorithm is dramatically reduced compared with the complexity of the SDP-based algorithms proposed in~\cite{JMLR} (with complexity~\citep{borchers2007implementation}  $O(m^4)$ with respect to $m$ and $O(n_P^6)$ with respect to $n_P$. Summarizing, the proposed algorithm does not require the use of SDP, QCQP or SOCP and, when applied to several standard test cases, has observed complexity which scales as $O(m^{2.3})$ or less.}

In addition to the proposed algorithm, this paper also extends the convex kernel learning framework proposed in~\cite{JMLR} to the problem of kernel learning for regression. The kernel learning problem in regression has been studied for kernel matrix learning as in~\cite{lanckriet_2004} and for MKL in e.g.~\cite{rakotomamonjy_2008,jain_2012}. However, the regression problem has not previously been considered using the generalized framework for kernel learning presented in~\cite{JMLR}. In this paper, we provide such extension and demonstrate significant increases in performance, as measured by both computation time and Mean Square Error (MSE) when compared to other optimization-based kernel learning algorithms as well as when compared to more heuristic approaches such as Random Forest and deep learning.
 

\section{Notation}

We use $\N, \R$ to denote the natural and real numbers, respectively. We use $\mathbf{1}^n\in \R^n$ to denoted the vector of ones. For $x\in\R^n$ we use $\|x\|_p$ for the $L_p$-norm and use $x \ge 0$ to indicate the positive orthant -- i.e. $x_i \ge 0$ for all $i$.  For $x, y\in \R^n$, $x\odot y$ denotes elementwise multiplication. {  The space of $n\times n$-square real symmetric matrices is denoted $\S^{n}$ with $\S^n_{+}\subset \S^n$ being the cone of positive semi-definite matrices. We use $\S^n_{++} \subset \S^n_+$ to denote the space of positive definite matrices. For a matrix $P\in S^n$, we use $P\succeq 0$ ($P\succ 0$) if $P$ is a positive semi-definite matrix (positive definite matrix).} For $A, B\in \R^{n\times n}$, $\ip{A}{B}$ denotes the Frobenius matrix inner product. 
For a compact set $X \subset \R^n$, we denote $\mcl C(X)$ to be the space of scalar continuous functions defined on $X$ and equipped with the uniform norm $\|f\|_{\mcl C} := \sup_{x\in X} |f(x)|$. {  For a differentiable function with a single argument, we use $\nabla f(x)$ to denote the gradient of function $f$ at point $x$. For functions with two explicit arguments where we do not need to specify a point of differentiation, the partial gradient of function $f(x, y)$ is denoted $\nabla_x f(x, y)$. In cases where we need to specify both the argument and the point of differentiation, we use $\nabla_x f(x, y)\big|_{x = x_0}$. Furthermore, if $x \in \R^{n \times n}$ and $f(x, y) \in \R$,  we use $\nabla_x f(x, y) \in \R^{n\times n}$ to denote the matrix where $(\nabla_x f(x, y))_{ij} = \frac{\partial f(x, y)}{\partial x_{ij}} $. } For a given positive definite kernel $k \in \mcl C(X\times X)$, ${\mcl H}_k$ denotes the associated Reproducing Kernel Hilbert Space (RKHS), where the subscript $k$ is dropped if clear from context.

\section{Kernel Sets and Kernel learning} \label{sec:Ideal}
Consider a generalized representation of the kernel learning problem, which encompasses both classification and regression where \citep[using the representor theorem as in][]{scholkopf2001generalized} the learned function is of the form $f_{\alpha,k}(z) = \sum_{i=1}^m \alpha_i k(x_i,z)$.
\begin{equation} \label{eqn:OPT}
\min_{k  \in \mcl{K}} \min_{\substack{\alpha \in \R^m \\ b \in \R}} \norm{f_{\alpha,k}}^2 + C \sum\nolimits_{i=1}^m l(f_{\alpha,k},b)_{y_i,x_i}.
\end{equation}
Here $\norm{f_{\alpha,k}} = \sum_{i=1}^m \sum_{j=1}^m \alpha_i \alpha_j k(x_i,x_j)$ is the norm in the Reproducing Kernel Hilbert Space (RKHS) and $l(f_{\alpha,k},b)_{y_i,x_i}$ is the loss function defined for SVM binary classification and SVM regression as $l_c(f_{\alpha,k},b)_{y_i,x_i}$ and $l_r(f_{\alpha,k},b)_{y_i,x_i}$, respectively, where
\begin{align*}
l_c(f_{\alpha,k},b)_{y_i,x_i}  &=  \max \{  0, 1 - y_i (f_{\alpha,k}(x_i) - b) \}, \\
l_r(f_{\alpha,k},b)_{y_i,x_i} &=  \max \{  0, |y_i - (f_{\alpha,k}(x_i) - b)|- \epsilon \}.
\end{align*}

The properties of the classifier/predictor, $f_{\alpha,k}$, resulting from Optimization Problem~\ref{eqn:OPT} will depend on the properties of the set $\mcl K$, which is presumed to be a subset of the convex cone of all positive kernels. To understand how the choice of $\mcl K$ influences the tractability of the optimization problem and the resulting fit, we consider three properties of the set, $\mcl K$. These properties can be precisely defined as follows.

\subsection{Tractability}
We say a set of kernel functions, $\mcl K$, is tractable if it can be represented using a countable basis.
\begin{defn} \label{Tractability}
The set of kernels $\mcl K$ is \textit{\textbf{tractable}} if there exist a countable set $\{G_{i}(x,y)\}_{i=1}^{\infty}$ such that, for any $k \in \mcl K$, there exists $n_k \in \N$ where $k(x,y)=\sum_{i=1}^{n_k} v_i G_i(x,y)$ for some $v \in \R^{n_k}$.
\end{defn}
Note the $G_i(x,y)$ need not be positive kernel functions. The tractable property is required for the associated kernel learning problem to be solvable in polynomial time.

\subsection{Universality}\label{Universal_Motivation}
Universal kernel functions always have positive definite (full rank) kernel matrices, implying that for arbitrary data $\{y_i,x_i\}_{i=1}^m$, there exists a function $f(z) = \sum_{i=1}^m \alpha_i k(x_i,z)$, such that $f(x_j) = y_j$ for all $j= 1,..,m$.
Conversely, if a kernel is not universal, then there exists a data set $\{x_i,y_i\}_{i=1}^m$ such that for any $\alpha \in \R^m$, there exists some $j\in \{1,\cdots,m\}$ such that $f(y_j) \neq \sum_{i=1}^m \alpha_i k(x_i,x_j)$.
The universality property ensures that the classifier of an SVM designed using a universal kernel will
become increasingly precise as the training data increases, whereas classifiers from a non-universal kernel have limited ability to fit data sets.~\citep[See][]{micchelli_2006}.
	
\begin{defn}[\citet{scholkopf2001generalized}]
A kernel $k:X \times X \rightarrow \R$ is said to be \textbf{universal} on the compact metric space $X$ if it is continuous and there exists an inner-product space $\mcl W$ and feature map, $\Phi : X \rightarrow \mcl W$ such that $k(x,y)=\ip{\Phi(x)}{\Phi(y)}_{\mcl W}$ and where the unique Reproducing Kernel Hilbert Space (RKHS), $\mcl H:=\{f\;:\;f(x)=\ip{v}{\Phi(x)},\; v \in \mcl W\}$
with associated norm $\norm{f}_{\mcl H} :=\inf_{v}\{\norm{v}_{\mcl W}\;:\; f(x)=\ip{v}{\Phi(x)}\}$ is dense in $\mcl C(X):=\{f \,: \, X \rightarrow \R \;:\; $f$\, \text{ is continuous}\}$ where $\norm{f}_{\mcl C}:=\sup_{x\in X} |f(x)|$.
\end{defn}

Recall that the universality property implies the strictly positive definite and characteristic properties on compact domains.
The following definition extends the universal property to a set of kernels.

\begin{defn}
A set of kernel functions $\mcl K$  has the universal property if every kernel function $k \in \mcl K $ is universal.
\end{defn}

%
%

\subsection{Density}
The third property of a kernel set, $\mcl K$, is density which ensures that a kernel can be chosen from $\mcl K$ with an associated feature map which optimizes fitting of the data in the associated feature space. This optimality of fit in the feature space may be interpreted differently for SVM and SVR. Specifically, considering SVM for classification, the kernel learning problem determines the kernel $k \in \mcl K$ for which we may obtain the maximum separation in the kernel-associated feature space. According to~\cite{boehmke2019hands}, increasing this separation distance makes the resulting classifier more robust (generalizable). The density property, then, ensures that the resulting kernel learning algorithm will be maximally robust (generalizable) in the sense of separation distance. In the case of SVR, meanwhile, the kernel learning problem finds the kernel $k\in \mcl K$ which permits the ``flattest'' \citep[see ][]{smola2004tutorial} function in feature space. In this case, the density property ensures that the resulting kernel learning algorithm will be maximally robust (generalizable) in the sense of flatness.

Note that the density properties is distinct from the universality property. For instance consider a set containing a single Gaussian kernel function - which is clearly not ideal for kernel learning.  The set containing a single Gaussian is tractable (it has only one element) and every member of the set is universal. However, it is not dense.

These arguments motivate the following definition of the pointwise density property.
\begin{defn}
The set of kernels $\mcl K$ is said to be \textbf{pointwise dense} if for any positive kernel, $k^*$, any set of data $\{x_i\}_{i=1}^m$, and any $\epsilon>0$, there exists $k \in \mcl K$ such that \[
\norm{k(x_i,x_j)-k^*(x_i,x_j)}\le \epsilon.\]
\end{defn}

\section{A General Framework for Representation of Tractable Kernel Sets} \label{sec:GKL}
Having defined three desirable properties of a set of kernels, we now consider a framework designed to facilitate the creation of sets of kernels which meet these criteria. This framework ensures tractability by providing a linear map from positive matrices to positive kernels. This map is defined by a set of bases, $N$. These bases themselves parameterize kernels the image of whose associated integral operators define the feature space. As we will show in Section~\ref{sec:2step}, kernel learning over a set of kernels parameterized in this way can be performed efficiently using a combination of QP and the { Singular Value Decomposition}. Moreover, as we will show in Section~\ref{sec:TK}, suitable choices of $N$ will ensure that the set of kernels has the density and universality properties.


\begin{lem}\label{lem:tractable}
Let $N$ be any bounded measurable function $N:  Y \times X \rightarrow \R^{n_P}$ on compact $  X$ and $ Y$. If we define
\begin{equation}
{\mcl K}  := \left\{ k ~\bigg|~ k(x,y)= \int_{Y}  N(z,x)^T P N(z,y) dz, \quad P \succeq 0 \right\}, \label{eqn:tractable} \end{equation}
then any $k\in \mcl K$ is a positive kernel function and $\mcl K$ is tractable.
\end{lem}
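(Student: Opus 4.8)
The plan is to verify the two claims separately: that each $k \in \mcl K$ is a positive kernel, and that $\mcl K$ is tractable in the sense of Definition~\ref{Tractability}. Both follow by manipulating the integral representation together with the symmetry and positivity of the matrix $P$.

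For the positivity claim, I would first establish symmetry: since $P \succeq 0$ is symmetric, the scalar integrand satisfies $N(z,x)^T P N(z,y) = N(z,y)^T P N(z,x)$, so $k(x,y) = k(y,x)$. Finiteness (well-definedness) of the defining integral follows from $N$ being bounded and measurable on the compact set $Y$. To show $k$ is a positive kernel, I would take arbitrary points $\{x_i\}_{i=1}^m$ and coefficients $c \in \R^m$ and compute $\sum_{i,j} c_i c_j k(x_i,x_j)$. Pulling the finite sum inside the integral and defining $v(z) := \sum_i c_i N(z,x_i) \in \R^{n_P}$, the quadratic form becomes $\int_{Y} v(z)^T P v(z)\, dz$, whose integrand is pointwise nonnegative because $P \succeq 0$; hence the integral is nonnegative, establishing that $k$ is a positive kernel.

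For tractability, I would expand the matrix product inside the integral entrywise, writing $N(z,x)^T P N(z,y) = \sum_{a,b=1}^{n_P} P_{ab}\, N_a(z,x) N_b(z,y)$, and then interchange the (finite) sum with the integral to obtain $k(x,y) = \sum_{a,b=1}^{n_P} P_{ab}\, G_{ab}(x,y)$, where $G_{ab}(x,y) := \int_{Y} N_a(z,x) N_b(z,y)\, dz$. Boundedness and measurability of $N$ on compact $Y$ guarantee each $G_{ab}$ is a well-defined function, so after relabeling the $n_P^2$ functions $G_{ab}$ with a single index this furnishes the required finite (hence countable) basis, with the coefficient vector $v$ given by the entries of $P$ and $n_k = n_P^2$ independent of the particular $k \in \mcl K$.

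Since every step is elementary, I do not anticipate a genuine obstacle; the only points requiring care are justifying the interchange of sum and integral (trivial here, as the sum is finite) and noting that the $G_{ab}$ need not themselves be positive kernels, consistent with the remark following Definition~\ref{Tractability} -- positivity of $k$ enters solely through the constraint $P \succeq 0$ rather than through positivity of the individual basis functions.
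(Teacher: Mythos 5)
Your proof is correct, and the tractability half is essentially identical to the paper's: expand $N(z,x)^T P N(z,y)$ entrywise, interchange the finite sum with the integral, and read off the basis $G_{ab}(x,y)=\int_Y N_a(z,x)N_b(z,y)\,dz$ with the entries of $P$ as coefficients. The positivity half takes a genuinely different (and more elementary) route. You verify the Gram-matrix definition directly: for any points $\{x_i\}$ and coefficients $c$, the quadratic form collapses to $\int_Y v(z)^T P v(z)\,dz$ with $v(z)=\sum_i c_i N(z,x_i)$, which is nonnegative pointwise since $P\succeq 0$. The paper instead works at the operator level: it writes $k_{\half}(z,x):=P^{\half}N(z,x)$ and factors the associated integral operator as $I_k=I_{k_{\half}}^* I_{k_{\half}}\succeq 0$ with respect to the $L_2$ inner product. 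Both arguments are valid and establish the same conclusion; yours requires less machinery and verifies positive semidefiniteness in the most concrete sense, while the paper's factorization is chosen because it does extra work later -- the observation that $I_k$ lies in whatever $*$-algebra contains $I_{k_{\half}}$ is what the paper uses in Section 6 to conclude that Tessellated kernels inherit the semiseparable structure. Your closing remark that the $G_{ab}$ need not individually be positive kernels, with positivity entering only through $P\succeq 0$, matches the paper's note after Definition 1.
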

\begin{proof}
The proof is straightforward. Given a kernel, $k$, denote the associated integral operator by $I_k$ so that
\[
(I_k \phi)(s)=\int_{X}k(s,t)\phi(t)dt.
\]
If $k\in \mcl K$, it has the form of Eqn.~\eqref{eqn:tractable} for some $P\succeq 0$. Now define $k_{\half}(x,y):=P^{\half}N(x,y)$. Then $I_k=I_{k_{\half}}^*I_{k_{\half}} \succeq  0$ where adjoint is defined with respect to the $L_2$ inner product. This establishes positivity of the kernel. Note that if $I_{k_{\half}}\in \mcl A$ for some *-algebra $\mcl A$, then $I_k\in \mcl A$.

For tractability, we note that for a given $N$, the map $P \mapsto k$ is linear. Specifically,
\[
k(x,y) =\sum\nolimits_{i,j=1}^{n_P}  P_{i,j} G_{i,j}(x,y),\]
where
\begin{equation}
G_{i,j}(x,y) =\int_{Y}N_{i}(z,x)N_{j}(z,y)dz \label{eqn:Gij},
\end{equation}
and thus by Definition~\ref{Tractability}, $\mcl K$ is tractable.
\end{proof}

\noindent \paragraph{Note:} Using the notation for integral operators in the proof of Lemma~\ref{lem:tractable}, we also note that for any $k \in \mcl K$,
\begin{equation}
I_k=\sum\nolimits_{i,j=1}^{n_P} P_{i,j} I_{G_{i,j}}=\sum\nolimits_{i,j=1}^{n_P} P_{i,j} I_{N_{i}}^*I_{N_{j}}.\label{eqn:operator}
\end{equation}

For convenience, we refer to a set of kernels defined as in Eqn.~\eqref{eqn:tractable} as a Generalized Kernel Set, a kernel from such set as a Generalized Kernel, and the associated kernel learning problem in~\eqref{eqn:OPT} as Generalized Kernel Learning (GKL)~\eqref{eqn:tractable}. This is to distinguish such kernels, sets and problems from Tessellated Kernel Learning, which arises from a particular choice of $N$ in the parameterization of $\mcl K$. This distinction is significant, as the algorithms in Sections~\ref{sec:2step} apply to the Generalized Kernel Learning problem, while the results in Section~\ref{sec:TK} only apply to the particular case of Tessellated Kernel Learning.

\section{An Efficient Algorithm for Generalized Kernel Learning in Classification and Regression Problems}\label{sec:2step}
In this section, we assume a family of kernel functions, $\mcl K$, has been parameterized as in~\eqref{eqn:tractable}, and formulate the kernel learning optimization problem for both classification and regression --- representing this as a minimax saddle point problem. This formulation enables a decomposition into convex primal and dual sub-problems, $OPT\_A(P)$ and $OPT\_P(\alpha)$ with no duality gap. We then consider the Frank-Wolfe algorithm and show using Danskin's Theorem that the gradient step can be efficiently computed using the primal and dual sub-problems. Finally, we propose efficient algorithms for computing $OPT\_A(P)$ and $OPT\_P(\alpha)$: in the former case using an efficient Sequential Minimal Optimization (SMO) algorithm for convex QP and in the latter case, using an analytic solution based on the Singular Value Decomposition.


\subsection{Primal-Dual Decomposition}
For convenience, we define the feasible sets for the sub-problems as
\begin{align*}
\mcl X:&=\{P \in \S^{n_P}\;:\; \text{trace}(P) = n_P,\; { P \succeq  0}\},\\
\mcl Y_c:&=\{\alpha \in \R^m\; : \; \sum\nolimits_{i=1}^m \alpha_iy_i = 0,\; 0 \leq \alpha_i \leq C \},\\[-0mm]
\mcl Y_r:&=\{\alpha \in \R^m\;:\; \sum\nolimits_{i=1}^m \alpha_i = 0, \; \alpha_i \in [-C, C]\},
\end{align*}
where $m$ and $C$ are as defined in Optimization Problem~\ref{eqn:OPT}.
In this section, we typically use the generic form $\mcl Y_{\star}$ to refer to either $\mcl Y_c$ or $\mcl Y_r$ depending on whether the algorithm is being applied to the classification or regression problem. To specify the objective function we define $\lambda(\alpha,P)$ as
\begin{equation} \label{OalphaP}
\lambda(\alpha,P):=   -\frac{1}{2} \sum\limits_{i=1}^m\sum\limits_{j=1}^m \alpha_i \alpha_j   \int_Y   N(z,x_i)^T P N(z,y_j) dz,
\end{equation}
where the bases, $N$, and domain, $Y$, are those used to specify the kernel set, $\mcl K$, in Eqn.~\eqref{eqn:tractable}. Additionally, we define $\kappa_c(\alpha):=\sum\nolimits_{i=1}^m \alpha_i$ and
\[
\kappa_r(\alpha):=-\epsilon \sum\nolimits_{i=1}^m |\alpha_i| + \sum\nolimits_{i=1}^m y_i\alpha_i
\]
where, again, we use $\kappa_{\star}=\kappa_c$ for classification and $\kappa_{\star}=\kappa_r$ for regression.

Using the formulation in, e.g.~\cite{lanckriet_2004}, it can be shown that if the family $\mcl K$ is parameterized as in Eqn.~\eqref{eqn:tractable}, then the Generalized Kernel Learning optimization problem in Eqn.~\eqref{eqn:OPT} can be recast as the following minimax saddle point optimization problem.
\begin{align}\label{KernelSVC_P}
OPT_P&:=\min_{P \in \mcl X} \max_{~\alpha \in \mcl Y_{\star}}  \quad \lambda(e_{\star} \odot \alpha,P) + \kappa_{\star}(\alpha),
\end{align}
where $\odot$ indicates elementwise multiplication. For classification, $\mcl Y_\star=\mcl Y_c$, $\kappa_\star=\kappa_c$, and $e_{\star}=e_c: = y$ (vector of labels). For regression, $\mcl Y_\star=\mcl Y_c$, $\kappa_\star=\kappa_r$, and $e_{\star}=e_r: = \mathbf{1}_m$ (vector of ones).

\textbf{Minimax Duality.} To find the dual, $OPT_D$ of the kernel learning optimization problem ($OPT_P$), we formulate two sub-problems:
\begin{equation} \label{opt:AP}
    OPT\_A(P) ~:= \max_{\alpha \in \mcl Y_{\star}} \lambda(e_{\star} \odot \alpha, P)+\kappa_{\star}(\alpha)
\end{equation}
and 
{ 
\begin{equation}
OPT\_P(\alpha):=   \min_{P\in \mcl X} ~~ \lambda(  e_{\star}\odot \alpha, P)+\kappa_{\star}(\alpha) :=   \min_{P\in \mcl X}   \left<D(\alpha), P\right>,\label{opt:OPTP}
\end{equation}
where
\begin{equation}
    D_{i,j}(\alpha) = \sum\nolimits_{k,l = 1}^m (\alpha_k y_k) G_{i,j}(x_k,x_l) (\alpha_l y_l) \label{eqn:Dalpha} 
\end{equation}
and the $G_{i,j}(x, y)$ are as defined in~\eqref{eqn:Gij}.}
Now, we have that
\begin{align*}
OPT_P=\min_{P \in \mcl X} ~ OPT\_A(P)
\end{align*}
and its minmax dual is
\begin{align}  \label{KernelSVC_D}
OPT_D&=\max_{\alpha \in \mcl Y_{\star}} OPT\_P( \alpha) =\max_{~\alpha \in \mcl Y_{\star}} \min_{P \in \mcl X} \quad \lambda(e_{\star} \odot \alpha,P) + \kappa_{\star}(\alpha). \nonumber
\end{align}
The following lemma states that there is no duality gap between $OPT_P$ and $OPT_D$ - a property we will use in our termination criterion.
\begin{lem}$OPT_P=OPT_D$. Furthermore, $\{\alpha^*, P^*\}$ solve $OPT_P$ if and only if $OPT\_P( \alpha^*)=OPT\_A(P^*)$.
\end{lem}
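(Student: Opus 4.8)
The plan is to read the first claim as a strong-minimax (no-duality-gap) statement and derive it from a classical minimax theorem, and then to obtain the saddle-point characterization by an elementary sandwiching argument. Throughout, write $F(\alpha,P):=\lambda(e_{\star}\odot\alpha,P)+\kappa_{\star}(\alpha)$ for the common objective, so that $OPT\_A(P)=\max_{\alpha\in\mcl Y_{\star}}F(\alpha,P)$, $OPT\_P(\alpha)=\min_{P\in\mcl X}F(\alpha,P)$, $OPT_P=\min_{P}\max_{\alpha}F$, and $OPT_D=\max_{\alpha}\min_{P}F$. By ``$\{\alpha^*,P^*\}$ solve $OPT_P$'' I mean that $P^*$ is optimal for the outer minimization in $OPT_P$ and $\alpha^*$ is optimal for the outer maximization in $OPT_D$, i.e. that $(\alpha^*,P^*)$ is a saddle point of $F$ on $\mcl Y_{\star}\times\mcl X$.

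First I would verify the structural hypotheses of a minimax theorem. The set $\mcl X=\{P\in\S^{n_P}:\operatorname{trace}(P)=n_P,\ P\succeq 0\}$ is convex and compact: it is closed, and $P\succeq 0$ with fixed trace forces $0\preceq P\preceq n_P I$, so it is bounded. Each $\mcl Y_{\star}$ is a compact convex polytope in $\R^m$. For fixed $\alpha$, the map $P\mapsto F(\alpha,P)$ is affine (the term $\lambda(e_{\star}\odot\alpha,P)$ is linear in $P$ by~\eqref{OalphaP} and $\kappa_{\star}(\alpha)$ is constant in $P$), hence convex and continuous. For fixed $P$, the map $\alpha\mapsto F(\alpha,P)$ is concave and continuous: the quadratic term equals $-\half\,(e_{\star}\odot\alpha)^T K(P)(e_{\star}\odot\alpha)$, where $K(P)_{ij}=\int_Y N(z,x_i)^T P N(z,x_j)\,dz$ is the kernel Gram matrix and is positive semidefinite by the positivity established in Lemma~\ref{lem:tractable}; precomposing with the linear map $\alpha\mapsto e_{\star}\odot\alpha$ preserves concavity, and $\kappa_{\star}$ is concave (it is linear for $\kappa_c$, and for $\kappa_r$ it is the sum of a linear term and the concave term $-\epsilon\sum_i|\alpha_i|$).

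With these hypotheses in place I would invoke Sion's minimax theorem, with the minimizing variable $P$ ranging over the compact convex set $\mcl X$ and the maximizing variable $\alpha$ over the compact convex set $\mcl Y_{\star}$, to conclude $OPT_P=\min_{P}\max_{\alpha}F=\max_{\alpha}\min_{P}F=OPT_D$. Continuity of $F$ together with compactness of both sets ensures all inner and outer extrema are attained, so optimal $\alpha^*$ and $P^*$ exist.

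For the characterization I would start from the elementary sandwich $OPT\_P(\alpha)\le F(\alpha,P)\le OPT\_A(P)$, valid for all $\alpha\in\mcl Y_{\star}$ and $P\in\mcl X$. For ($\Leftarrow$), assuming $OPT\_P(\alpha^*)=OPT\_A(P^*)$ gives the chain $OPT_P\le OPT\_A(P^*)=OPT\_P(\alpha^*)\le OPT_D=OPT_P$, where the first inequality is $OPT_P=\min_P OPT\_A(P)\le OPT\_A(P^*)$, the second is $OPT\_P(\alpha^*)\le\max_\alpha OPT\_P(\alpha)=OPT_D$, and the last equality is the no-gap result just proved; hence every inequality is an equality, so $P^*$ attains $OPT_P$ and $\alpha^*$ attains $OPT_D$. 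For ($\Rightarrow$), if $(\alpha^*,P^*)$ is a saddle point then $\alpha^*$ maximizes $F(\cdot,P^*)$ and $P^*$ minimizes $F(\alpha^*,\cdot)$, so $OPT\_A(P^*)=F(\alpha^*,P^*)=OPT\_P(\alpha^*)$. I expect the only genuine subtlety to be the concavity of $F$ in $\alpha$ --- specifically the positive semidefiniteness of $K(P)$, which I would cite directly from Lemma~\ref{lem:tractable} --- and, for regression, the concavity of $\kappa_r$; the compactness of $\mcl X$ and the application of the minimax theorem are then routine.
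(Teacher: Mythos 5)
Your proposal is correct and follows essentially the same route as the paper: a classical minimax theorem (you invoke Sion's, the paper invokes Fan's, under the same convex--concave, compactness, and continuity hypotheses) establishes $OPT_P=OPT_D$, and the equivalence is then obtained by the same sandwiching chain $OPT\_P(\alpha)\le OPT_D=OPT_P\le OPT\_A(P)$. The only difference is that you verify the concavity of the objective in $\alpha$ (positive semidefiniteness of the Gram matrix and concavity of $\kappa_r$) explicitly, where the paper merely asserts that the conditions hold; this is a welcome but not substantively different elaboration.
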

\begin{proof}


For any minmax optimization problem with objective function $\phi$, we have
\[d^* = \max_{\alpha \in \mcl Y} \min_{P \in \mcl X} \phi(P,\alpha) \leq   \min_{P \in \mcl X}\max_{\alpha \in \mcl Y}  \phi(P,\alpha) = p^* \]
and strong duality holds ($p^*-d^* = 0$) if $\mcl X$ and $\mcl Y$ are both convex and one is compact, $\phi(\cdot,\alpha)$ is convex for every $\alpha \in \mcl Y$ and $\phi(P,\cdot)$ is concave for every $P \in \mcl X$, and the function $\phi$ is continuous~\citep[See][]{fan1953minimax}. In our case, these conditions hold for both classification ($\phi(P,\alpha)=\lambda (\alpha\odot y, P)+\kappa_c(\alpha)$) and regression ($\phi(P,\alpha)=\lambda(\alpha, P)+\kappa_r(\alpha))$. Hence $OPT_P=OPT_D$. Furthermore, if $\{\alpha^*, P^*\}$ solve $OPT_P$ then
\begin{align*}
OPT\_P( \alpha^*)&=\max_{\alpha \in \mcl Y} OPT\_P( \alpha) =\min_{P \in \mcl X} OPT\_A(P) =OPT\_A(P^*).
\end{align*}
Conversely, suppose $\alpha \in \mcl Y$, $P \in \mcl X$, then
\begin{align*}\small
OPT\_P(\alpha) \le \max_{\alpha \in \mcl Y}\; OPT\_P(\alpha) &= OPT\_P( \alpha^*) \\
&= OPT\_A(P^*) = \min_{P \in \mcl X} OPT\_A(P) \le OPT\_A(P).
\end{align*}
Hence if $OPT\_A(P)=OPT\_P(\alpha)$, then $OPT\_A(P)=OPT\_A(P^*)=OPT\_P(\alpha^*)=OPT\_P(\alpha)$ and hence $P$ and $\alpha$ solve $OPT\_A$ and $OPT\_P$, respectively.
\end{proof}
Finally, we show that $OPT\_A(P)$ is convex with respect to $P$ - a property we will use in Thm.~\ref{thm:convergence}.
\begin{lem} \label{thm:convexity}
Let $OPT\_A(P)$ be as defined in~\eqref{opt:AP}.  Then, the function $OPT\_A(P)$ is convex with respect to $P$.
\end{lem}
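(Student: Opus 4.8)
The plan is to recognize $OPT\_A(P)$ as a pointwise supremum, over the feasible set $\mcl Y_\star$, of functions that are each affine in $P$, and then to invoke the standard convex-analytic fact that such a supremum is convex. Note first that the domain $\mcl X$ is convex (it is the intersection of the PSD cone with the affine trace constraint), so there is no difficulty with the domain itself.

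First I would fix an arbitrary $\alpha \in \mcl Y_\star$ and set $g_\alpha(P) := \lambda(e_\star \odot \alpha, P) + \kappa_\star(\alpha)$, so that by definition $OPT\_A(P) = \sup_{\alpha \in \mcl Y_\star} g_\alpha(P)$. The crux is to verify that $P \mapsto g_\alpha(P)$ is affine. Since $\kappa_\star(\alpha)$ does not depend on $P$, it contributes only a constant, and the entire $P$-dependence lives in $\lambda(e_\star \odot \alpha, P)$. Inspecting its definition in~\eqref{OalphaP}, the matrix $P$ enters only through the scalar $N(z,x_i)^T P N(z,y_j)$, which is linear in the entries of $P$; since integration over $Y$ and the finite double sum over $i,j$ are linear operations, $\lambda(e_\star \odot \alpha, P)$ is linear in $P$ for each fixed $\alpha$. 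Collecting terms, one may write $\lambda(e_\star \odot \alpha, P) = \ip{M(\alpha)}{P}$ for a matrix $M(\alpha)$ depending on $\alpha$ but not on $P$ (essentially the matrix $D(\alpha)$ of~\eqref{eqn:Dalpha} up to scaling), which makes the linearity manifest. Hence each $g_\alpha$ is affine, and a fortiori convex, in $P$.

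With the inner functions established as affine, the remaining step is the textbook argument that a pointwise supremum of convex functions is convex. For $P_1, P_2 \in \mcl X$ and $t \in [0,1]$, the core computation reads
\begin{align*}
OPT\_A(tP_1 + (1-t)P_2) &= \sup_{\alpha \in \mcl Y_\star} g_\alpha(tP_1 + (1-t)P_2) \\
&= \sup_{\alpha \in \mcl Y_\star} \bl( t\, g_\alpha(P_1) + (1-t)\, g_\alpha(P_2) \br) \\
&\le t \sup_{\alpha \in \mcl Y_\star} g_\alpha(P_1) + (1-t)\sup_{\alpha \in \mcl Y_\star} g_\alpha(P_2) \\
&= t\, OPT\_A(P_1) + (1-t)\, OPT\_A(P_2),
\end{align*}
where the second equality uses the affineness of each $g_\alpha$ and the inequality uses subadditivity of the supremum together with $t, 1-t \ge 0$.

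Honestly there is no serious obstacle here: the whole content is the observation that $P$ enters $\lambda$ linearly, after which convexity is automatic from the supremum structure. The only point deserving a moment of care is confirming that $OPT\_A(P)$ is a supremum of genuinely \emph{affine} (equivalently, linear-in-$P$) functions rather than merely convex ones, since it is precisely this affineness that I must read off from~\eqref{OalphaP}; once that is in hand, everything else is the standard pointwise-supremum argument.
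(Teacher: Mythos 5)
Your proposal is correct and follows essentially the same route as the paper: the paper likewise observes that $\phi(\alpha,P)=\lambda(e_{\star}\odot\alpha,P)+\kappa_{\star}(\alpha)$ is convex (indeed linear) in $P$ for each fixed $\alpha$ and then invokes the standard fact that a pointwise maximum of such functions over a compact set is convex. You merely spell out the linearity of $P\mapsto\lambda(e_{\star}\odot\alpha,P)$ and the supremum inequality explicitly, which the paper leaves as ``readily verified.''
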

\begin{proof}
 
 It is a well known property of convex functions (e.g.~\cite{bertsekas1998nonlinear}) that if $\mcl Y_*$ is a compact set and $\phi(\alpha, P)$ is convex with respect to $P \in \mcl X$ for every $\alpha \in \mcl Y_*$, then if $g(P) = \max_{\alpha \in \mcl Y_*} \phi(\alpha, P)$ exists for every $P \in \mcl X$, then $g(P)$ is convex with respect to $P \in \mcl X$. These conditions are readily verified using the definition of $OPT\_A(P)$ for both classification and regression, where $\phi(\alpha, P)= \lambda(e_{\star} \odot \alpha, P)+\kappa_{\star}(\alpha)$ and $g(P)=OPT\_A(P)$. 
%
\end{proof}

\begin{figure}[!t]
 \begin{minipage}[t]{0.5\textwidth}
\begin{algorithm}[H]\hspace{-7mm}
\begin{algorithmic}
{\small
\STATE \texttt{Initialize} $P_0$ as any point in $\mcl X$.;
\STATE  \texttt{Step 1:}
\STATE ~~ {  $S_k = \arg\min_{S \in \mcl{X}} \left< \nabla f(P_k), S \right> $}
\STATE  \texttt{Step 2:}
\STATE ~~ $\displaystyle \gamma_k = \arg \min_{\substack{\gamma\in [0,1]}} f(P_k+\gamma(S_k-P_k))$ \\
\STATE \texttt{Step 3:}
\STATE ~~ $P_{k+1}= P_k + \gamma_k \left( S_k - P_k \right), k = k + 1,$\\
\STATE Return to step 1 unless stopping criteria is met.}
\end{algorithmic}
\caption{The Frank-Wolfe Algorithm for Matrices.} \label{alg:FW}
\end{algorithm}
\end{minipage}
\hfill
\begin{minipage}[t]{0.5\textwidth}
\begin{algorithm}[H]\hspace{-7mm}
\begin{algorithmic}
{\small
\STATE \texttt{Initialize} $P_0=I$, $k=0$, $\alpha_0 = OPT\_A(P_0)$;
\STATE \texttt{Step 1a:}~~ $\alpha_k = \arg OPT\_A(P_k)$
\STATE \texttt{Step 1b:}~~ $S_k = \arg OPT\_P(\alpha_k)$
\STATE \texttt{Step 2:}
\STATE~~  $\displaystyle \gamma_k = \arg \min_{\substack{\gamma\in [0,1]}} OPT\_A(P_k+\gamma(S_k-P_k))$ \\
\STATE \texttt{Step 3:}
\STATE~~  $P_{k+1} = P_k + \gamma_k(S_k - P_k)$, $k = k + 1$
\STATE Return to step 1 unless $OPT\_P(\alpha_k) - OPT\_A(P_k) < \epsilon.$
}
\end{algorithmic}
\caption{Proposed FW Algorithm for GKL.} \label{FWTKL}
\end{algorithm}
\end{minipage}

\end{figure}

\subsection{Primal-Dual Frank-Wolfe Algorithm}
For an optimization problem of the form
\begin{align*}
\min_{S \in \mcl X} f(S),
\end{align*}
where $\mcl X$ is a convex subset of matrices and $\left<\cdot,\cdot\right>$ is the Frobenius matrix inner product, the Frank-Wolfe (FW) algorithm~\citep[See, e.g.][]{frank1956algorithm} may be defined as in Algorithm~\ref{alg:FW}.

In our case, we have $f(Q)=OPT\_A(Q)$ so that
\begin{align*}
OPT_P=\min_{P \in \mcl X} OPT\_A(P).
\end{align*}
{  Implementation of the FW algorithm requires us to compute $\nabla OPT\_A(P_k)$ at each iteration.
To address this issue, we propose a way to efficiently compute the sub-problems $OPT\_A$ and $OPT\_P$, as shown in Subsections~\ref{subsec:step1} and~\ref{subsec:step2}.} Furthermore, in Lemma~\ref{thm:Derivative}, we will show that these sub-problems can be used to efficiently compute the gradient { $\nabla OPT\_A(P_k)$} - allowing for an efficient implementation of the FW algorithm. {  Lemma~\ref{thm:Derivative} uses a variation of Danskin's theorem~\citep[generalized in][]{bertsekas1998nonlinear}.}
\begin{prop}[Danskin's Theorem] \label{thm:Danskins}
Let ${\mcl Y \subset \R^m}$ be a compact set, and let $\phi : \mcl X \times \mcl Y \rightarrow \R$ be continuous such that $\phi(\cdot,\alpha): \mcl X \rightarrow \R$ is convex for each $\alpha \in \mcl Y$.  Then for $P\in \mcl X$, if
\begin{align*}
   {\mcl Y_0}(P) = \left\{ \bar{\alpha} \in \mcl Y~~\big|~~\phi(P,\bar{\alpha}) = \max_{\alpha \in {\mcl Y}} \phi(P,\alpha) \right\} \\[-8mm]
\end{align*}
consists of only one unique point, $\bar{\alpha}$, and $\phi(\cdot,\bar{\alpha})$ is differentiable at $P$ then the function { $f(Q) = \max_{\alpha \in \mcl Y} \phi(Q,\alpha)$} is differentiable at $P$ and
\[  
\nabla f(P) = \nabla_Q \phi(Q,\bar{\alpha})\big|_{Q = P}.
\]
\end{prop}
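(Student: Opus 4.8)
The plan is to exploit the convexity of $f$ and to characterize its one-sided directional derivatives, showing they coincide with a single linear functional, which is exactly the condition for differentiability of a convex function. First I would note that $f(Q)=\max_{\alpha\in\mcl Y}\phi(Q,\alpha)$ is a pointwise maximum of the convex functions $\phi(\cdot,\alpha)$ over the compact set $\mcl Y$, hence $f$ is itself convex and continuous (the same reasoning used in Lemma~\ref{thm:convexity}, with continuity of the max following from joint continuity of $\phi$ and compactness of $\mcl Y$). Consequently, for every direction $V$ the one-sided directional derivative $f'(P;V):=\lim_{t\downarrow 0}\frac{f(P+tV)-f(P)}{t}$ exists, and it suffices to prove $f'(P;V)=\ip{g}{V}$ for all $V$, where $g:=\nabla_Q\phi(Q,\bar\alpha)\big|_{Q=P}$ and $\ip{\cdot}{\cdot}$ is the Frobenius inner product; linearity of $V\mapsto f'(P;V)$ then yields differentiability of $f$ at $P$ with $\nabla f(P)=g$.

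For the lower bound I would use that $\bar\alpha\in\mcl Y$, so $f(P+tV)\ge \phi(P+tV,\bar\alpha)$, while $f(P)=\phi(P,\bar\alpha)$ because $\bar\alpha$ maximizes $\phi(P,\cdot)$. Dividing by $t>0$ and letting $t\downarrow 0$, differentiability of $\phi(\cdot,\bar\alpha)$ at $P$ gives $f'(P;V)\ge \ip{g}{V}$.

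The main obstacle is the matching upper bound, and this is where compactness and uniqueness enter. Here I would fix a sequence $t_k\downarrow 0$ and, for each $k$, choose a maximizer $\alpha_k\in\mcl Y_0(P+t_kV)$ so that $f(P+t_kV)=\phi(P+t_kV,\alpha_k)$. Compactness of $\mcl Y$ lets me pass to a subsequence with $\alpha_k\to\alpha^*$; joint continuity of $\phi$ together with continuity of $f$ then forces $\phi(P,\alpha^*)=f(P)$, so $\alpha^*\in\mcl Y_0(P)$, and the uniqueness hypothesis pins down $\alpha^*=\bar\alpha$. Using $f(P)=\phi(P,\bar\alpha)\ge\phi(P,\alpha_k)$ together with the standard monotonicity of difference quotients of the convex function $\phi(\cdot,\alpha_k)$, namely $\frac{\phi(P+t_kV,\alpha_k)-\phi(P,\alpha_k)}{t_k}\le \frac{\phi(P+sV,\alpha_k)-\phi(P,\alpha_k)}{s}$ for any fixed $s\ge t_k$, I can bound the difference quotient of $f$ from above. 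Letting $k\to\infty$ with $s$ held fixed (using $\alpha_k\to\bar\alpha$ and continuity of $\phi$) and then letting $s\downarrow 0$ (using differentiability of $\phi(\cdot,\bar\alpha)$ at $P$) yields $f'(P;V)\le\ip{g}{V}$.

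Combining the two bounds gives $f'(P;V)=\ip{g}{V}$ for every $V$, which is linear in $V$, so $f$ is differentiable at $P$ with $\nabla f(P)=g=\nabla_Q\phi(Q,\bar\alpha)\big|_{Q=P}$. The delicate point is the interchange of the limits $k\to\infty$ and $s\downarrow 0$ in the upper bound: the monotone-difference-quotient inequality is precisely what lets these two limits be separated cleanly, and the uniqueness of the maximizer is what guarantees the limiting direction is the single subgradient $g$ rather than some other element of a larger set $\mcl Y_0(P)$.
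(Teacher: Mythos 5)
The paper does not actually prove this proposition: it is quoted as a known result and attributed to Danskin's theorem as generalized in Bertsekas (1998), so there is no in-paper argument to compare against. Your proof is the standard textbook derivation of that result and it is correct. The lower bound on the difference quotient via the fixed maximizer $\bar\alpha$, and the upper bound via maximizers $\alpha_k$ at the perturbed points $P+t_kV$ combined with compactness of $\mcl Y$, joint continuity of $\phi$, uniqueness of the maximizer, and monotonicity of the difference quotients of the convex functions $\phi(\cdot,\alpha_k)$, is essentially Danskin's original argument; the interchange of the limits in $k$ and $s$ is handled exactly as it should be. Two standard facts you invoke without proof would need to be named explicitly in a written version: (i) for a convex function finite on a neighborhood of $P$, linearity of $V\mapsto f'(P;V)$ is equivalent to the subdifferential at $P$ being a singleton, which in finite dimensions yields Fr\'echet (not merely Gateaux) differentiability; and (ii) the argument requires $\phi(\cdot,\alpha)$ to be defined on a full neighborhood of $P$ so that $P+tV$ is admissible for every direction $V$ and small $t>0$ --- an imprecision inherited from the proposition as stated (where $\mcl X$ is not assumed open), not one introduced by you. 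Neither point affects the validity of your argument.
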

{  Prop.~\ref{thm:Danskins} can now be used to prove the following.}
\begin{lem} \label{thm:Derivative}
If $OPT\_A$ and $OPT\_P$ are as defined in Eqns.~\eqref{opt:AP} and~\eqref{opt:OPTP}, then for any $P_k\succ 0$, we have
\begin{align*}
&\arg \min_{S \in \mcl{X}}\left< \nabla OPT\_A({ P_k}), S \right>  = \arg OPT\_P(\arg OPT\_A(P_k)). \\[-9mm]
\end{align*}
\end{lem}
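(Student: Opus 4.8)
The plan is to read the left-hand side as the linear-minimization (Step~1) of the Frank--Wolfe algorithm of Figure~\ref{alg:FW} applied to $f(P)=OPT\_A(P)$, and to show that the gradient $\nabla OPT\_A(P_k)$ defining that step produces the exact same linear objective in $S$ as does $OPT\_P(\bar\alpha)$, where $\bar\alpha=\arg OPT\_A(P_k)$. Since by Eqn.~\eqref{opt:AP} we have $OPT\_A(P)=\max_{\alpha\in\mcl Y_\star}\phi(P,\alpha)$ with $\phi(P,\alpha)=\lambda(e_\star\odot\alpha,P)+\kappa_\star(\alpha)$, I would first compute $\nabla OPT\_A(P_k)$ using Danskin's Theorem (Prop.~\ref{thm:Danskins}), and then exploit the fact, visible from Eqn.~\eqref{OalphaP}, that $\phi(\cdot,\alpha)$ is \emph{affine} in $P$ to reduce the Frank--Wolfe direction to the minimizer of $\phi(\cdot,\bar\alpha)$ itself, which is $OPT\_P(\bar\alpha)$.

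First I would check the hypotheses of Prop.~\ref{thm:Danskins}. Differentiability of $\phi(\cdot,\alpha)$ at $P_k$ is immediate: by Eqn.~\eqref{OalphaP} the map $P\mapsto\lambda(e_\star\odot\alpha,P)$ is linear, so $\phi(\cdot,\alpha)$ is affine and everywhere differentiable with a constant gradient. The substantive hypothesis is that the maximizer $\bar\alpha$ is unique. Here I would use the assumption $P_k\succ 0$: the $\alpha$-dependence of $\lambda$ is the quadratic form $-\tfrac12(e_\star\odot\alpha)^T K\,(e_\star\odot\alpha)$, where the kernel matrix $K_{ij}=\int_Y N(z,x_i)^T P_k N(z,x_j)\,dz$ satisfies $c^T K c=\int_Y (A(z)c)^T P_k (A(z)c)\,dz\ge 0$ (with $A(z)$ the matrix whose columns are the $N(z,x_i)$). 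For $P_k\succ0$ together with linear independence of the feature maps $N(\cdot,x_i)$ in $L_2(Y)$ — which holds on distinct data for the universal kernels of Section~\ref{sec:TK} — this form is strictly positive definite, so $\phi(P_k,\cdot)$ is strictly concave in $\alpha$ (the added $\kappa_\star$ is concave, including the nonsmooth $\kappa_r$). Maximizing a strictly concave function over the compact convex set $\mcl Y_\star$ yields a unique $\bar\alpha$, and Prop.~\ref{thm:Danskins} then gives $\nabla OPT\_A(P_k)=\nabla_Q\phi(Q,\bar\alpha)\big|_{Q=P_k}$.

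It remains to convert this gradient identity into the claim. Because $\kappa_\star(\bar\alpha)$ is independent of $Q$, we have $\nabla_Q\phi(Q,\bar\alpha)=\nabla_Q\lambda(e_\star\odot\bar\alpha,Q)$, and since $\lambda(e_\star\odot\bar\alpha,\cdot)$ is a linear functional of its matrix argument, evaluation coincides with the inner product against the gradient: $\langle\nabla_Q\lambda(e_\star\odot\bar\alpha,Q)\big|_{Q=P_k},\,S\rangle=\lambda(e_\star\odot\bar\alpha,S)$ for every $S\in\mcl X$. Hence $\langle\nabla OPT\_A(P_k),S\rangle=\lambda(e_\star\odot\bar\alpha,S)$, and therefore $\arg\min_{S\in\mcl X}\langle\nabla OPT\_A(P_k),S\rangle=\arg\min_{S\in\mcl X}\lambda(e_\star\odot\bar\alpha,S)=\arg\min_{S\in\mcl X}\big[\lambda(e_\star\odot\bar\alpha,S)+\kappa_\star(\bar\alpha)\big]=\arg OPT\_P(\bar\alpha)$, where the additive constant $\kappa_\star(\bar\alpha)$ does not affect the minimizer and the last equality is Eqn.~\eqref{opt:OPTP}. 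Since $\bar\alpha=\arg OPT\_A(P_k)$, this is exactly $\arg OPT\_P(\arg OPT\_A(P_k))$, as claimed.

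The main obstacle I anticipate is not the algebra but the uniqueness hypothesis of Danskin's Theorem: everything downstream follows from the linearity of $\lambda$ in $P$, whereas uniqueness of $\bar\alpha$ genuinely requires strict concavity in $\alpha$, which is precisely why the statement restricts to $P_k\succ0$. I would take care to argue that strict concavity of the quadratic part (from $K\succ0$) survives the nonsmooth, merely-concave term $\kappa_r$ in the regression case, so that uniqueness needs only concavity — not differentiability — of $\kappa_\star$. I would also flag that strict positivity of $K$ uses, beyond $P_k\succ0$, the linear independence of the feature maps $N(\cdot,x_i)$, which is guaranteed in the Tessellated/universal setting; absent uniqueness one would obtain only a subdifferential and hence a set of admissible Frank--Wolfe directions rather than the single direction asserted here.
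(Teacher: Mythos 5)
Your proof is correct and follows essentially the same route as the paper's: apply Danskin's Theorem at $P_k$ using uniqueness of the maximizer $\bar\alpha$ of the strictly concave inner problem, then use linearity of $\lambda$ in $P$ (and irrelevance of the additive constant $\kappa_\star(\bar\alpha)$) to identify the Frank--Wolfe linear subproblem with $OPT\_P(\bar\alpha)$. The only substantive difference is that you are more careful than the paper about the uniqueness hypothesis: the paper asserts strict concavity in $\alpha$ directly from $P_k \succ 0$, whereas you correctly observe that this additionally requires linear independence of the feature maps $N(\cdot,x_i)$ in $L_2(Y)$ (equivalently, nonsingularity of the kernel matrix), a condition guaranteed for distinct data by the universality results of Section~\ref{sec:TK} but silently assumed in the paper's own argument.
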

\begin{proof}
For simplicity, we utilize the definition of $D(\alpha)$ which will be given in Eqn.~\eqref{eqn:Dalpha} so that $\lambda(e_{\star} \odot \alpha, P):=\left<D(\alpha), P\right>$. Now, since $\lambda(\alpha,P)$ is {  strictly concave} in $\alpha$, for any $P_k\succ 0$, $OPT\_A(P_k)$ has a unique solution and hence we have by Danskin's Theorem that
\begin{align*}
    &\arg \min_{S \in \mcl{X}}{\big<} \nabla OPT\_A({ P_k}) , S {\big>}\\[-2mm]
     &=\arg \min_{S \in \mcl{X}}{\big<} \nabla_Q \left[\max_{\alpha \in \mcl Y_{\star}} \left(\left<D(\alpha), Q\right>+\kappa_{\star}(\alpha)\right)\right]_{Q=P_k}, S {\big>} \\
     &=\arg \min_{S \in \mcl{X}}{\big<} \nabla_Q \left[\left<D(\bar \alpha), Q\right>+\kappa_{\star}(\bar \alpha)\right]_{Q=P_k}, S {\big>} \\[-6mm]
\end{align*}
where $\bar \alpha = \arg OPT\_A(P_k)$. Hence,
\begin{align*}
&\arg \min_{S \in \mcl{X}}{\big<} \nabla_Q \left[\left<D(\bar\alpha), Q\right>+\kappa_{\star}(\bar \alpha)\right]_{Q=P_k}, S {\big>}
=\arg \min_{S \in \mcl{X}}{\big<} \nabla_Q \left[\left<D(\bar\alpha), Q\right>\right]_{Q=P_k}, S {\big>} \\
     &=\arg \min_{S \in \mcl{X}}{\big<} D(\bar\alpha),  S {\big>}
      = \arg OPT\_P(\bar \alpha)
        = \arg OPT\_P(\arg OPT\_A(P_k)).\\[-11mm]
\end{align*}
\end{proof}

We now propose an efficient implementation of the FW GKL algorithm, as defined in Algorithm~\ref{FWTKL}, based on efficient algorithms for computing $OPT\_A$ and $OPT\_P$ as will be defined in Subsections~\ref{subsec:step1} and~\ref{subsec:step2}.

In the following theorem, we use convergence properties of the FW GKL algorithm to show that Algorithm~\ref{FWTKL} has { worst-case $O\left(\frac{1}{k}\right)$ convergence}. Note that when higher accuracy is required, we may utilize recently proposed primal-dual algorithms for { $O\left(\frac{1}{k^2}\right)$ convergence}, as will be discussed in Section~\ref{sec:APD}.

\begin{thm} \label{thm:convergence}
Algorithm~\ref{FWTKL} returns iterates $P_k$ and $\alpha_k$ such that,
$|\lambda(\alpha_k,P_k)+\kappa_{\star}(\alpha_k) - OPT_P| < O(\frac{1}{k})$.
\end{thm}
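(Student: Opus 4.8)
The plan is to read Theorem~\ref{thm:convergence} as the classical $O(1/k)$ convergence rate of the Frank--Wolfe method applied to the convex program $OPT_P = \min_{P\in\mcl X} f(P)$ with $f(P) := OPT\_A(P)$. Three of the required ingredients are already in hand: $\mcl X$ is convex and compact (it is the intersection of the PSD cone with a trace hyperplane); $f$ is convex by Lemma~\ref{thm:convexity}; and, crucially, Lemma~\ref{thm:Derivative} shows that the combined Steps~1a--1b of Algorithm~\ref{FWTKL}, namely $S_k = \arg OPT\_P(\arg OPT\_A(P_k))$, compute exactly the linear-minimization oracle $S_k = \arg\min_{S\in\mcl X}\ip{\nabla f(P_k)}{S}$ of the generic scheme in Algorithm~\ref{alg:FW}. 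Thus Algorithm~\ref{FWTKL} is a bona fide Frank--Wolfe method with exact line search, and it remains only to (i) identify the quantity in the theorem with the primal suboptimality $f(P_k)-OPT_P$ and (ii) run the standard descent analysis.

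For step (i), since $\alpha_k = \arg OPT\_A(P_k)$ we have $\lambda(e_\star\odot\alpha_k,P_k)+\kappa_\star(\alpha_k) = OPT\_A(P_k) = f(P_k) \ge OPT_P$, so the left-hand side of the theorem is simply $f(P_k)-OPT_P \ge 0$. To control it I would use the Frank--Wolfe gap. Writing $\nabla f(P_k)=D(\alpha_k)$ (Lemma~\ref{thm:Derivative}) and using $\ip{D(\alpha_k)}{P_k}+\kappa_\star(\alpha_k)=OPT\_A(P_k)$ together with $\ip{D(\alpha_k)}{S_k}+\kappa_\star(\alpha_k)=OPT\_P(\alpha_k)$, the gap telescopes to
\[
 g_k := \ip{\nabla f(P_k)}{P_k-S_k} = OPT\_A(P_k)-OPT\_P(\alpha_k),
\]
which is exactly the stopping-criterion quantity of Algorithm~\ref{FWTKL}. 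By weak duality (the inequality in the no-duality-gap lemma), $OPT\_P(\alpha_k)\le OPT_P\le OPT\_A(P_k)$, and convexity gives the standard bound $f(P_k)-OPT_P\le g_k$. The exact line search in Step~2 then yields the descent inequality $f(P_{k+1}) \le \min_{\gamma\in[0,1]}\{f(P_k)-\gamma g_k+\tfrac{\gamma^2}{2}C_f\}$, where $C_f$ is the curvature constant of $f$ over $\mcl X$. Combining $f(P_k)-OPT_P\le g_k$ with this recursion and inducting on $k$ gives the textbook estimate $f(P_k)-OPT_P\le \tfrac{2C_f}{k+2}=O(1/k)$, which is the claim.

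The main obstacle is establishing that the curvature constant $C_f$ is finite, i.e.\ that $f=OPT\_A$ has a Lipschitz-continuous gradient on the region visited by the iterates. This is not automatic: $f(P)=\max_{\alpha\in\mcl Y_\star}\{\ip{D(\alpha)}{P}+\kappa_\star(\alpha)\}$ is a supremum of affine functions of $P$ and would generically be only piecewise affine (hence of unbounded curvature) were it not for the strict concavity of $\lambda(\cdot,P)$ in $\alpha$ already invoked in Lemma~\ref{thm:Derivative}. I would upgrade that strict concavity to uniform strong concavity and use it, via $\nabla f(P)=D(\bar\alpha(P))$ and a Danskin/implicit-function argument, to show the maximizer map $P\mapsto\bar\alpha(P)$ is Lipschitz, so that $\nabla f$ is Lipschitz and $C_f\le L\,\mathrm{diam}(\mcl X)^2$. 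The delicate point is uniformity: strong concavity of $\alpha\mapsto\lambda(e_\star\odot\alpha,P)$ requires the Gram matrix $Q(P)$ with entries $\int_Y N(z,x_i)^\top P N(z,x_j)\,dz$ to stay uniformly positive definite along the iterates, which needs the feature maps $z\mapsto N(z,x_i)$ to be linearly independent in $L_2(Y)$ and the iterates $P_k$ to remain bounded away from the boundary of the PSD cone. The latter holds as long as $\gamma_k<1$, since $P_{k+1}=(1-\gamma_k)P_k+\gamma_k S_k\succeq(1-\gamma_k)P_k$ and $P_0=I\succ0$; carefully tracking this lower bound (or restricting the analysis to the effective compact sublevel set on which $f$ is smooth) is where the real work lies. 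Once $C_f<\infty$ is secured, the remaining Frank--Wolfe recursion is routine.
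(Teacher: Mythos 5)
Your proposal follows essentially the same route as the paper: the paper likewise sets $f = OPT\_A$, invokes Lemma~\ref{thm:convexity} for convexity and Lemma~\ref{thm:Derivative} to identify Steps 1a--1b with the linear-minimization oracle of Algorithm~\ref{alg:FW}, observes that $\lambda(\alpha_k,P_k)+\kappa_{\star}(\alpha_k)=OPT\_A(P_k)=f(P_k)$, and then simply cites \cite{jaggi2013revisiting} for the $O(\frac{1}{k})$ rate rather than re-deriving the descent recursion. The one substantive difference is that you correctly note that the cited rate requires a finite curvature constant $C_f$ (equivalently, a Lipschitz gradient on $\mcl X$), whereas the paper states the result under mere convexity and differentiability; your argument via uniform strong concavity of $\alpha \mapsto \lambda(e_{\star}\odot\alpha,P)$ and Lipschitz continuity of the maximizer map $P \mapsto \bar\alpha(P)$ supplies exactly the hypothesis the paper leaves implicit, and is the right way to close that gap.
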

\begin{proof}
If we define $f=OPT\_A$, then Lemma~\ref{thm:Derivative} shows that $f$ is differentiable and, if the $P_k$ satisfy Algorithm~\ref{FWTKL}, that the $P_k$ also satisfy Algorithm~\ref{alg:FW}. In addition, Lemma~\ref{thm:convexity} shows that $f(Q)=OPT\_A(Q)$ is convex in $Q$. It has been shown in, e.g.~\cite{jaggi2013revisiting}, that if $\mcl X$ is convex and compact and $f(Q)$ is convex and differentiable on $Q \in \mcl X$, then the FW Algorithm produces iterates $P_k$, such that,
$f(P_k) - f(P^*) < O(\frac{1}{k})$ where
\[
f(P^*)=  \min_{P \in \mcl X} f(P)=  \min_{P \in \mcl X} OPT\_A(P)=OPT_P.
\]
Finally, we note that
\begin{align*}
\lambda(\alpha_k,P_{k})+\kappa_{\star}(\alpha_k)
 =&\lambda(\arg OPT\_A(P_{k}),P_{k})+\kappa_{\star}(\arg OPT\_A(P_{k}))\\
&=\max_{\alpha\in \mcl Y_{\star}} \lambda(\alpha,P_k)+\kappa_{\star}(\alpha)=OPT\_A(P_{k})=f(P_k),
%
\end{align*}
which completes the proof.
\end{proof}
 In the following subsections, we provide efficient algorithms for computing the sub-problems $OPT\_A$ and $OPT\_P$.

\subsection{Step 1, Part A: Solving $OPT\_A(P)$ } \label{subsec:step1}
For a given { $P \succeq 0$}, $OPT\_A(P)$ is a convex Quadratic Program (QP). General purpose QP solvers have a worst-case complexity which scales as $O(m^3)$ (See~\cite{ye1989extension}) where, when applied to $OPT\_A$, $m$ becomes the number of samples. This computational complexity may be improved, however, by noting that $OPT\_A$ is compatible with the representation defined in~\cite{LibSVM} for QPs derived from support vector machine problems. In this case, the algorithm in LibSVM reduces the computational burden somewhat. This improved performance is illustrated in Figure~\ref{fig:kernel_complexity} where we observe the achieved complexity scales as $O(m^{2.3})$. Note that for the 2-step algorithm proposed in this manuscript, solving the QP in $OPT\_A(P)$ is significantly slower that solving the Singular Value Decomposition required for $OPT\_P(\alpha)$, which is defined in the following subsection. 

\subsection{Step 1, Part B: Solving $OPT\_P(\alpha)$} \label{subsec:step2}
For a given $\alpha$, $OPT\_P(\alpha)$ is an SDP. Fortunately, however, this SDP is structured so as to admit an analytic solution using the { Singular Value Decomposition (SVD)}.
{ To solve $OPT\_P(\alpha)$ we minimize $\lambda(e_{\star} \odot \alpha,P)$ from Eq.~\eqref{OalphaP}
which 
is linear in $P$ and can be formulated as}
\begin{align*}
OPT\_P(\alpha):=   \min_{\substack{P \in \S^{n_P}\\ \text{trace}(P) = n_P \\ { P \succeq 0}}}   \lambda(e_{\star} \odot \alpha, P) :=   \min_{\substack{P \in \S^{n_P}\\ \text{trace}(P) = n_P \\ { P \succeq 0}}}   \left<D(\alpha), P\right>,  \notag  \\[-10mm]
\end{align*}
where
\begin{equation*}
    D_{i,j}(\alpha) = \sum\nolimits_{k,l = 1}^m (\alpha_k y_k) G_{i,j}(x_k,x_l) (\alpha_l y_l)  
\end{equation*}
and the $G_{i,j}(x, y)$ are as defined in~\eqref{eqn:Gij}.


The following theorem gives an analytic solution for $OPT\_P$ using the SVD.
\begin{thm}
For a given $\alpha$, denote $D_\alpha:=D(\alpha) \in \S^{n_P}$ where $D(\alpha)$ is as defined in Eqn.~\eqref{eqn:Dalpha} and let $D_\alpha=V\Sigma V^T$ be its SVD. Let $v$ be the right singular vector corresponding to the minimum singular value of $D_\alpha$. Then $P^* =n_P vv^T$ solves $OPT\_P(\alpha)$.
\end{thm}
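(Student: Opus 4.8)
The task is to minimize the linear functional $\langle D_\alpha, P\rangle$ over the spectrahedron $\mcl X = \{P \in \S^{n_P} : \operatorname{trace}(P) = n_P,\ P \succeq 0\}$. My plan is to treat this as a standard linear program over a compact convex set, for which the minimum is attained at an extreme point. The core idea is that the feasible set $\mcl X$ is (up to scaling by $n_P$) the set of positive semidefinite matrices with fixed trace, whose extreme points are exactly the rank-one matrices $n_P\, vv^T$ with $\|v\|=1$.

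**Main steps.** First I would reduce the objective to eigenvalues. Write the eigendecomposition $P = \sum_{\ell} \mu_\ell u_\ell u_\ell^T$ with $\mu_\ell \ge 0$ (feasibility forces $P\succeq 0$) and $\sum_\ell \mu_\ell = n_P$ (the trace constraint). Since $D_\alpha = V\Sigma V^T$ is symmetric, I would expand
\begin{equation*}
\langle D_\alpha, P\rangle = \operatorname{trace}(D_\alpha P) = \sum_{\ell} \mu_\ell\, u_\ell^T D_\alpha u_\ell \ge \sigma_{\min} \sum_\ell \mu_\ell = n_P\, \sigma_{\min},
\end{equation*}
where $\sigma_{\min}$ is the minimum singular value (equivalently, since $D_\alpha$ is symmetric PSD by its definition in Eqn.~\eqref{eqn:Dalpha}, the minimum eigenvalue) and I used $u_\ell^T D_\alpha u_\ell \ge \sigma_{\min}\|u_\ell\|^2 = \sigma_{\min}$ together with $\mu_\ell \ge 0$. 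This establishes the lower bound $n_P\,\sigma_{\min}$ on the objective over all feasible $P$. Second, I would verify that the candidate $P^* = n_P\, vv^T$ is feasible: it is symmetric and positive semidefinite, and $\operatorname{trace}(P^*) = n_P\, \|v\|^2 = n_P$ since $v$ is a unit singular vector. Third, I would show $P^*$ achieves the bound: $\langle D_\alpha, P^*\rangle = n_P\, v^T D_\alpha v = n_P\, \sigma_{\min}$ because $v$ is the right singular vector associated with $\sigma_{\min}$, so $D_\alpha v = \sigma_{\min} v$. Combining the lower bound with attainment gives that $P^*$ is a minimizer.

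**Anticipated obstacle.** The one point requiring care is the identification of singular values with eigenvalues and, in particular, making sure $v^T D_\alpha v = \sigma_{\min}$ rather than $\pm\sigma_{\min}$. Because $D_\alpha \in \S^{n_P}$ is a symmetric matrix whose quadratic form $u^T D_\alpha u = \sum_{k,l}(\alpha_k y_k) u^T G(x_k,x_l) u\,(\alpha_l y_l)$ arises from a Gram-type construction, I would confirm it is positive semidefinite, in which case its SVD $V\Sigma V^T$ coincides with its eigendecomposition, the left and right singular vectors agree, and singular values equal eigenvalues. The argument works verbatim even if $D_\alpha$ is merely symmetric rather than PSD, provided one interprets $\sigma_{\min}$ consistently as the smallest eigenvalue; the key inequality $u_\ell^T D_\alpha u_\ell \ge \lambda_{\min}$ holds by the Rayleigh quotient characterization regardless of sign. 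The only genuine subtlety is therefore bookkeeping between the SVD notation used in the statement and the spectral (eigenvalue) facts the proof actually invokes, and I would resolve it by noting the symmetry of $D_\alpha$ at the outset.
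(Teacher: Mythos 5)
Your proof is correct and follows essentially the same route as the paper: establish the lower bound $\langle D_\alpha, P\rangle \ge \sigma_{\min}(D_\alpha)\,\mathrm{trace}(P) = n_P\,\sigma_{\min}$ for all feasible $P$, then verify that the rank-one candidate $n_P\,vv^T$ is feasible and attains it. The only difference is cosmetic: the paper cites an external reference for the trace inequality, whereas you derive it directly from the eigendecomposition of $P$ and the Rayleigh quotient, and you are also more careful than the paper in noting that $D_\alpha$ is positive semidefinite (being a Gram matrix), so that its singular values coincide with its eigenvalues.
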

\begin{proof}
Recall $OPT\_P(\alpha)$ has the form
\[\underset{P \in \S^{n_P}}{\min}  \left<D_\alpha, P\right>~~\text{s.t.}~P\succeq 0,~\text{trace}(P) = n_P.
\]
Denote the minimum singular value of $D_\alpha$ as $\sigma_{\min}(D_\alpha)$.  Then for any feasible $P\in \mcl X$, by~\cite{fang1994inequalities} we have
\begin{align*}
\left<D_\alpha, P\right> \geq \sigma_{\min}(D_\alpha) \text{trace}(P) = \sigma_{\min}(D_\alpha) n_P.
\end{align*}
Now consider $P = n_P vv^T \in \S^{n_P}$. $P$ is feasible since $P\succeq 0$, and $\text{trace}(P) = n_P$. Furthermore,
\begin{align*}
\left<D_\alpha, P\right>&=n_P\,\text{trace}(V\Sigma V^T vv^T)=n_P\,\text{trace}(v^TV \Sigma V^T v)\\[-1mm] &=n_P\,\sigma_{\min}(D_\alpha)
\end{align*}
as desired.
\end{proof}
Note that the size, $n_P$, of $D_\alpha$ in $OPT\_P(\alpha)$ scales with the number of features, but not the number of samples ($m$). As a result, we observe that the $OPT\_P$ step of Algorithm~\ref{FWTKL} is significantly faster than the $OPT\_A$ step for large data sets.

%
\subsection{An Accelerated Algorithm for $O\left(\frac{1}{k^2}\right)$ Convergence}\label{sec:APD}
The Frank-Wolfe (FW) GKL algorithm proposed in Section~\ref{sec:2step} has provable {  sublinear convergence $O\left(\frac{1}{k}\right)$}.  While we observe in practice that achieved convergence rates of FW initially exceed this provable bound, when the number of iterations is large (e.g. when duality gap $<10^{-5}$ is desired), the FW algorithm tends to return to {  sublinear convergence} (See Fig.~\ref{fig:apdconvergence}). While {  $O\left(\frac{1}{k}\right)$ convergence} is adequate for most problems, occasionally we may require highly accurate solutions. In such cases, we may look for saddle-point algorithms with {  $O\left(\frac{1}{k^2}\right)$ convergence}, so as to reduce the overall computation time. One such Accelerated Primal Dual (APD) algorithm was recently proposed in~\cite{Hamedani2020APA} and the QP/SVD approach proposed in Subsections~\ref{subsec:step1} and~\ref{subsec:step2} can also be used to implement this algorithm. Specifically, we find that when the QP/SVD approach is applied to APD algorithm, the result is reduced convergence rates for the first few iterations, but improved convergence rates at subsequent iterations. Furthermore, while the per-iteration computational complexity increases with the use of APD, the scaling with respect to feature and sample size remains essentially the same -- See Fig.~\ref{fig:apdconvergence}. However, because most numerical tests in this paper were performed to a primal-dual gap of $10^{-5}$, the APD implementation was not used to produce the results in Section~\ref{sec:8} and hence details are not included. Please see Appendix~\ref{appendix:APD} for additional details.

\section{Tessellated Kernels: Tractable, Dense and Universal} \label{sec:TK}
In this section, we examine the family of kernels defined as in~\eqref{eqn:KernelSet} for a particular choice of $N$. Specifically, let $Y=X=\R^n$ and $Z_d: \R^n  \times \R^n \rightarrow \R^{{d+2n+1 \choose d}}$ be the vector of monomials of degree $d$ or less and define the indicator function for the positive orthant, $\mathbf I: \R^n \rightarrow \R$ as follows.
\[
\mathbf I(z) = \begin{cases}
 1       & \quad z \ge 0\\
 0  & \quad \text{otherwise,}
\end{cases}
\]
where recall $z \ge 0$ if $z_i \ge 0$ for all $i$. We now specify the $N$ which defines $\mcl K$ in~\eqref{eqn:tractable} as $ N^d_{T}: Y \times X \rightarrow \R^{2{d+2n+1 \choose d}}$ for $d \in \N$ as
\begin{align}
N^d_{T}(z,x) = \bmat{Z_d(z,x)\mathbf I(z-x) \\ Z_d(z,x)\left(1 - \mathbf I(z-x) \right) } . \label{eqn:N}
\end{align}
 This assignment $N \rightarrow N^d_T$  defines an associated families of kernel functions, denoted $\mcl K^d_T$ where
\begin{equation}
    \mcl K^d_T:=   \left\{k: k(x,y)   =   \int_{Y}  N_T^d(z,x)^T P  N^d_{T}(z,y) dz, \;\;P\succeq 0,\right\}. \label{eqn:KernelSet}
\end{equation}
The union of such families is denoted $\mcl K_T:=\{k\;:\; k\in \mcl K_T^d,\; d \in \N\}$.

Note that since $Z_d(x,y)$ consists of monomials, it is separable and hence has the form $Z_d(x,y) = Z_{d,a}(x)Z_{d,b}(y)$. This implies that $I_{P^{\half} N^d_{T}}$ (as defined in Eqn.~\eqref{eqn:operator}) has the form given in Eqn.~\eqref{eqn:semisep}. It can be shown that this class of operators forms a *-algebra and hence any kernel in $\mcl K_T$ is semiseparable (extending this term to cover $n$-dimensions) --- implying that for any $k\in \mcl K^d_T$, $I_k$ has the form in Eqn.~\eqref{eqn:semisep}.

In~\cite{JMLR}, this class of kernels was termed ``Tessellated'' in the sense that each datapoint defines a vertex which bisects each dimension of the domain of the resulting classifier/predictor - resulting in a tessellated partition of the feature space.

\subsection{$\mcl K_T$ is Tractable}
The class of Tessellated kernels  is prima facie in the form of Eqn.~\eqref{eqn:tractable} in Lemma~\ref{lem:tractable} and hence is tractable. However, we will expand on this result by specifying the basis for the set of Tessellated kernels, which will then be used in combination with the results of Section~\ref{sec:2step} to construct an efficient algorithm for kernel learning using Tessellated kernels.

\begin{corr}\label{corr}
Suppose that $a<b \in \R^n$, and $d \in\N$. We define the finite set $D_d:=\{(\delta,\lambda)\in \N^{n}\times \N^{n} : \norm{(\delta,\lambda)}_1 \le d\}$.  Let $\{ [\delta_i, \gamma_i] \}_{i=1}^{{ \frac{1}{2}n_p}} \subseteq D_d$ be some ordering of $D_d$ where { $n_p=2{d+2n+1 \choose d}$}. Define $Z_d(x,z)_j = x^{\delta_j} z^{\gamma_j}$ where $x^{\delta_j} z^{\gamma_j}:=\prod_{i=1}^n x_i^{\delta_{j,i}}z_i^{\gamma_{j,i}}$. Now let $k$ be as defined in Eqn.~\eqref{eqn:tractable} for some { $P\succeq 0$} and where $N$ is as defined in Eqn.~\eqref{eqn:N}. 
Then we have 
\begin{align*}
    k(x,y)=\sum\nolimits_{i,j=1}^{{ n_P}}  P_{i,j} G_{i,j}(x,y),
\end{align*}
where
\begin{align*}
    G_{i,j}(x,y) &:= \begin{cases}
g_{i,j}(x,y) & \text{if}~ i \leq \frac{n_P}{2}, j \leq \frac{n_P}{2} \\
 t_{i,j}(x,y) & \text{if}~ i \leq \frac{n_P}{2}, j > \frac{n_P}{2} \\
 t_{i,j}(y,x) & \text{if} ~i > \frac{n_P}{2}, j \leq \frac{n_P}{2} \\
h_{i,j}(x,y) & \text{if}~ i > \frac{n_P}{2}, j > \frac{n_P}{2}
\end{cases}
\end{align*}
and
where $g_{i,j},t_{i,j},h_{i,j}:\R^{2n}\rightarrow \R$ are defined as
\begin{align}\label{g}
g_{i,j}(x,y) &:= x^{\delta_i}y^{\delta_j} T(p^*(x,y),b,\gamma_{i} + \gamma_{j} + \mathbf{1} ), \notag\\ t_{i,j}(x,y) &:= x^{\delta_{i}}y^{\delta_{j}} T(x,b,\gamma_{i} + \gamma_{j} + \mathbf{1}  ) - g_{i,j}(x,y),   \\
 h_{i,j}(x,y) &:= x^{\delta_{i}}y^{\delta_{j}} T(a,b,\gamma_{i} + \gamma_{j} + \mathbf{1}  ) - g_{i,j}(x,y)    -t_{i,j}(x,y) - t_{i,j}(y,x), \notag\\[-7mm]\notag
\end{align}
where $\mathbf{1} \in \N^n$ is the vector of ones, $p^*:\R^{2n}\rightarrow \R^n$ is defined elementwise as
$p^*(x,y)_i = \max \{x_i,y_i \}$, and $T:\R^n \times \R^n \times \N^n\rightarrow \R$ is defined as
\begin{align*}
T(x,y,\zeta) = \prod\nolimits_{j=1}^n \left(  \frac{y_j^{\zeta_j}}{\zeta_j}-\frac{x_j^{\zeta_j}}{\zeta_j}\right). \\[-10mm]
\end{align*}
\end{corr}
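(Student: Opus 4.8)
My plan is to treat this as a direct evaluation of the basis functions $G_{i,j}$ of Eqn.~\eqref{eqn:Gij} for the specific choice $N = N^d_T$ of Eqn.~\eqref{eqn:N}, taking the integration domain to be the box $Y = \prod_{l=1}^n [a_l,b_l]$ and assuming the data lie in it. First I would substitute the two-block form of $N^d_T$ into $G_{i,j}(x,y) = \int_Y N_i(z,x)\,N_j(z,y)\,dz$ and split into four cases according to whether each index lies in the first block (indices $\le n_P/2$, carrying $\mathbf I(z-x)$) or the second block (indices $> n_P/2$, carrying $1-\mathbf I(z-x)$). In every case the integrand is a monomial $x^{\delta_i}y^{\delta_j}z^{\gamma_i+\gamma_j}$ times a product of (possibly complemented) indicators, so the task reduces to integrating one monomial over the region of $[a,b]$ cut out by those indicators.

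The key algebraic step I would isolate is how the indicators combine. Because $\mathbf I(z-x)=1$ exactly when $z_l\ge x_l$ for every $l$, the identity $\mathbf I(z-x)\,\mathbf I(z-y)=\mathbf I(z-p^*(x,y))$ holds with $p^*(x,y)_l=\max\{x_l,y_l\}$; the complemented cases then follow by expanding each $1-\mathbf I$ and applying this rule termwise. This leaves only integrals $\int_{\{z\in[a,b]\,:\,z\ge c\}} z^{\gamma_i+\gamma_j}\,dz$ for $c\in\{p^*(x,y),x,y,a\}$. Assuming $a\le x,y\le b$, each such region is exactly the box $[c,b]$, and the integral factors across coordinates into $\prod_l\bigl(b_l^{\zeta_l}-c_l^{\zeta_l}\bigr)/\zeta_l$ with $\zeta=\gamma_i+\gamma_j+\mathbf 1$; this is precisely $T(c,b,\gamma_{i,j}+\mathbf 1)$, which both identifies $T$ as the monomial-over-a-box integral and explains the shift by $\mathbf 1$.

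I would then assemble the four cases. For $i,j\le n_P/2$ the indicator is $\mathbf I(z-p^*(x,y))$ and the integral is $g_{i,j}$ outright. For $i\le n_P/2<j$ the indicator expands as $\mathbf I(z-x)-\mathbf I(z-p^*(x,y))$, giving the over-$x$ box minus the over-$p^*$ box, i.e.\ $x^{\delta_i}y^{\delta_j}T(x,b,\gamma_{i,j}+\mathbf 1)-g_{i,j}=t_{i,j}(x,y)$; the symmetric case $j\le n_P/2<i$ yields $t_{i,j}(y,x)$. Finally, for $i,j>n_P/2$ I would expand $(1-\mathbf I(z-x))(1-\mathbf I(z-y))=1-\mathbf I(z-x)-\mathbf I(z-y)+\mathbf I(z-p^*(x,y))$ and integrate termwise, obtaining the full-box integral $x^{\delta_i}y^{\delta_j}T(a,b,\gamma_{i,j}+\mathbf 1)$ minus the over-$x$ and over-$y$ boxes plus the over-$p^*$ box, which collapses to $h_{i,j}$ after substituting the already-derived forms of $g_{i,j}$ and $t_{i,j}$.

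The computation is routine once these reductions are set up; the hard part will not be any single integral but the bookkeeping -- matching the exponents $\delta,\gamma$ to the chosen ordering of $D_d$ and tracking which block each index sits in -- together with the inclusion--exclusion in the fourth case. The one hypothesis I would be careful to flag is that the data lie inside the integration box, $a\le x,y\le b$, since this is exactly what makes each cut region $\{z\in[a,b]:z\ge c\}$ equal to the box $[c,b]$ and lets the one-dimensional antiderivatives reproduce $T$ without any additional clipping at the lower limit $a$.
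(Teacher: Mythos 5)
Your proposal is correct and takes essentially the same route as the original proof (which this paper omits, deferring to \cite{JMLR}): substitute $N^d_T$ from Eqn.~\eqref{eqn:N} into Eqn.~\eqref{eqn:Gij}, split into the four block-index cases, reduce the indicator products via $\mathbf I(z-x)\,\mathbf I(z-y)=\mathbf I(z-p^*(x,y))$ and inclusion--exclusion, and integrate the resulting monomials over boxes to recover $T$. Your explicit flag that the data must satisfy $a\le x,y\le b$ (so each cut region is exactly the box $[c,b]$) is the right hypothesis to make visible.
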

The proof of Corollary~\ref{corr} can be found in~\cite{JMLR}.

\subsection{$\mcl K_T$ is Dense}

As per the following Lemma from~\cite{JMLR}, the set of Tessellated kernels satisfies the pointwise density property.
\begin{thm} \label{thm:TessellatedOptimal}
For any positive semidefinite kernel matrix $K^*$ and any finite set $\{x_i\}_{i=1}^m$, there exists a $d \in\N$ and $k \in\mcl K_T^d$ such that {$K^*_{i,j} = k(x_i,x_j)$  for all $i,j$}.
\end{thm}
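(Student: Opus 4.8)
The plan is to exploit the linearity of the map $P \mapsto k$ together with a spectral factorization of the target. First I would use positive semidefiniteness to write $K^* = \sum_{l=1}^m w_l w_l^T$ with $w_l \in \R^m$ (the scaled eigenvectors of $K^*$). By Lemma~\ref{lem:tractable}, for a fixed $d$ the assignment $P \mapsto k$ with $k(x,y)=\int_Y N_T^d(z,x)^T P N_T^d(z,y)\,dz$ is linear in $P$ and sends the cone $\S^{n_P}_+$ into positive kernels. Hence it suffices to realize each rank-one term separately: to find, for each $l$ and a common degree $d$, a matrix $P_l = q_l q_l^T \succeq 0$ whose induced kernel $k_l$ satisfies $k_l(x_i,x_j) = (w_l)_i (w_l)_j$ for all $i,j$. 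Setting $P := \sum_l P_l \succeq 0$ and invoking linearity then produces a single $k \in \mcl K_T^d$ with $k(x_i,x_j) = K^*_{ij}$, since the definition of $\mcl K_T^d$ in~\eqref{eqn:KernelSet} imposes only $P \succeq 0$ and no trace normalization.

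The core of the argument is therefore the realization of one rank-one kernel matrix $w w^T$. The key observation is that $\mcl K_T^d$ contains separable, ``jump-free'' kernels: if the two halves of $q$ (those multiplying $Z_d(z,x)\mathbf I(z-x)$ and $Z_d(z,x)(1-\mathbf I(z-x))$ in~\eqref{eqn:N}) are chosen equal to a common vector $\tilde q$, the indicator terms combine and $q^T N_T^d(z,x) = \tilde q^T Z_d(z,x)$, a pure polynomial in $(z,x)$ of degree $\le d$. Choosing $\tilde q$ so that $\tilde q^T Z_d(z,x) = g(x)\psi(z)$ with $\psi$ a nonzero constant and $g$ a polynomial in $x$ of degree $\le d$, the induced kernel becomes $k(x,y)=g(x)g(y)\int_Y \psi(z)^2\,dz$, which equals $g(x)g(y)$ after normalizing the constant $\psi$. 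It then remains to interpolate: for the distinct data points $\{x_i\}$, I would build $g$ with $g(x_i)=w_i$ via a Lagrange-type basis $g = \sum_i w_i p_i$, where each $p_i=\prod_{j\ne i}\ell_{ij}$ is a product of $m-1$ affine forms $\ell_{ij}$ separating $x_i$ from each other point. This makes $g$ of degree $m-1$, so $d = m-1$ suffices, and assembling over $l$ gives $P=\sum_{l} q_l q_l^T\succeq 0$ with $k \in \mcl K_T^{m-1}$ and $k(x_i,x_j)=\sum_l (w_l)_i(w_l)_j = K^*_{ij}$.

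The step I expect to be the main obstacle — and the one requiring the key idea — is the rank-one realization. A priori it looks impossible, since a rank-one kernel matrix forces all feature vectors $\Phi(x_i)$ to be collinear, whereas a generic Tessellated kernel is genuinely piecewise across the tessellation cells induced by the $\mathbf I(z-x_i)$ and cannot yield a separable feature $\Phi(x)=g(x)\psi(z)$. The resolution is precisely the jump-free subfamily obtained by equating the two indicator halves of $P$, which collapses the tessellation and leaves a product (degenerate) kernel. Two minor technical points remain to be discharged: the integral $\int_Y$ must converge, so I would take $Y$ to be the bounded box $[a,b]$ containing the data, consistent with the finite limits appearing in $T(a,b,\cdot)$ of Corollary~\ref{corr}; and polynomial interpolation at \emph{arbitrary} distinct points of $\R^n$ must be justified, which the separating-affine-form construction guarantees at degree $m-1$ in any dimension.
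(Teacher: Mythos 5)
Your proposal is correct, but note that the paper itself does not prove Theorem~\ref{thm:TessellatedOptimal} --- it is stated as imported from \cite{JMLR} --- so there is no in-paper proof to compare against line by line. Your argument is a valid, self-contained reconstruction. The two load-bearing steps both check out: (i) taking $q=\bigl[\tilde q^T\;\;\tilde q^T\bigr]^T$ makes the indicator terms in Eqn.~\eqref{eqn:N} sum to $1$, so $q^TN_T^d(z,x)=\tilde q^TZ_d(z,x)$, and by zeroing the coefficients of all monomials containing $z$ this becomes an arbitrary polynomial $g(x)$ of degree $\le d$, yielding $k(x,y)=\mathrm{vol}(Y)\,g(x)g(y)$ with $\mathrm{vol}(Y)$ finite on the bounded box; (ii) the separating-affine-form interpolant $g=\sum_i w_i\prod_{j\ne i}\ell_{ij}$ handles arbitrary distinct points in $\R^n$ at degree $m-1$, where standard univariate Lagrange interpolation would not apply. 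Summing the rank-one pieces is legitimate because $\mcl K_T^d$ in Eqn.~\eqref{eqn:KernelSet} imposes only $P\succeq 0$ (the trace normalization lives in $\mcl X$, not in the kernel class). Two small remarks: you were right to replace the paper's nominal $Y=\R^n$ by the bounded box of Corollary~\ref{corr}, since otherwise the integral of the jump-free (polynomial) integrand diverges; and your rank-one decomposition is equivalent to the slightly more compact choice $P=\bigl[\begin{smallmatrix}Q&Q\\Q&Q\end{smallmatrix}\bigr]$ with $Q\succeq0$ chosen so that $\mathrm{vol}(Y)\,V^TQV=K^*$ for the Vandermonde-type matrix $V$ of the interpolating basis --- the same idea packaged at the level of the full matrix rather than eigenvector by eigenvector.
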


\subsection{$\mcl K_T$ is Universal}
To show that $\mcl K_T$ is universal, we first show that the auxiliary kernel $k(x, y) = \int_{  Y} \mbf I(z- x) \mbf I(z -y)dz$ is universal.
\begin{lem}\label{lem:universal}
For any $a,b,\delta \in \R^n$ with $a<b$ and $\delta>0$, let $Y = [a - \delta,  b + \delta]$ and $ X = [a, b]$. Then the kernel
\begin{align*}
k(x, y) &= \int_{  Y} \mbf I(z- x) \mbf I(z -y)dz  = \int_{\substack{x \leq z \\ y \leq z  \\ z \leq b+\delta\\}} 1 dz   = \prod_{i=1}^n (b_i+\delta_i - \max{\{x_i, y_i\}})= \prod_{i=1}^n k_i(x_i,y_i)
\end{align*}
is universal where $k_i(x,y):= b_i+\delta_i - \max{\{x_i, y_i\}}$.
\end{lem}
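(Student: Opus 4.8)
The plan is to prove universality of the product kernel $k(x,y)=\prod_{i=1}^n k_i(x_i,y_i)$ by first establishing universality of each one-dimensional factor $k_i(x_i,y_i)=b_i+\delta_i-\max\{x_i,y_i\}$, and then lifting to the product. For the one-dimensional case, the natural route is to exhibit an explicit feature map. Observe that $b_i+\delta_i-\max\{x_i,y_i\}=\int_{a_i-\delta_i}^{b_i+\delta_i}\mathbf I(z-x_i)\mathbf I(z-y_i)\,dz$, which is precisely $\ip{\Phi(x_i)}{\Phi(y_i)}_{L_2(Y_i)}$ where the feature map is $\Phi(x_i)(z):=\mathbf I(z-x_i)$, i.e. the indicator of the ray $[x_i,\infty)$ restricted to $Y_i$. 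Thus I would take $\mcl W=L_2(Y_i)$ and verify the RKHS $\mcl H$ consists of functions $f(x)=\ip{v}{\Phi(x)}=\int_{x}^{b_i+\delta_i}v(z)\,dz$ for $v\in L_2$.

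**The key density argument** is then to show this RKHS is dense in $\mcl C(X_i)$ under the uniform norm. The functions $f(x)=\int_x^{b_i+\delta_i}v(z)\,dz$ are exactly the absolutely continuous functions on $X_i=[a_i,b_i]$ with $L_2$ derivative that vanish at the right endpoint $b_i+\delta_i$ (hence are free at $b_i$ since $\delta_i>0$ buffers the endpoint). Every such $f$ is continuously differentiable-in-the-weak-sense, and by choosing $v$ to be (mollified) step functions one generates all piecewise-linear functions, which are uniformly dense in $\mcl C([a_i,b_i])$ by the Stone–Weierstrass or classical piecewise-linear approximation theorem. The role of the padding $\delta_i>0$ is crucial here: it ensures that the constraint $f(b_i+\delta_i)=0$ does not restrict the values of $f$ on the actual domain $X_i=[a_i,b_i]$, so no boundary condition is imposed on $X_i$ and full density in $\mcl C(X_i)$ is retained.

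**For the product kernel**, I would invoke the standard fact that a tensor product of universal kernels on compact spaces is universal on the product space. Concretely, if each $k_i$ is universal on $X_i$ with RKHS dense in $\mcl C(X_i)$, then $k=\prod_i k_i$ has RKHS equal to (the completion of) the tensor product $\bigotimes_i\mcl H_{k_i}$, whose feature map is the tensor product $\Phi(x)=\bigotimes_i\Phi_i(x_i)$ into $\bigotimes_i\mcl W_i=L_2(Y)$. Products of the dense one-dimensional families span a set containing all products of piecewise-linear functions; these separate points and contain the constants, so by Stone–Weierstrass their uniform closure is all of $\mcl C(X)=\mcl C(\prod_i X_i)$. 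This gives density of $\mcl H_k$ in $\mcl C(X)$, which is universality.

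**The main obstacle** I anticipate is the one-dimensional density step — specifically, verifying carefully that the RKHS generated by the ramp/indicator feature map is genuinely dense in $\mcl C(X_i)$ rather than merely in some proper subspace (e.g. functions satisfying a spurious boundary condition). The padding $\delta_i>0$ is exactly what removes the boundary obstruction, and making this rigorous — identifying the RKHS norm via $\norm{f}_{\mcl H}=\inf\{\norm{v}_{L_2}:f(x)=\ip{v}{\Phi(x)}\}$ and confirming the attainable functions are uniformly dense — is where the real work lies; the tensorization step afterward is comparatively routine given the standard tensor-product-of-universal-kernels machinery.
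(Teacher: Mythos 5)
Your proposal is correct and shares the paper's skeleton -- factor $k=\prod_i k_i$, prove each one-dimensional $k_i$ universal, then tensorize -- but both halves are executed by a genuinely different mechanism than the paper's Appendix A. For the one-dimensional step, the paper never identifies the full RKHS via the feature map; instead it works directly with the span of kernel sections, exhibiting explicit coefficients $\alpha_1=\alpha_3=-\tfrac1\beta$, $\alpha_2=\tfrac2\beta$ so that $\sum_j\alpha_j k_i(\cdot,y_j)$ is a triangle (hat) function, plus $\tfrac1\delta k_i(\cdot,b_i)=1$ for the constant, and then appeals to the Schauder-basis property of hat functions in $\mcl C([a_i,b_i])$. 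Your route -- identifying $\mcl H_{k_i}$ as $\{x\mapsto\int_x^{b_i+\delta_i}v(z)\,dz:v\in L_2\}$ and generating piecewise-linear functions from step-function $v$ -- reaches the same dense family and correctly isolates the role of the padding $\delta_i>0$ (which the paper uses only implicitly, via $k_i(x,b_i)=\delta_i\neq 0$); the paper's version is more elementary and self-contained, yours makes the RKHS structure transparent. For the tensorization, the paper does not invoke a tensor-product-of-universal-kernels theorem: it approximates each monomial $x^\gamma=\prod_i x_i^{\gamma_i}$ by a product $\prod_i g_i(x_i)$ with $g_i\in\mcl H_i$, controls the error with the normalized inequality $\bigl|\prod a_i-\prod b_i\bigr|\le\sum|a_i-b_i|$, and then closes with the Weierstrass polynomial approximation theorem. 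Your Stone--Weierstrass step needs one extra care point the paper's monomial argument sidesteps: the linear span of products of piecewise-linear factors is \emph{not} an algebra (products of piecewise-linear functions are piecewise polynomial), so you must either pass to its uniform closure before applying Stone--Weierstrass, or cite the tensor-product-of-universal-kernels result outright as you suggest; either repair is routine, so this is a presentational gap rather than a mathematical one.
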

For brevity, consider the following proof summary. A complete proof is provided in Appendix A.

\textbf{Sketch of proof:} To show the universality of a kernel, $k$, one must prove that $k$ is continuous and the corresponding RKHS is dense. Now let us consider each $k_i$ in the product $k(x, y)=\prod_{i=1}^n k_i(x_i,y_i)$. As shown in~\cite{JMLR}, each kernel, $k_i$, is continuous, and every triangle function is in the corresponding RKHS, ${\mcl H}_{k_i}$. Also, since $k(x, b_i) = \delta_i$, the constant function is also in ${\mcl H}_{k_i}$. Consequently, we conclude that the RKHS associated with $k_i$ contains a Schauder basis for $\mcl C([a_i, b_i])$ -- implying that the kernel $k_i$ is universal. Since $k$ is the product of $n$ universal kernels, we may now show that $k$ is also universal.

The following theorem extends Lemma~\ref{lem:universal} and shows that if $k \in \mcl K_T$ is defined by a positive definite parameter, $P \succ 0$, then $k$ is universal.
\begin{thm} \label{thm:universal}
Suppose $k:X\times X \rightarrow \R$ is as defined in Eqn.~\eqref{eqn:tractable} for some $P \succ 0$, $d \in\N$ and $N$ as defined in Eqn.~\eqref{eqn:N}, then $k$ is a universal kernel for $Y = [a - \delta,  b + \delta]$, $X = [a, b]$  and $\delta > 0$
\end{thm}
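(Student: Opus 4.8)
The plan is to verify the two defining requirements of universality directly: continuity of $k$ and density of the associated RKHS $\mcl H_k$ in $\mcl C(X)$. Continuity is immediate from Corollary~\ref{corr}, which writes $k(x,y)=\sum_{i,j} P_{i,j} G_{i,j}(x,y)$ with each $G_{i,j}$ assembled from the map $T$ and $p^*(x,y)_i = \max\{x_i,y_i\}$; since products of monomials and the pointwise maximum are continuous, $k$ is continuous on $X \times X$. The real content is the density of $\mcl H_k$, and here the strategy is to reduce to the already-established universality of the auxiliary kernel $k_{\mathrm{aux}}(x,y) := \int_Y \mbf I(z-x)\mbf I(z-y)\,dz$ from Lemma~\ref{lem:universal}.

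The key observation is that $k_{\mathrm{aux}}$ lives inside the parameterization~\eqref{eqn:tractable} as a single rank-one choice of $P$. Let $e$ be the unit vector selecting the constant-monomial coordinate in the first block of $N_T^d$, so that $e^T N_T^d(z,x) = \mbf I(z-x)$ (recall that $Z_d$ contains the degree-zero monomial $1$). The cleanest route is then to describe $\mcl H_k$ via its feature map $\Phi(x) = P^{\half} N_T^d(\cdot,x) \in L_2(Y;\R^{n_P})$: a function $f$ lies in $\mcl H_k$ iff $f(x) = \int_Y v(z)^T P^{\half} N_T^d(z,x)\,dz$ for some $v \in L_2(Y;\R^{n_P})$. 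Because $P \succ 0$, the matrix $P^{\half}$ is invertible, so substituting $w = P^{\half} v$ is a bijection on $L_2(Y;\R^{n_P})$ and yields
\[
\mcl H_k = \Bigl\{\, x \mapsto \int_Y w(z)^T N_T^d(z,x)\,dz ~:~ w \in L_2(Y;\R^{n_P}) \,\Bigr\},
\]
a set independent of the particular $P \succ 0$. Taking $w(z) = u(z)\,e$ supported on the constant-monomial coordinate gives exactly $x \mapsto \int_Y u(z)\,\mbf I(z-x)\,dz$, which is a generic element of $\mcl H_{k_{\mathrm{aux}}}$; hence $\mcl H_{k_{\mathrm{aux}}} \subseteq \mcl H_k$.

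From this containment the conclusion is quick: Lemma~\ref{lem:universal} states that $k_{\mathrm{aux}}$ is universal, i.e.\ $\mcl H_{k_{\mathrm{aux}}}$ is dense in $\mcl C(X)$, so the larger space $\mcl H_k$ is dense as well, and together with continuity this proves $k$ is universal. An equivalent positivity-flavored argument avoids the feature-map description entirely: since $P_0 := ee^T$ has rank one, $P \succ 0$ gives $\varepsilon > 0$ with $P - \varepsilon P_0 \succeq 0$, and then $k = \varepsilon\,k_{\mathrm{aux}} + k_r$, where $k_r$ is the positive kernel generated by $P - \varepsilon P_0$ (positivity of both summands follows from Lemma~\ref{lem:tractable}). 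Aronszajn's sum-of-kernels theorem then gives $\mcl H_k \supseteq \mcl H_{\varepsilon k_{\mathrm{aux}}} = \mcl H_{k_{\mathrm{aux}}}$, and density follows as before.

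The main obstacle to watch is the reduction step, where the hypothesis $P \succ 0$ must genuinely be used: positivity is precisely what guarantees that the constant-monomial feature direction survives (either through invertibility of $P^{\half}$ or through the slack $P - \varepsilon P_0 \succeq 0$), so that $\mcl H_{k_{\mathrm{aux}}}$ embeds into $\mcl H_k$; a degenerate $P \succeq 0$ annihilating that direction would break this specific reduction. One should also confirm that the constant monomial is indeed present in $Z_d$ so that $e$ exists. Beyond these points the argument is purely a positivity/containment argument: the genuine analytic work — continuity of the triangle-type kernels and density of the triangle functions in $\mcl C$ — has already been discharged in Lemma~\ref{lem:universal} and its appendix.
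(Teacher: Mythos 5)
Your proposal is correct and, in its second (``positivity-flavored'') form, is exactly the paper's argument: use $P \succ 0$ to split off $\varepsilon\, e e^T$ so that $k$ decomposes as $\varepsilon\, k_{\mathrm{aux}}$ plus a positive remainder kernel, then invoke the fact that adding a positive kernel to the universal kernel of Lemma~\ref{lem:universal} preserves universality (the paper cites this fact rather than proving it). The feature-map containment you give first is just a self-contained justification of that same fact via the RKHS inclusion $\mcl H_{k_{\mathrm{aux}}} \subseteq \mcl H_k$, so it does not change the substance of the proof.
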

\begin{proof}
Since $P\succ 0$, then there exists $\epsilon>0$ such that
\[
\hat P = P - \epsilon\bmat{1 & 0 & ... & 0 \\
		      0 & 0 & ... & 0 \\
		      \vdots & \vdots & \ddots & \vdots \\
		      0& 0  & ... & 0 } \geq 0.
\]
Next
\begin{align*}
k(x, y) & = \int_{ Y}   N_T^d(z,x)^T P  N^d_{T}(z,y) dz = \int_{ Y}   N_T^d(z,x)^T \hat P  N^d_{T}(z,y) dz + \epsilon \int_{ Y} \mbf I(z- x) \mbf I(z -y)dz \\
	& = \hat k(x, y) + k_1(x, y)
\end{align*}
where $k_1(x, y) =  \delta  \int_{ Y} \mbf I(z- x) \mbf I(z -y)dz$.

It was shown in Lemma~\ref{lem:universal}, that $k_1(x, y)$ is universal. Since $k$ is a sum of two positive kernels and one of them is universal, then according to \cite{wang2013} and \cite{borgwardt2006} we have that $k:X\times X \rightarrow \R$ is universal for $Y = [a - \delta,  b + \delta]$ and $ X = [a, b]$
\end{proof}

This theorem implies that even $\mcl K_T^0$ has the universal property.

%

%

\begin{figure*}[t]
 \centering
    \begin{subfigure}[t]{0.48\textwidth}
        \centering
\includegraphics[trim= 10 0 35 0, clip, width=1.0\textwidth]{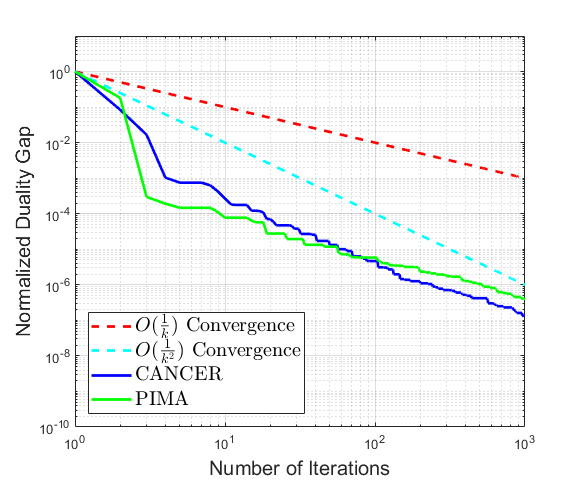}
        \caption{The duality gap for 1000 iterations of FW Algorithm~\ref{FWTKL}, applied to two different classification data sets.}
    \end{subfigure}%
    ~
    \centering
    \begin{subfigure}[t]{0.48\textwidth}
        \centering
\includegraphics[trim= 10 0 35 0, clip, width=1.0\textwidth]{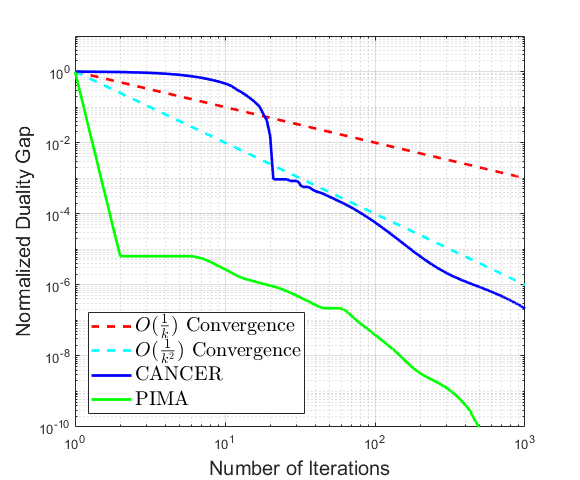}
        \caption{The duality gap for 1000 iterations of APD Algorithm, applied to two different classification data sets.}
    \end{subfigure}%

    \caption{{\small Convergence rates of the Franke-Wolfe algorithm~\ref{FWTKL} and the alternative APD algorithm described in Subsection~\ref{sec:APD}. In (a) we plot the gap between $OPT\_A(P_k)$ and $OPT\_P(\alpha_k)$ of the Franke-Wolfe Algorithm~\ref{FWTKL} vs. iteration number; in (b) we again plot the gap between $OPT\_A(P_k)$ and $OPT\_P(\alpha_k)$ vs. iteration number for the APD Algorithm and in (c) we plot the boosted algorithm. Both demonstrate { sublinear} convergence, but with enhanced performance for the hybrid algorithm.} }\label{fig:apdconvergence}
\end{figure*}

\section{Numerical Convergence and Scalability}\label{sec:scalability}
The computational complexity of the algorithms proposed in this paper will depend both on the computational complexity required to perform each iteration as well as the number of iterations required to achieve a desired level of accuracy. In this section, we use numerical tests to determine the observed convergence rate of Algorithm~\ref{FWTKL} and the observed computational complexity of each iteration when applied to several commonly used machine learning data sets.

\subsection{Convergence Properties}
In this subsection, we briefly consider the estimated number of iterations of the FW algorithm~\ref{FWTKL} required to achieve a given level of accuracy as measured by the gap between the primal and dual solutions. Primal-Dual algorithms such as the proposed FW method typically achieve high rates of convergence. While the number of iterations required to achieve a given level of accuracy does not typically change with the size or type of the problem, if the number of iterations required to achieve convergence is excessive, this will have a significant impact on the performance of the algorithm. In Section~\ref{sec:2step}, we established that the proposed algorithm has worst-case {  $O\left(\frac{1}{k}\right)$ convergence} and proposed an alternative ADP approach with provable { $O\left(\frac{1}{k^2}\right)$ performance}. However, provable bounds on convergence rates are often conservative and in this subsection we examine the observed convergence rates as applied to several test cases in both the classification and regression frameworks.

First, to study the convergence properties of the FW Algorithm~\ref{FWTKL}, in Figure~\ref{fig:apdconvergence}(a), we plot the gap between $OPT\_A(P_k)$ and $OPT\_P(\alpha_k)$ as a function of iteration number for the CANCER and PIMA data sets. The use of the $OPT\_A(P_k)$-$OPT\_P(\alpha_k)$ gap for an error metric is a slight improvement over typical implementations of the FW error metric -- which uses a predicted \textit{bound} on the primal-dual gap. However, in practice, we find that the observed convergence rate does not change significantly depending on which metric is used.
For reference, Fig.~\ref{fig:apdconvergence} also includes a plot of theoretical worst-case { $O\left(\frac{1}{k}\right)$ and $O\left(\frac{1}{k^2}\right)$} convergence. 
As is common in primal-dual algorithms, we observe that the achieved convergence rates significantly exceed the provable { sublinear} bound, with this difference being especially noticeable for the first few iterations. These results indicate that for a moderate level of accuracy, the performance of the FW algorithm is adequate -- especially combined with the low per-iteration complexity described in the following subsection. However, benefits of the FW algorithm are more limited at high levels of accuracy. Thus, in Fig. ~\ref{fig:apdconvergence} (b), we find convergence rates for the suggested APD algorithm mentioned in Subsection~\ref{sec:APD}. Unlike the FW algorithm, convergence rates for the first few iterations of APD are not uniformly high. This observation, combined with a slightly higher per-iteration complexity of APD is the reason for our focus on the FW implementation. However, as noted in Appendix B.4, these algorithms can be combined by switching to the APD algorithm after a fixed number of iterations -- an approach which offers superior convergence when desired accuracy is high.

\subsection{Computational Complexity}
\begin{figure*}[t]
    \centering
    \begin{subfigure}[t]{0.45\textwidth}
        \centering
\includegraphics[width =1\textwidth]{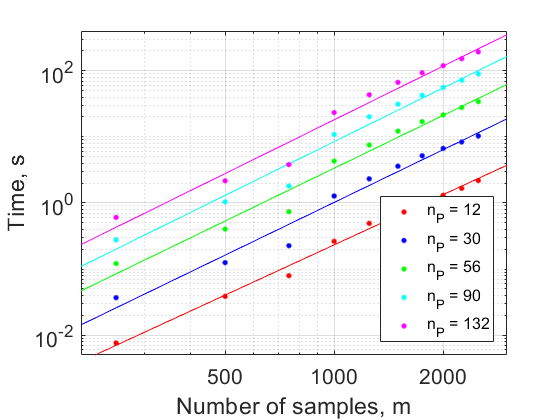}
        \caption{Iteration complexity for the FW algorithm applied to classification}
    \end{subfigure}%
    ~
    \begin{subfigure}[t]{0.45\textwidth}
        \centering
\includegraphics[width =1\textwidth]{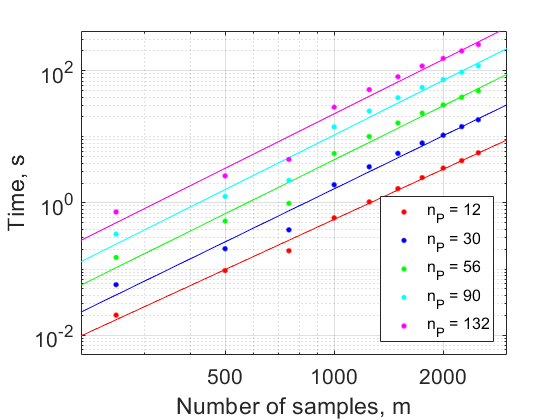}

        \caption{Iteration complexity for the APD algorithm applied to classification}
    \end{subfigure}
        ~
    \begin{subfigure}[t]{0.45\textwidth}
        \centering
\includegraphics[width =1\textwidth]{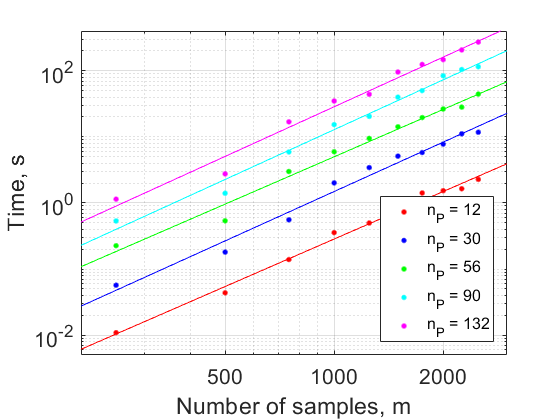}
        \caption{Iteration complexity for the FW algorithm applied to regression}
    \end{subfigure}
    ~
        \begin{subfigure}[t]{0.45\textwidth}
        \centering
\includegraphics[width =1\textwidth]{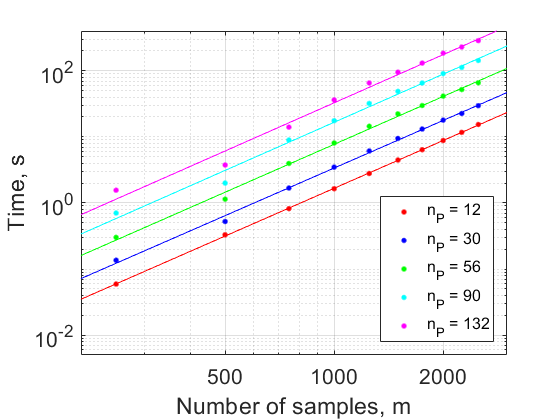}
        \caption{Iteration complexity for the APD algorithm applied to regression}
    \end{subfigure}

    \caption{{\small Per-iteration complexity of the proposed FW algorithm~\ref{FWTKL} and the alternative APD algorithm described in Subsection~\ref{sec:APD}. In (a) and (c) we find log-log plots of iteration complexity of the Franke Wolfe (FW) TKL classification and regression algorithms, respectively, { as} a function of $m$ for several values of $n_P$. Here $m$ is number of samples and $n_P^{2}$ is the number of parameters in $\mcl K$, so that $P \in \S^{n_P}$.  In { (b) and (d)} we find log-log plots of iteration complexity of the Accelerated Primal Dual (APD) for classification and regression, respectively as a function of $m$ for several values of $n_P$.  In both cases, best linear fit is included for reference. }}\label{fig:kernel_complexity}
\end{figure*}

\begin{table*}[t]
	\centering
	\begin{tabular}{  c |  c c c c } \hline
		Method & Liver   & Cancer  &  Heart  & Pima \\ \hline
		SDP &  95.75 $\pm$ 2.68 &  636.17 $\pm$ 25.43 & 221.67 $\pm$ 29.63 & 1211.66 $\pm$ 27.01 \\
		Algorithm~\ref{FWTKL} &  0.12  $\pm$ 0.03 &  0.41 $\pm$ 0.23 & 4.71 $\pm$ 1.15 & 0.80 $\pm$ 0.36 \\ \hline
	\end{tabular}
	\caption{The mean computation time (in seconds), along with standard deviation, for 30 trials comparing the SDP algorithm in~\cite{JMLR} and Algorithm~\ref{FWTKL}.  All tests are run on an Intel i7-5960X CPU at 3.00 GHz with 128 Gb of RAM.}
	\label{time-table}
\end{table*}

\label{subsec:complexity}
{  
In this subsection, we consider the computational complexity of a single iteration of the proposed FW algorithm~\ref{FWTKL}. Specifically, we examine how the expected complexity of an iteration of each subproblem scales with number of samples and hyperparameters. We then examine how this performance compares with observed complexity in several test cases for both the classification and regression frameworks. Finally, these numerical results are compared with the alternative APD algorithm mentioned in Subsection~\ref{sec:APD}.

The computational complexity of the proposed FW algorithm depends on the size of the data set $m$ (number of samples) and the hyper-parameter $n_P = 2{d+2n+1 \choose d}$, where $d$ is a degree of monomials and $n$ is the number of features. As discussed in Section~\ref{sec:2step},  each iteration of proposed FW algorithm consists of three steps.  
  The first step (\texttt{Step 1a}) of the FW algorithm is the optimization problem $OPT\_ A(P)$ -- an SVC or SVR learning problem.
 For this step, we use a LibSVM implementation for which expected complexity scales as $O(m^{2.3})$. In this step, then kernel defined by matrix $P$ is fixed and hence this step does not depend on $n_P$. We note, however, that observed complexity for this step is a function of the rank of $P$ -- as shown in Appendix~\ref{sec:complexity_SVM}. Specifically, we find that observed complexity of LibSVM is significantly reduced when the matrix $P$ is near optimal in both classification and regression problems.
 
 The second step (\texttt{Step 1b}) entails computing the minimum singular vectors of the $D$ matrix (Eq,.~\eqref{eqn:Dalpha}) of dimension $n_P\times n_P$ -- for which we use the standard Singular Value Decomposition (SVD).
 Before performing an SVD, we must calculate $D$ -- which requires $O(m^2)$ operations for each of the $n_P^2$ elements of the matrix $D$. We note, however, that the cost of these calculations is relatively minor compared with the overall complexity of the SVD and SVC/SVR from \texttt{Step 1a}. For solving the SVD, we use a LAPACK implementation which has worst-case complexity which scales as $O(n_P^3)$ -- and which does not depend on $m$ (the number of samples). The last step, \texttt{Step 2} is a primitive line search algorithm, where for each iteration, we evaluate a fixed number of $n_\gamma$ candidate step sizes ($\gamma$). Each candidate step size requires solving a QP problem ($O(m^{2.3})$), leading to worst-case complexity scaling as $O(m^{2.3})$. We conclude that the expected complexity of the proposed algorithm scales as $O(m^{2.3} + m^2n_P^2 + n_P^3)$.

To compare expected iteration complexity with observed complexity, we next test the proposed FW algorithm on several test cases and compare these results with observed complexity of the alternative APD algorithm mentioned in Subsection~\ref{sec:APD}.}
 In Figures~\ref{fig:kernel_complexity}(a-d), we find the computation time of a single iteration of the FW TKL and APD algorithms for both classification and regression on an Intel i7-5960X CPU with 128 Gb of RAM as a function of $m$ for several values of $n_P$, where $m$ is the number of samples used to learn the TK kernel function and the size of $P$ is $n_P \times n_P$ (so that $n_P$ is a function of the number of features and the degree of the monomial basis $Z_d$). The data set used for these plots is California Housing (CA) in~\cite{pace1997sparse}, containing 9 features and $m = 20,640$ samples.   In the case of classification, labels with value greater than or equal to the median of the output were relabeled as $1$, and those less than the median were relabeled as $-1$. Figures~\ref{fig:kernel_complexity}(a-d) demonstrate that the complexity of Algorithm~\ref{FWTKL} { (Algorithm 4)} scales as approximately $O( m^{2.6}n_P^{1.8})$ ($O(m^{2.7}n_P^{1.6})$ ) for classification and $O(m^{2.4}n_P^{1.9})$ ($O( m^{2.4}n_P^{1.2})$) for regression. While this complexity scaling is consistent with theoretical bounds, and while the difference in iteration complexity between the FW and APD algorithms for these data sets is minimal, we find significant differences in scaling between data sets. Furthermore, we find that for similar scaling factors, the FW iteration is approximately 3 times faster than the APD iteration. This multiplicative factor increases to 100 when compared to the SDP algorithm in~\cite{JMLR}. This factor is illustrated for classification using four data sets in Table~\ref{time-table}.


%
%
%

\section{Accuracy of the New TK Kernel Learning Algorithm}\label{sec:8}

In this section, we compare the accuracy of the classification and regression solutions obtained from the FW TKL algorithm with $N$ as defined in Eq.~\eqref{eqn:N} to the accuracy of SimpleMKL, Neural Networks, Random Forest, and XGBoost algorithms.  
In the case of classification we also include three algorithms from the MKLpy python toolbox (AMKL, PWMK, and CKA).


References to the original sources for the data sets used in Section~\ref{sec:8} of the paper are included in Table~\ref{table:Data}. Six classification and six regression data sets were chosen { arbitrarily} from~\cite{UCI} and~\cite{LibSVM} to contain a variety of number of features and number of samples. 
In both classification and regression, the accuracy metric uses 5 random divisions of the data into test sets ($m_t$ samples $\cong 20$\% of data) and training sets ($m$ samples $\cong 80$\% of data). For regression, the training data is used to learn the kernel and predictor. The predictor is then used to predict the test set outputs.
\begin{table*}[t]
  \centering
\begin{tabular}{  c |  c c  c } \hline
Name   & Type & Source & References  \\ \hline
Liver  & Classification & UCI  & \cite{mcdermott2016diagnosing}  \\ \hline
Cancer & Classification & UCI & \cite{mangasarian1990pattern}  \\ \hline
Heart  & Classification & UCI & No Associated Publication  \\ \hline
Pima   & Classification & UCI & No Associated Publication  \\ \hline
Hill Valley & Classification  & UCI & No Associated Publication \\ \hline
Shill Bid   & Classification  & UCI & \cite{alzahrani2018scraping,alzahrani2020clustering}  \\ \hline
Abalone     & Classification  & UCI & \cite{waugh1995extending}  \\ \hline
Transfusion     & Classification &  UCI & \cite{yeh2009knowledge}  \\ \hline
German     & Classification  & LIBSVM  & No Associated Publication \\ \hline
Four Class     & Classification  &  LIBSVM & \cite{ho1996building} \\ \hline
Gas Turbine & Regression  & UCI  & \cite{kaya2019predicting}  \\ \hline
Airfoil     & Regression  & UCI & \cite{brooks1989airfoil}  \\ \hline
CCPP        & Regression  & UCI & \cite{tufekci2014prediction,kaya2012local}  \\ \hline
CA                 & Regression  & LIBSVM & \cite{pace1997sparse} \\ \hline
Space              & Regression  & LIBSVM & \cite{pace1997quick} \\ \hline
Boston Housing     & Regression  & LIBSVM & \cite{harrison1978hedonic} \\ \hline
\end{tabular}
  \caption{References for the data sets used in Section~\ref{sec:8}.  All data sets are available on the UCI Machine Learning Repository or from the LIBSVM database.}
  \label{table:Data}
\end{table*}
\paragraph{Regression analysis} Using six different regression data sets, the MSE accuracy of the proposed algorithm (TKL) with $N$ as defined in Eq.~\eqref{eqn:N} was below average on five of the data sets, an improvement over all other algorithms but XGBoost which also scored above average on five of the data sets.
To evaluate expected improvement in accuracy,
we next compute the average MSE improvement for TKL averaged over all algorithms and data sets to be $23.6\%$ -- i.e.

\[
\frac{1}{6}\sum_{j=1}^6 \frac{MSE_{TKL,Data set(j)}}{\frac{1}{5} \sum_{i=1}^5 MSE_{Algorithm(i), Data set(j)}}\cdot 100=100\% - 23.6\%
\]
This improvement in average performance was better than all other tested algorithms including XGBoost.  

Predictably, the computational time of TKL is significantly higher than non-convex non-kernel-based approaches such as RF or XGBoost. However, the computation time of TKL is lower than other kernel-learning methods such as SMKL --- note that for large data sets ($m > 5000$) TKL is at least 20 times faster than SMKL. Surprisingly, the computation time of TKL is comparable to over-parameterized non-convex stochastic descent methods such as NNet.
\begin{table*}[t]
  \centering {\scriptsize
\begin{tabular}{  c |  c  c c || c |  c  c c }\hline
Data set & Method  &   Error   & Time (s)  & Data set & Method  &   Error & Time (s) \\ \hline
  Gas  & TKL & 0.23 $\pm$ 0.01 &\hspace{-1mm} 13580 $\pm$ 2060  & CCPP & TKL & 10.57 $\pm$ 0.82 &\hspace{-2mm} 626.7 $\pm$ 456.0 \\
Turbine & SMKL & N/A & N/A &  $n$ = 4 & SMKL & 13.93 $\pm$ 0.78 & \hspace{-3mm}13732 $\pm$ 1490 \\
 $n$ = 11 &  NNet & 0.27 $\pm$ 0.03  & 1172 $\pm$ 100 & $m$ = 8000 & NNet & 15.20 $\pm$ 1.00 &\hspace{-1mm} \hspace{-2mm}305.71 $\pm$ 9.25 \\
\hspace{-1mm}$m$ = 30000 &  RF & 0.38 $\pm$ 0.02 & 16.44 $\pm$ 0.57 & $m_t$ = 1568 &  RF & 10.75 $\pm$ 0.70 & 1.65 $\pm$ 0.19 \\
 \hspace{-1mm}$m_t$ = 6733&\hspace{-1mm} XGBoost  &\hspace{-2mm} 0.33 $\pm$ 0.005  & 49.46 $\pm$ 1.93  &
 & \hspace{-1mm}XGBoost  & 8.98 $\pm$ 0.81 & 5.47 $\pm$ 2.73  \\ \hline
Airfoil & TKL & 1.41 $\pm$ 0.44 & 49.87 $\pm$ 4.29  & CA &  TKL & .012 $\pm$ .0003 & 1502 $\pm$ 2154 \\
$n$ = 5 & SMKL & 4.33 $\pm$ 0.79 & \hspace{-1mm}617.8 $\pm$ \hspace{-1mm}161.6 & $n=8$ & SMKL & N/A & N/A \\
$m$ = 1300 & NNet & 6.06 $\pm$ 3.84 & 211.9 $\pm$ 41.0 & $m=16500$ & NNet & \hspace{-1mm}.0113 $\pm$ .0004 &\hspace{-1mm}914.3 $\pm$ 95.9 \\
$m_t$ = 203 &  RF & 2.36 $\pm$ 0.42 & 0.91 $\pm$ 0.20 &
$m_t=4140$ & RF & \hspace{-1mm}.0096 $\pm$ .0003 & 5.28 $\pm$ 3.13 \\
 &\hspace{-1mm} XGBoost  & 1.51 $\pm$ 0.40  & 2.59 $\pm$ 0.06 &
 & \hspace{-1mm}XGBoost &\hspace{-2mm} .0092 $\pm$ .0002 & 5.28 $\pm$ 3.13   \\ \hline
Space & TKL & .013 $\pm$ .001  &  \hspace{-1mm}121.8 $\pm$ 49.2   & Boston  & TKL & \hspace{-1mm}10.36 $\pm$ 5.80 & \hspace{-1mm}63.05 $\pm$ 2.90 \\
$n$ = 12 & SMKL & .019 $\pm$ .005  & 3384 $\pm$ 589  &
Housing &  SMKL & \hspace{-1mm}15.46 $\pm$ 11.49 & \hspace{-1mm}10.39 $\pm$ 0.89 \\
$m$ = 6550 & NNet & .014 $\pm$ .004 & \hspace{-1mm}209.7 $\pm$ 37.4 &
$n$ = 13 & NNet  &\hspace{-2mm} 50.90 $\pm$ 44.19 & 79.2 $\pm$ 42.8 \\
$m_t$ = 1642  &  RF & .017 $\pm$ .003  & 1.06 $\pm$ 0.27 & $m$ = 404 & RF  & 10.27 $\pm$ 5.70 & 0.68 $\pm$ 0.40  \\
 &\hspace{-1mm} XGBoost  & .015 $\pm$ .002  & 0.32 $\pm$ 0.02  &
$m_t$ = 102  & \hspace{-1mm} XGBoost  & 9.40 $\pm$ 4.17 & 0.14 $\pm$ 0.06  \\ \hline
\end{tabular} }
  \caption{{\small Regression performance of Tessellated Kernel learning for 6 regression data sets with comparison of 5 different ML algorithms. Each measurement was repeated 5 times. The resulting test Mean Squared Error (MSE) and training time are included in the table.  All tests are run on a desktop with Intel i7-5960X CPU at 3.00 GHz and with 128 Gb of RAM.  N/A indicates that the algorithm was stopped after 24 hours without a solution.  For each data set. the number of samples, $m$, the size of the test part $m_t$ and the number of features, $n$, are presented.}}
  \label{table:Reg}
\end{table*}

\paragraph{Classification analysis} Using six classification data sets and comparing 7 algorithms, the TSA of the proposed TKL algorithm was above average on all of the data sets, an improvement over all other algorithms. Next, we compute the average improvement in accuracy of TKL over average TSA for all algorithms to be 6.77$\%$ -- i.e.

\[
\frac{1}{6}\sum_{j=1}^6 \frac{TSA_{TKL,Data set(j)}}{\frac{1}{8}\sum_{i=1}^8 TSA_{Algorithm(i), Data set(j)}}\cdot 100=100\% + 6.77\%
\]
This was close to the top score of 6.84$\%$ achieved by the AMKL algorithm (The PWMK algorithm failed to converge on one data set, and the TSA from this test was not included in the calculation).  

Again, the computational time of TKL is significantly higher than RF or XGBoost, but comparable to other kernel learning methods and NNet. Unlike TKL, the computational time of other MKL methods is highly variable and often does not seem to scale predictably with the number of samples and features -- e.g. PWMK for FourClass and Shill Bid data sets and AMKL for Transfusion and German, where the computational time is much higher for smaller data sets.

\begin{table*}[t]
  \centering {\scriptsize
\begin{tabular}{  c |  c  c c || c |  c  c c }\hline
Data set & Method  &   Accuracy (\%)   & Time (s)  & Data set & Method        &   Accuracy (\%)               & Time (s) \\ \hline
Abalone & TKL & 84.61 $\pm$ 1.60 & 17.63 $\pm$ 3.77  & Hill Valley &  TKL  & 86.70 $\pm$ 5.49 & 86.7 $\pm$ 48.2 \\
$n$ = 8  & SMKL & 83.13 $\pm$ 1.06  &  \hspace{-2mm}350.4 $\pm$ 175.1 & $n$ = 100  &  SMKL & 51.23 $\pm$ 3.55 & 2.81 $\pm$ 2.83  \\
$m$ = 4000 & NNet & 84.70 $\pm$ 1.82 & 4.68 $\pm$ 0.64 & $m$ = 1000 & NNet  & 70.00 $\pm$ 4.79 & 3.79 $\pm$ 1.75 \\
$m_t$ = 677 & RF  & 84.11 $\pm$ 1.33 & 0.98 $\pm$ 0.21 & $m_t$ = 212 & RF  & 56.04 $\pm$ 3.27 & 0.75 $\pm$ 0.33 \\
 & \hspace{-1mm}XGBoost  & 82.69 $\pm$ 1.06  & 0.20 $\pm$ 0.06 &  & \hspace{-1mm}XGBoost  & 55.66 $\pm$ 2.37 & 0.58 $\pm$ 0.34 \\
  &  AMKL & 84.64 $\pm$ 1.01   &  0.95 $\pm$ 0.07  &  &  AMKL & 94.71 $\pm$ 1.72 & 5.50 $\pm$ 3.84  \\
  & PWMK & 84.64 $\pm$ 1.01  & 3.13 $\pm$ 0.12  &  & PWMK  & 94.34 $\pm$ 1.69  & \hspace{-2mm}13.10 $\pm$ 5.19  \\
  & CKA  & 65.05 $\pm$ 0.76 &\hspace{-2mm} 21.43 $\pm$ 0.32  & & CKA  & 47.92 $\pm$ 0.57 & 0.50 $\pm$ 0.08 \\  \hline
Transfusion & TKL & 77.84 $\pm$ 3.89 & 0.25 $\pm$ 0.08  & Shill Bid & TKL & 99.76 $\pm$ 0.08 & \hspace{-2mm}23.66 $\pm$ 2.63\\
$n$ = 4  & SMKL & 76.62 $\pm$ 4.79 & 2.44 $\pm$ 3.08 & $n$ = 9  &  SMKL & 97.71 $\pm$ 0.32 & 81.0 $\pm$ 13.1 \\
$m$ = 600 & NNet & 78.78 $\pm$ 3.26 & 1.01 $\pm$ 0.47 & $m$ = 5000 & NNet  & 98.64 $\pm$ 0.86 & 3.56 $\pm$ .60 \\
$m_t$ = 148  &  RF & 75.00 $\pm$ 3.58 & 0.54 $\pm$ 0.24  & $m_t$ = 1321 & RF  & 99.35 $\pm$ 0.14 & 0.78 $\pm$ 0.36 \\
 & \hspace{-1mm}XGBoost  & 73.92 $\pm$ 3.95 & 0.13 $\pm$ 0.11  &  &\hspace{-1mm} XGBoost  & 99.61 $\pm$ 0.06 & 0.13 $\pm$ 0.04  \\
  & AMKL  & 74.46 $\pm$ 1.50  & \hspace{-2mm}766.9 $\pm$ 315.4  &  & AMKL  & 99.72 $\pm$ 0.10 & 1.24 $\pm$ 0.04  \\
   & PWMK  & N/A &  N/A &  & PWMK  & 99.72 $\pm$ 0.10 & 3.03 $\pm$ 0.04  \\
    & CKA  & 76.35 $\pm$ 4.27  & 0.18 $\pm$ 0.03  &  & CKA  & 99.65 $\pm$ 0.21 & 55.8 $\pm$ 0.9  \\ \hline
German & TKL & 75.80 $\pm$ 1.89 & 58.7 $\pm$ 36.1 & FourClass & TKL & 99.77 $\pm$ 0.32 & 0.13 $\pm$ 0.01 \\
$n$ = 24  & SMKL & 74.30 $\pm$ 3.55 & 17.78 $\pm$ 4.79 & $n$ = 2  &  SMKL & 94.53 $\pm$ 12.2  & 0.85 $\pm$ 0.48 \\
$m$ = 800 & NNet & 72.70 $\pm$ 3.98 & 0.61 $\pm$ 0.05 & $m$ = 690 & NNet  & 99.99 $\pm$ 0.01 & 0.53 $\pm$ 0.03  \\
$m_t$ = 200  &  RF & 74.90 $\pm$ 1.35 & 0.64 $\pm$ 0.28  & $m_t$ = 172 & RF  & 99.30 $\pm$ 0.44 & 0.68 $\pm$ 0.53  \\
 & \hspace{-1mm}XGBoost  & 72.40 $\pm$ 2.89 & 0.10 $\pm$ 0.03 &  & \hspace{-1mm}XGBoost  & 98.95 $\pm$ 0.44 & 0.04 $\pm$ 0.00  \\
  & AMKL  & 70.80 $\pm$ 1.47 & 2.88 $\pm$ 0.38 &  & AMKL  & 99.99 $\pm$ 0.01 & 1.39 $\pm$ 0.03   \\
   & PWMK  & 70.70 $\pm$ 1.50 & \hspace{-2mm}907.0 $\pm$ 77.6 &  & PWMK  & 99.99 $\pm$ 0.01 & 990 $\pm$ 60.2  \\
    & CKA  & 68.50 $\pm$ 1.58 & 0.25 $\pm$ 0.03  &  & CKA  & 66.16 $\pm$ 3.44  & 0.16 $\pm$ 0.03 \\ \hline
\end{tabular} }
  \caption{{\small Classification performance of Tessellated Kernel learning for  6 different classification data  sets with comparison of 7 ML methods. Each measurement was repeated 5 times. The resulting Test Set Accuracy and training time are presented in the table.  All tests are run on a desktop with Intel i7-5960X CPU at 3.00 GHz and with 128 Gb of RAM.  N/A indicates that the algorithm was stopped after 24 hours without a solution. For each data set. the number of samples, $m$, the size of the test part $m_t$ and the number of features, $n$, are presented.}}
  \label{table:Class}
\end{table*}
Details of the implementation of the algorithms used in this study are as follows.

\noindent\textbf{[TKL]} Algorithm~2 with $N$ as defined in Eqn.~\eqref{eqn:N}, where $Z_d$ is a vector of monomials of degree $d=1$ or less. The regression problem is posed using $\epsilon=.1$. The data is scaled so that $x_i \in [0,1]^n$ and $[a,b] = [0-\delta,1+\delta]^n$, where $\delta \geq 0$ and $C$ in the kernel learning problem are chosen by 2-fold cross-validation. Implementation and documentation of this method is described in Appendix D.1 and is publicly available via Github~\citep{githubTKL}; 

\noindent\textbf{[SMKL]} SimpleMKL proposed in~\cite{rakotomamonjy_2008} with a standard selection of Gaussian and polynomial kernels with bandwidths arbitrarily chosen between .5 and 10 and polynomial degrees one through three - yielding approximately $13(n+1)$ kernels. The regression and classification problems are posed using $\epsilon = .1$ and $C$ is chosen by 2-fold cross-validation;

\noindent\textbf{[NNet]} A neural network with 3 hidden layers of size 50 using MATLAB's \texttt{patternnet} for classification and \texttt{feedforwardnet} for regression where learning is halted after the error in a validation set decreased sequentially 50 times;

\noindent\textbf{[RF]} The Random Forest algorithm as in~\cite{Breiman2004RandomF} as implemented on the scikit-learn python toolbox~\citep[see][]{scikit-learn}) for classification and regression.  Between 50 and 650 trees (in 50 tree intervals) are selected using 2-fold cross-validation;

\noindent\textbf{[XGBoost]} The XGBoost algorithm as implemented in~\cite{chen2016xgboost} for classification and regresion.  Between 50 and 650 trees (in 50 tree intervals) are selected using 2-fold cross-validation;

\noindent\textbf{[AMKL]} The AverageMKL implementation from the MKLpy python package proposed in~\cite{lauriola2020mklpy} -- averages a standard selection of Gaussian and polynomial kernels;

\noindent\textbf{[PWMK]} The PWMK implementation from the MKLpy python package proposed in~\cite{lauriola2020mklpy}, which uses a heuristic based on individual kernel performance as in~\cite{tanabe2008simple} to learn the weights of a standard selection of Gaussian and polynomial kernels;

\noindent\textbf{[CKA]} The CKA implementation from the MKLpy python package~\citep{lauriola2020mklpy}, uses the centered kernel alignment optimization in closed form  as in~\cite{cortes2010two} to learn the weights of a standard selection of Gaussian and polynomial kernels.





To further illustrate the importance of density property and the TKL framework for practical regression problems, Algorithm 2 with $N=N_2^T$ was applied to elevation data from~\cite{becker2009global} to learn a SVM predictor representing the surface of the Grand Canyon in Arizona. This data set is particularly challenging due to the variety of geographical features. The results can be seen in Figure~\ref{fig:GC}(d) where we see that the regression surface visually resembles a photograph of this terrain, avoiding the artifacts present in the SVM from an optimized Gaussian kernel seen in Figure~\ref{fig:GC}(c).

\begin{figure*}[t]
    \centering
    \begin{subfigure}[t]{0.23\textwidth}
        \centering
\includegraphics[trim= 20 0 50 20, clip, width=.95\textwidth]{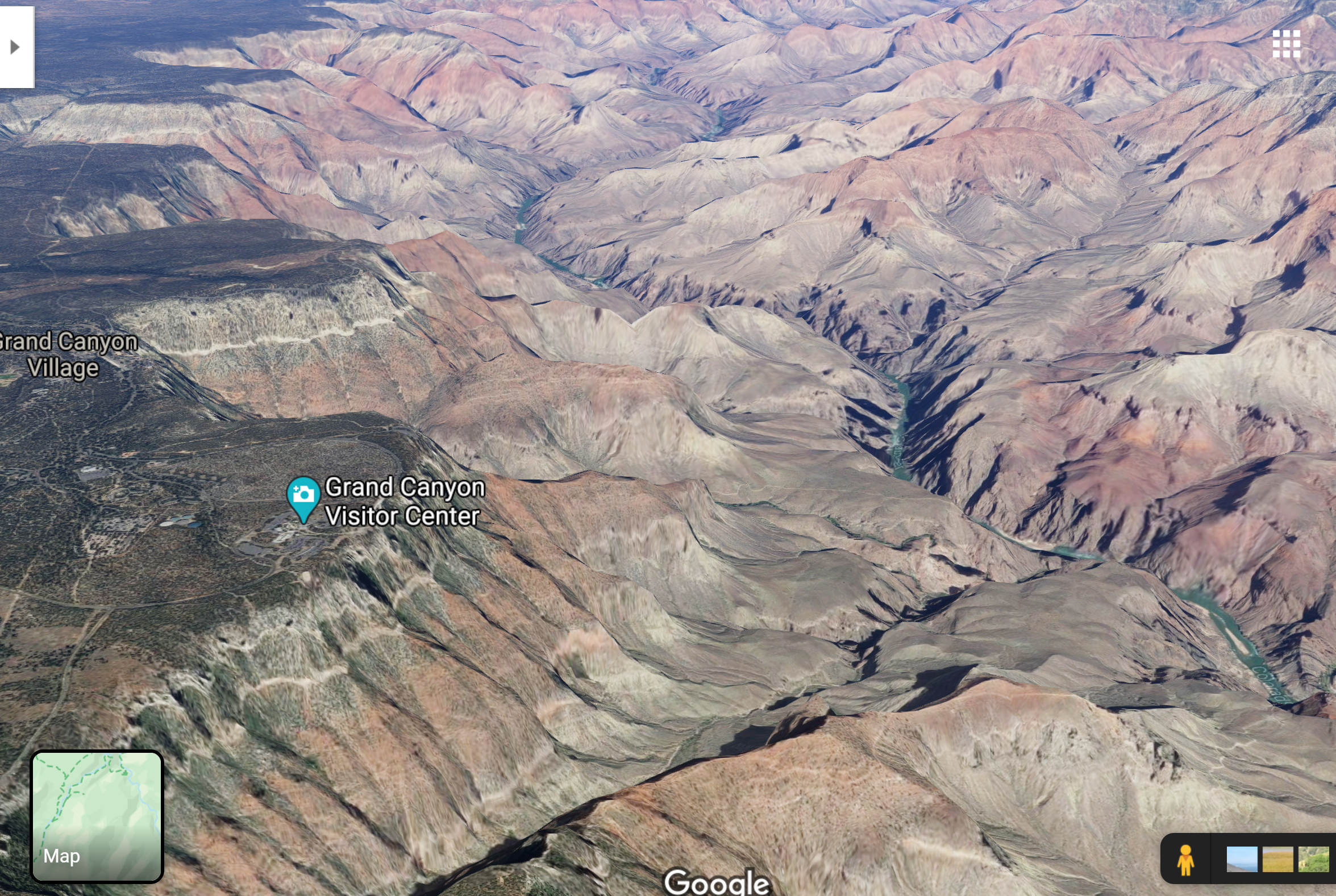}
        \caption{An image from Google Maps of a section of the Grand Canyon corresponding to (36.04, -112.05) latitude and (36.25, -112.3) longitude.}
    \end{subfigure}%
    ~
    \begin{subfigure}[t]{0.23\textwidth}
        \centering
\includegraphics[trim= 20 0 50 20, clip, width=0.95\textwidth]{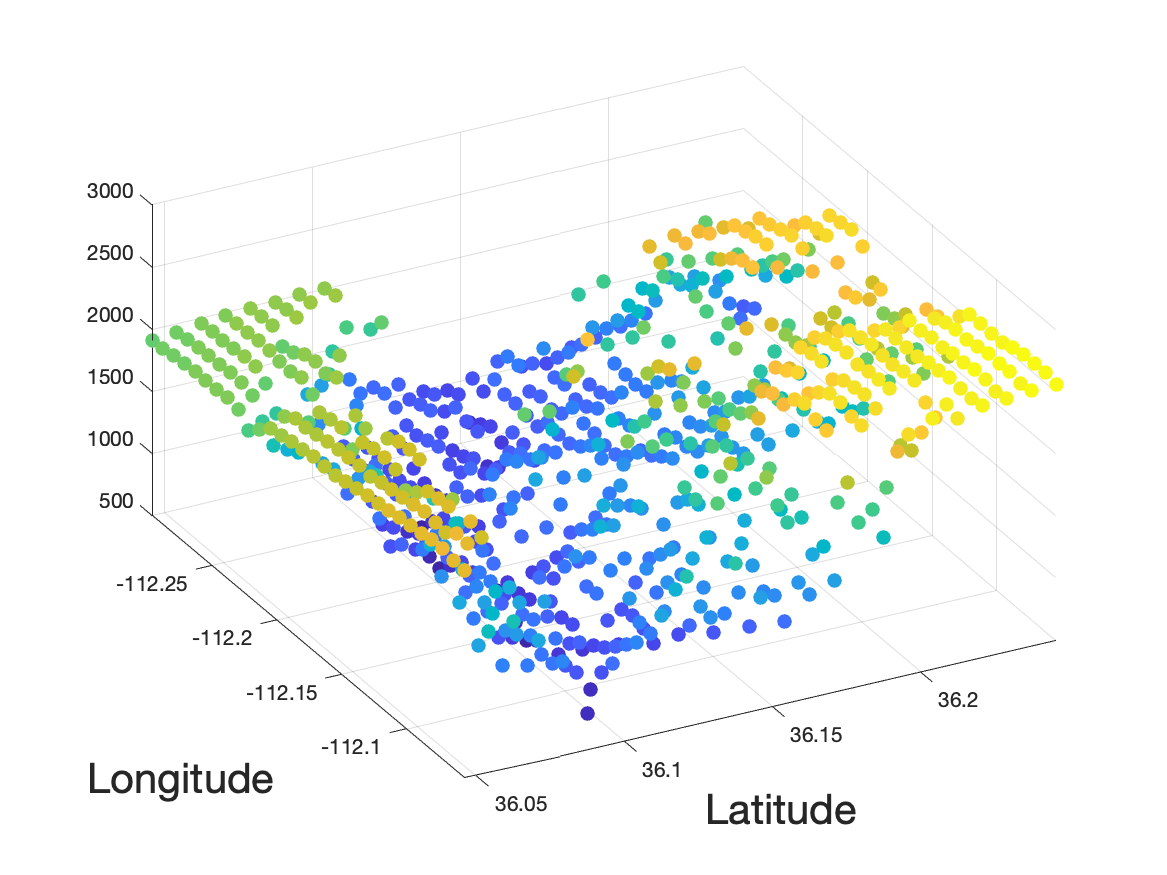}
        \caption{Elevation data ($m=750$) from \cite{becker2009global} for a section of the Grand Canyon between (36.04, -112.05) latitude and (36.25, -112.3) longitude.}
    \end{subfigure}
        ~
    \begin{subfigure}[t]{0.23\textwidth}
        \centering
\includegraphics[trim= 20 0 50 20, clip, width=0.95\textwidth]{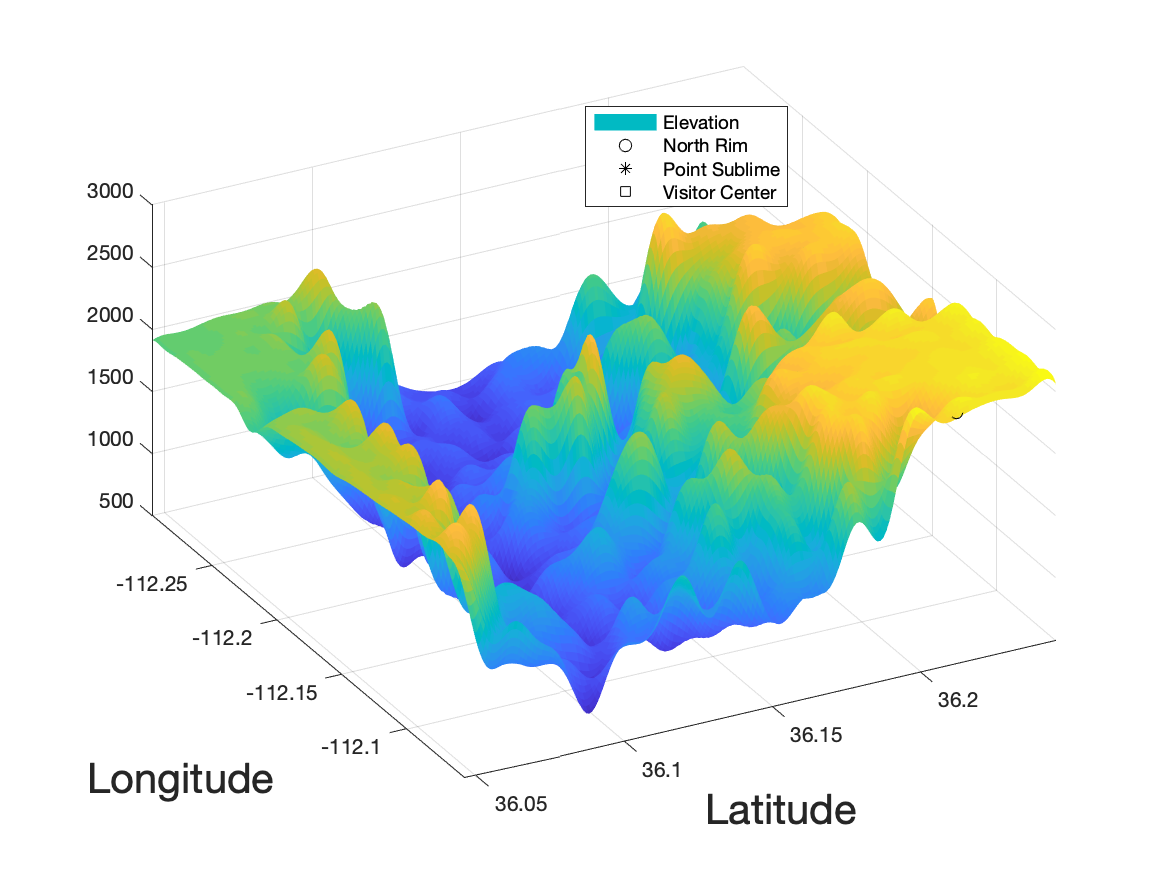}
        \caption{Predictor using a hand-tuned Gaussian kernel trained on the elevation data in (b). The Gaussian predictor poorly represents the sharp edge at the north and south rim.}
    \end{subfigure}
    ~
        \begin{subfigure}[t]{0.23\textwidth}
        \centering
\includegraphics[trim= 20 0 50 20, clip, width=0.95\textwidth]{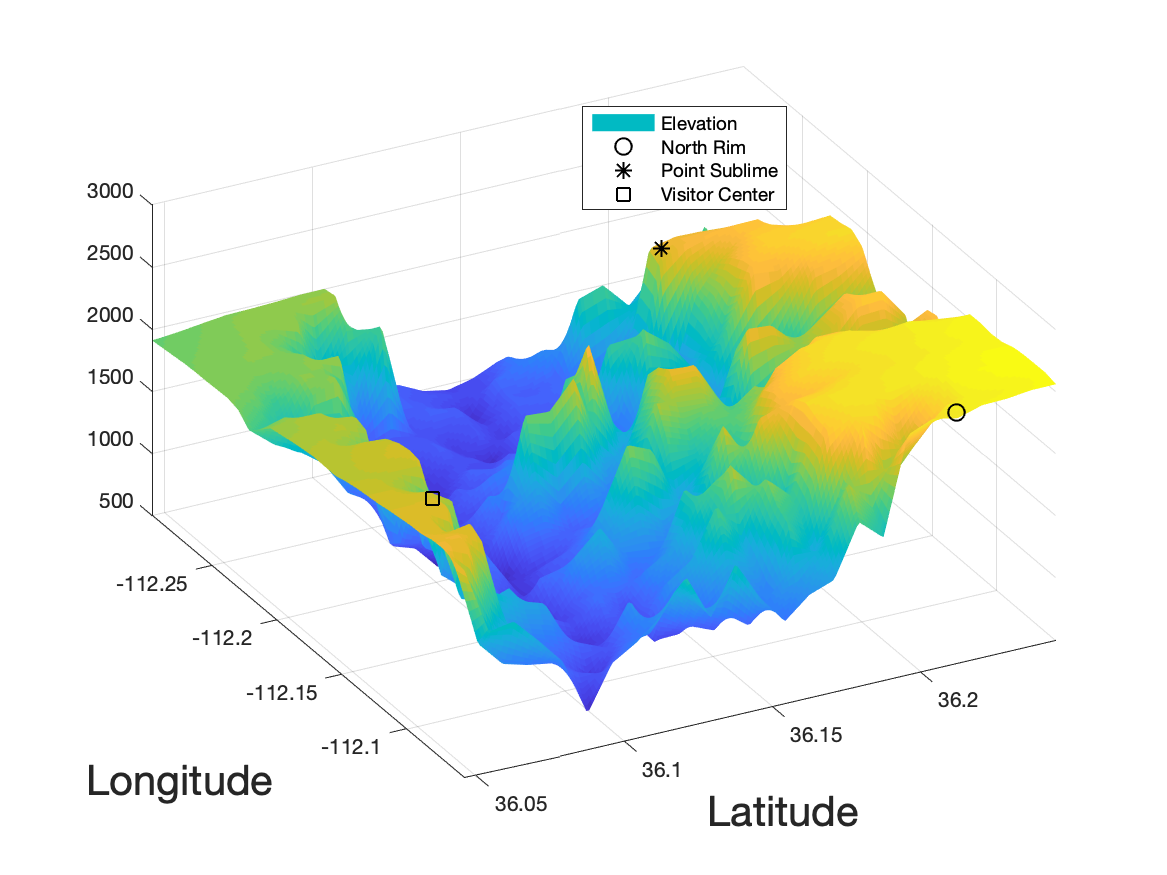}
        \caption{Predictor from Algorithm~\ref{FWTKL} trained on the elevation data in (b). The TK predictor accurately represents the north and south rims of the canyon.}
    \end{subfigure}
    \caption{{\small Subfigure (a) shows an 3D representation of the section of the Grand Canyon to be fitted. In (b) we plot elevation data of this section of the Grand Canyon. In (c) we plot the predictor for a hand-tuned Gaussian kernel. In (d) we plot the predictor from Algorithm~\ref{FWTKL} for $d=2$. }}\label{fig:GC}
\end{figure*}

\section{Conclusion}
We have proposed a generalized kernel learning framework -- using positive matrices to parameterize positive kernels. While such problems can be solved using semidefinite programming, the use of SDP results in large computational overhead. To reduce this computational complexity, we have proposed a saddle-point formulation of the generalized kernel learning problem. This formulation leads to a primal-dual decomposition, which can then be solved efficiently using algorithms of the Frank-Wolfe or accelerated primal dual type -- with corresponding theoretical guarantees of convergence. In both cases, we have shown that the primal and dual sub-problems can be solved using a singular value decomposition and quadratic programming, respectively. Numerical experiments confirm that the FW-based algorithm is approximately 100 times faster than the previous SDP algorithm from~\cite{JMLR}. 
Finally, 12 large and randomly selected data sets were used to test accuracy of the proposed algorithms compared to 8 existing state-of-the-art alternatives -- yielding uniform increases in accuracy with similar or reduced computational complexity.

\acks{We would like to acknowledge support for this project
from the National Science Foundation grant CCF 2323532 and National Institutes of Health grant NIH R01 GM144966.}


\newpage
\appendix

\section{Proof of Universality of Tessellated Kernels}\label{sec:lemma14proof}
In this appendix, we find a detailed proof of Lemma~\ref{lem:universal}.
\setcounter{thm}{13}
\begin{lem}
For any $a,b,\delta \in \R^n$ with $a<b$ and $\delta>0$, let $Y = [a - \delta,  b + \delta]$ and $ X = [a, b]$. Then the kernel
\begin{align*}
k(x, y) &= \int_{  Y} \mbf I(z- x) \mbf I(z -y)dz  = \int_{\substack{x \leq z \\ y \leq z  \\ z \leq b+\delta\\}} 1 dz   = \prod_{i=1}^n (b_i+\delta_i - \max{\{x_i, y_i\}})= \prod_{i=1}^n k_i(x_i,y_i)
\end{align*}
is universal where $k_i(x,y):= b_i+\delta_i - \max{\{x_i, y_i\}}$.
\end{lem}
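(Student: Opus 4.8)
The plan is to exploit the product structure of $k$, reduce the claim to a one-dimensional universality statement, and then establish that one-dimensional case by exhibiting an explicit Schauder basis sitting inside the RKHS.

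First I would verify the displayed factorization. Since $\mbf I(z - x) \mbf I(z - y) = 1$ precisely when $z_i \geq \max\{x_i, y_i\}$ for every $i$, and $z$ ranges over $Y = [a - \delta, b + \delta]$, Fubini gives $k(x,y) = \prod_{i=1}^n \int_{\max\{x_i,y_i\}}^{b_i + \delta_i} dz_i = \prod_{i=1}^n (b_i + \delta_i - \max\{x_i, y_i\})$, which is well defined because $\max\{x_i, y_i\} \leq b_i < b_i + \delta_i$ on $X = [a,b]$. This reduces the problem to showing that each factor $k_i(x, y) = c_i - \max\{x, y\}$, with $c_i := b_i + \delta_i$, is universal on $[a_i, b_i]$, after which I would invoke the fact that a product of universal kernels is universal.

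Second, for the one-dimensional factor I would check the two defining requirements of universality. Continuity of $k_i$ is immediate since $(x,y) \mapsto \max\{x,y\}$ is continuous. For density of ${\mcl H}_{k_i}$ in $\mcl C([a_i,b_i])$ I would work directly with the kernel sections $k_i(\cdot, y) = c_i - \max\{\cdot, y\}$, each of which is constant on $[a_i, y]$ and affine with slope $-1$ on $[y, b_i]$. Two observations drive the argument: the section $k_i(\cdot, b_i)$ equals the constant $\delta_i$, and the section $k_i(\cdot, a_i)$ equals the affine function $c_i - x$; moreover, for equally spaced nodes $y_0 < y_1 < y_2$ the second difference $k_i(\cdot, y_0) - 2 k_i(\cdot, y_1) + k_i(\cdot, y_2)$ is a scaled, inverted tent function supported on $[y_0, y_2]$ with apex at $y_1$, as a short case analysis on the four regions confirms. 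All of these are finite linear combinations of kernel sections and hence lie in ${\mcl H}_{k_i}$.

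Third, I would assemble these into a density conclusion. The constant function, an affine function, and tent functions with apexes ranging over a dense set of interior points together contain the Faber--Schauder system, which is a Schauder basis of $\mcl C([a_i,b_i])$ under the uniform norm. Consequently the span of the kernel sections is uniformly dense, and since this span sits inside ${\mcl H}_{k_i}$, the RKHS is dense in $\mcl C([a_i,b_i])$; thus $k_i$ is universal. I expect this step---pinning down that the explicit functions produced from kernel sections really do generate a uniformly dense subspace---to be the main obstacle, since it requires the tent-function computation together with the Schauder-basis property rather than a soft abstract argument. Finally, writing $k(x,y) = \prod_{i=1}^n k_i(x_i, y_i)$ as a product of universal kernels, I would conclude universality of $k$ on $X = [a,b]$ by the product rule for universal kernels: the tensor-product RKHS contains products $f_1 \otimes \cdots \otimes f_n$ with $f_i \in {\mcl H}_{k_i}$, whose span is uniformly dense in $\mcl C(X)$ by a Stone--Weierstrass argument.
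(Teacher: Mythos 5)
Your proposal is correct and follows essentially the same route as the paper's proof in Appendix A: factor $k$ into one-dimensional kernels $k_i$, show each is universal by producing constants and tent (triangle) functions from linear combinations of kernel sections so that ${\mcl H}_{k_i}$ contains a Schauder basis of $\mcl C([a_i,b_i])$, and then pass to the product. The only cosmetic difference is the last step, where you invoke Stone--Weierstrass on the span of coordinatewise products while the paper approximates monomials directly and applies the classical Weierstrass theorem together with an explicit bound of the form $\bigl|\prod_i a_i - \prod_i b_i\bigr| \le \sum_i |a_i - b_i|$; both arguments need that same product-approximation estimate, which you gloss over but which is routine.
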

\begin{proof} As per~\cite{micchelli_2006}, $k:X \times X \rightarrow \R$ is universal if it is continuous and, if for any $g\in \mcl C(X)$ and any $\epsilon>0$ there exist $f \in \mcl H $ such that $\norm{f-g}_{\mcl C}\le \epsilon$ where
\[
\mcl H:=\{f \;:\; f(x)=\sum_{i = 1}^{m}\alpha_i k(x, y_i)\;:\; y_i \in X,\; \alpha_i\in \R,\; m\in \N \}.
\]

First, we show that each kernel, $k_i(x,y)$, with corresponding RKHS $\mcl H_i$, is universal for $i=1,\cdots,n$. As shown in~\cite{JMLR}, any triangle function, $\Lambda$,  of height $1$, of width $2 \beta$, and centered at $y_2\in [a_i+\beta,b_i-\beta]$,  is in $\mcl H_i$, since
\[
\Lambda(x) = \sum_{j =1}^3 \alpha_j k_i(x, y_j) = \begin{cases}
0, & \text{ if } x < y_1 \\
\frac{1}{\beta}(x - y_1), & \text{ if } y_1 \leq x < y_2 \\
1-\frac{1}{\beta}(x - y_2), & \text{ if } y_2 \leq x < y_3 \\
0, & \text{ if }  y_3 \leq x \\
\end{cases}
\]
where $y_1=y_2-\beta$, $y_3= y_2+\beta$ and $\alpha_1 = \alpha_3 = -\frac{1}{\beta} $, $\alpha_2 = 2\frac{1}{\beta}$. Furthermore, using $\alpha_1=\frac{1}{\delta}$ and $y_1=b_i$, we have $\alpha_1k_i(x,y_1)=\frac{1}{\delta}k_i(x, b_i) =  1$  and so the constant function $\Lambda=1$ is also in $\mcl H_i$. Thus we conclude that $\mcl H_i$ contains the Schauder basis for $\mcl C([a_i, b_i])$~\citep[See][]{hunter2001applied} and hence the kernel is universal for $n = 1$.

Next, since $k=\prod_{i=1}^n k_i$, we have that
\[
\mcl H=\left\{\prod_{i=1}^n f_i\;:\; f_i \in \mcl H_i\right\}.
\]
Thus, since the $k_i$ are universal, for any for any $\gamma \in \N^n$ and $x_i^{\gamma_i} $, there exist $g_i \in \mcl H_i$ such that $|x_i^{\gamma_i}-g_i(x_i)|\le \epsilon$.

Note that $g_i(x_i)$ are bounded, indeed define $C_i:=   (\max\{|a_i|, |b_i|\})^{\gamma_i}  + 1$ and $C := \prod_{i=1}^n C_i$, then
\[
\sup_{x_i \in [a_i, b_i]} |g_i(x_i)| \leq \sup_{x_i \in [a_i, b_i]} |x_i^{\gamma_i}| +\epsilon \leq ((\max\{|a_i|, |b_i|\})^{\gamma_i}   + \epsilon \leq (\max\{|a_i|, |b_i|\})^{\gamma_i} + 1 = C_i.
\]

According to~\cite{hardy1952inequalities}, for all $a_i, b_i \in \R$ such that $|a_i|\leq 1$ and $|b_i| \leq 1$, we have that $\left|\prod_{i=1}^n a_i- \prod_{i=1}^n b_i \right| \leq \sum_{i=1}^n |a_i - b_i|$. Let $g(x)=\prod_{i=1}^n g_i(x_i)$. Then, since $\left|\frac{g_i(x_i)}{C_i}\right| \leq 1$, $\left|\frac{x_i^{\gamma_i}}{C_i}\right| \leq 1$ and $C_i>1$, we have that
\begin{align*}
\sup_{x \in X}\left| x^{\gamma}-  g(x)\right|&=\sup_{x \in X}\left|\prod_{i=1}^n x_i^{\gamma_i}- \prod_{i=1}^n g_i(x_i)\right| \\
&=C \sup_{x \in X}\left|\prod_{i=1}^n \frac{x_i^{\gamma_i}}{C_i}- \prod_{i=1}^n \frac{g_i(x_i)}{C_i}\right| \\
 &\leq C \sum_{i = 1}^n \left|\frac{x_i^{\gamma_i}}{C_i}-  \frac{g_i(x_i)}{C_i}\right| \\
 &\leq n C \max_{i\in \{1,\cdots,n\}}  |x_i^{\gamma_i}-  g_i(x_i)| \leq  n C \epsilon.
\end{align*}

We conclude that for any $\gamma \in \N^n$, $\lambda_\gamma\in \R$ and $\varepsilon>0$, there exists $\{\alpha_{\gamma, j}\}_{j = 1}^{m_\gamma}$ and $\{y_{\gamma, j}\}_{j = 1}^{m_\gamma}$ such that
\[
\sup_{x\in X} \normmmmm{\sum_{j = 1}^{m_\gamma} \alpha_{\gamma, j} k(x, y_{\gamma, j}) - \lambda_\gamma x^{\gamma}} \leq \varepsilon.
\]

Now, the Weierstrass approximation theorem~\citep{willard2012general} states that polynomials as a linear combination of monomials are dense in $\mcl C(X)$. Thus for any $f \in \mcl C (X)$ and $\varepsilon > 0$ there exists a maximal degree $d\in \N$ and a polynomial function $p(x) = \sum_{\|\gamma\|_1 \leq d } \lambda_\gamma x^{\gamma}$ such that
\[
\sup_{x \in  X} \normmm{f(x) - p(x)}   \leq \varepsilon.
\]
Let $\{\alpha_{\gamma, j}\}_{j = 1}^{m_\gamma}$ and $\{y_{\gamma, j}\}_{j = 1}^{m_\gamma}$ be as defined above. Then, using the triangle inequality, we have
\begin{align*}
\sup_{x \in X} \normmmmm{f(x) - \sum_{\|\gamma\|_1 \leq d }  \sum_{j = 1}^{m_\gamma} \alpha_{\gamma, j} k(x, y_{\gamma, j})} &= \sup_{x \in X} \normmmmm{f(x) - p(x) + p(x) -\sum_{\|\gamma\|_1 \leq d }  \sum_{j = 1}^{m_\gamma} \alpha_{\gamma, j} k(x, y_{\gamma, j})} \\
&\hspace{-10mm} \leq \sup_{x \in X} \normmm{f(x) - p(x)} + \sup_{x \in X} \normmmmm{p(x) - \sum_{\|\gamma\|_1 \leq d }  \sum_{j = 1}^{m_\gamma} \alpha_{\gamma, j} k(x, y_{\gamma, j}) }  \\
&\hspace{-10mm} \leq  \sup_{x \in X} \normmm{f(x) - p(x) } + \sum_{\|\gamma\|_1 \leq d } \sup_{x \in X} \normmmmm{\lambda_\gamma x^{\gamma} - \sum_{j = 1}^{m_\gamma} \alpha_{\gamma, j} k(x, y_{\gamma, j}) } \\
                                                           & \hspace{-10mm}\leq  \left(\begin{pmatrix}n+d\\d\end{pmatrix}+1\right)\varepsilon.
\end{align*}

Thus, since for any $f \in \mcl C(X)$ and any $\epsilon>0$, there exists $g =  \sum_{\|\gamma\|_1 \leq d }  \sum_{j = 1}^{m_\gamma} \alpha_{\gamma, j} k(x, y_{\gamma, j})$ such that $\norm{f - g}_C \leq \epsilon$. Therefore, since any $k \in \mcl K_T$ is continuous \citep[See][]{JMLR}, we have that $k$ is universal.
\end{proof}
\section{An Accelerated Algorithm for Quadratic Convergence}\label{appendix:APD}
In this appendix, we consider the Accelerated Primal-Dual (APD) algorithm discussed in the main text which can be used to achieve quadratic convergence for Generalized Kernel Learning. First, we define the APD algorithm and prove quadratic convergence. Furthermore, we show that the APD algorithm can be decomposed into primal and dual sub-problems which may be solved using QP and the SVD in a manner similar to the FW algorithm. Finally, we note that the APD algorithm underperforms the proposed FW algorithm for the first several iterations and hence we propose a hybrid algorithm which uses FW until an error tolerance is satisfied and then switched to APD for subsequent iterations. 

\subsection{An Algorithm with $O\left(\frac{1}{k^2}\right)$ Convergence}

As discussed in Section~\ref{sec:Ideal}, the Generalized Kernel Learning problem can be represented as a minmax or saddle point optimization problem~\eqref{eqn:OPT} which is linear in $P$ and convex in $\alpha$
\begin{equation}
	\min_{\alpha\in\mathcal{Y}} \max_{P\in\mathcal{X}} f(\alpha) + \Phi(\alpha) - h(P),
	\label{eq:APD_FORM}
\end{equation}
where for Generalized Kernel learning we have that $f(\alpha) = \kappa_{\star}(\alpha)$, $\Phi(\alpha) = \lambda(e_{\star} \odot \alpha)$, and $h(P) = 0$ as defined in section~\ref{sec:2step}.

Numerically, we observe that for several first iterations, the Frank-Wolfe GKL algorithm achieves super-sublinear convergence - which is often sufficient to achieve an error tolerance of $10^{-5}$. However, if lower error tolerances are desired, the sublinear convergence rate of the FW algorithm at higher iterations can be accelerated by a algorithm with $O\left(\frac{1}{k^2}\right)$ convergence.

Fortunately, \cite{Hamedani2020APA} has shown that provable $O\left(\frac{1}{k^2}\right)$ convergence can be achieved using a variation of an algorithm originally proposed in~\cite{chambolle2016ergodic}. This algorithm, the accelerated primal-dual (APD) algorithm, requires computation of the same sub-problems, $OPT\_A$ and $OPT\_P$, but it achieves the worst case $O\left(\frac{1}{k^2}\right)$ convergence.

Specifically, this algorithm can be used to solve problems of the form
\begin{equation}
\min_{\alpha\in\mathcal{Y}_\star} \max_{P\in\mathcal{X}} -\kappa_{\star}(\alpha) - O(\alpha \odot e_{\star}, P) = f(\alpha) + \Phi(P,\alpha) - h(P),
\label{eq:APD_FORM}
\end{equation}
where $f(\alpha) + \Phi(\alpha)$ is strongly convex and $h(P)$ is concave. Since the Generalized Kernel Learning Problem (GKL~\eqref{KernelSVC_P}) has the same form as problem~\eqref{eq:APD_FORM}, application of this approach is relatively straightforward. Specifically, consider Algorithm~\ref{APD} from~\cite{Hamedani2020APA} which requires the following definition.
\begin{defn}[Bregman distance~\citep{Hamedani2020APA}]
Given $f$ and $h$, let $\phi_Y : \mathcal{Y}\rightarrow \R$ and $\phi_X : \mathcal{X}\rightarrow \R$ be differentiable functions on open sets $\mcl Y \subset \textbf{dom}\,f$ and $\mcl X \subset \textbf{dom}\, h$. Suppose $\phi_X$ and $\phi_Y$ have
closed domains and are 1-strongly convex with respect to $\|\cdot\|_{\mcl X}$ and $\|\cdot\|_{\mcl Y}$, respectively. We define the Bregman distances $D_{\mcl X}:\mathcal{X}\times\mathcal{X}\rightarrow \R$ and $D_{\mcl Y}:\mathcal{Y}\times\mathcal{Y}\rightarrow \R$ corresponding to $\phi_X$ and $\phi_Y$, respectively, as $D_{\mcl X}(x, \hat{x}) = \phi_{\mcl X}(x) - \phi_{\mcl X}(\hat{x}) - \left<\nabla \phi_{\mcl X}(\hat{x}), x - \hat{x}\right>$ and $D_{\mcl Y}(y, \hat{y}) = \phi_{\mcl{Y}}(y) - \phi_{\mcl{Y}}(\hat{y}) - \left<\nabla \phi_{\mcl Y}(\hat{y}), y - \hat{y}\right>$.
\end{defn}

 \begin{algorithm}[H]
\begin{algorithmic}
\STATE \texttt{Initialize} $\mu, \tau_0, \sigma_0$, $k_{\text{max}}$ and $\alpha_0, P_0 \in \mathcal{Y}\times \mathcal{X}$\\
$k=0$, $(\tau_{-1},  \sigma_{-1}) = (\tau_0, \sigma_0)$,  $(P_{-1},  \alpha_{-1}) = (P_0, \alpha_0)$ and $\gamma_0 = \frac{\sigma_0}{\tau_0}$;
\WHILE{$k< k_{\text{max}}$}

\STATE \text{1:} $\displaystyle \sigma_k = \gamma_k \tau_k,\quad \theta_k = \sigma_{k-1}/\sigma_{k}$
\STATE \text{2:} $x_k = \nabla_P \Phi(\alpha_k, P_k)$
\STATE \text{3:} $S = (1+\theta_k) x_k - \theta_k x_{k-1}$
\STATE \text{4:} $\displaystyle P_{k+1} = \arg\min_{P\in\mathcal{X}} \frac{1}{\sigma_k} D_X(P, P_k) - \left<S, P\right> $\\
\STATE \text{5:} $\displaystyle \alpha_{k+1}  =  \arg\min_{\alpha\in\mathcal{Y}} f(\alpha)  +  \Phi(P_{k+1},\alpha)  +  \frac{D_Y(\alpha, \alpha_k)}{\tau_k}$

\STATE \text{6:}  $\gamma_{k+1} = \gamma_k(1+\mu \tau_k)$, $\tau_{k+1} = \tau_k\sqrt{\frac{\gamma_k}{\gamma_{k+1}}}$, $k = k+1$
\ENDWHILE
\end{algorithmic}
\caption{APD algorithm} \label{APD}
\end{algorithm}
Theorem~\ref{thm:hamedani} proves $O\left(\frac{1}{k^2}\right)$ convergence of Algorithm~\ref{APD} when $f(\alpha) + \Phi(P, \alpha)$ is strongly convex in $\alpha$ and $h(P)$ is concave.
\begin{thm}[\cite{Hamedani2020APA}]\label{thm:hamedani}
Let $D_\mathcal{X}$ and $D_\mathcal{Y}$ be Bregman distance
functions. Suppose that for any $P\in \mcl X$, $f(\alpha)$ and $\Phi(P,\alpha)$ are convex in $\alpha$ and for any $\alpha \in \mcl Y$, $\Phi(P, \alpha)$ and $-h(P)$ are concave in $P$. In addition, suppose $f$ is strongly convex with modulus $\mu>0$.
Furthermore, suppose that $L_{\alpha\alpha},L_{P\alpha}$ satisfy
\begin{align*}
     \|\nabla_P\Phi(P,\alpha)-\nabla_P\Phi(\hat{P},\hat{\alpha})\|_\mathcal{X^*} &
      \leq L_{P\alpha} \|\alpha - \hat{\alpha}\|_{\mathcal{Y}}  \\
     \|\nabla_\alpha\Phi(P,\alpha)-\nabla_\alpha\Phi(P,\hat{\alpha})\|_\mathcal{Y^*} &
      \leq L_{\alpha\alpha} \|\alpha - \hat{\alpha}\|_{\mathcal{Y}}
\end{align*}
for all $\alpha,\hat{\alpha} \in \mcl Y$ and $P,\hat P \in \mcl X$ and that the starting parameters $\tau_0$ and $\sigma_0$ satisfy:
\begin{align*}
     \left(\frac{1-\delta}{\tau_0} - L_{\alpha\alpha}\right)\frac{1}{\sigma_0} & \geq \frac{L_{P\alpha}^2}{c_a}
\end{align*}
for some $\delta, c_a \in \R_+$, such that $c_a  + \delta \leq 1$.   If
\begin{equation*}
\{\alpha^*,P^*\}=\arg\min_{\alpha\in\mathcal{Y}_\star} \max_{P\in\mathcal{X}} f(\alpha) + \Phi(P,\alpha) - h(P),
\end{equation*}
exists, then for any sequence produced by Algorithm~\ref{APD}, $\{\alpha_k, P_k\}$, we have that\\
1. $\lim_{k \rightarrow \infty}\{ P_k, \alpha_k\}\rightarrow \{P^*, \alpha^*\}$ \\
2. If $\delta>0$,  and we define $ L(\alpha,P):=f(\alpha) + \Phi(P,\alpha) - h(P)$ then
\begin{equation*}
    0\leq L(P^*, \alpha_k) - L(P_k, \alpha^*) \leq O\left(\frac{1}{k^2}\right).
\end{equation*}
\label{theorem:APD}
\end{thm}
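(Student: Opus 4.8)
The plan is to follow the standard analysis for accelerated primal-dual schemes, building the argument around a single-iteration descent inequality that is afterward summed against the weight sequence $\{\gamma_k\}$. First I would record the first-order optimality conditions for the two proximal subproblems in Steps~4 and~5 of Algorithm~\ref{APD}. Using the $1$-strong convexity of $\phi_{\mcl X}$ and $\phi_{\mcl Y}$ together with the three-point identity for Bregman distances, $\langle \nabla\phi(x^+) - \nabla\phi(\hat x),\, x - x^+\rangle = D(x,\hat x) - D(x,x^+) - D(x^+,\hat x)$, these optimality conditions convert into inequalities that, for any fixed $(P,\alpha)\in\mcl X\times\mcl Y$, upper-bound the relevant combinations of $\Phi(P_{k+1},\alpha)$, $f(\alpha)$ and $h(P)$ in terms of consecutive Bregman distances $\tfrac{1}{\sigma_k}D_X$, $\tfrac{1}{\tau_k}D_Y$ and the extrapolated coupling term $\langle S,\, P - P_{k+1}\rangle$.

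Next I would handle the coupling, which is the heart of the acceleration. The extrapolation $S = (1+\theta_k)x_k - \theta_k x_{k-1}$ with $x_k = \nabla_P\Phi(\alpha_k,P_k)$ is engineered so that $\langle \nabla_P\Phi(\alpha_{k+1},P_{k+1}) - S,\,\cdot\,\rangle$ splits into a telescoping piece and a residual. Writing $\nabla_P\Phi(\alpha_{k+1},\cdot) - S = \big(\nabla_P\Phi(\alpha_{k+1}) - \nabla_P\Phi(\alpha_k)\big) - \theta_k\big(\nabla_P\Phi(\alpha_k) - \nabla_P\Phi(\alpha_{k-1})\big)$, I would invoke the Lipschitz bounds $L_{P\alpha}$ and $L_{\alpha\alpha}$ and, via Young's inequality, absorb the residual cross terms into the $D_X$ and $D_Y$ contributions. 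This is exactly where the parameter hypothesis $\big(\tfrac{1-\delta}{\tau_0} - L_{\alpha\alpha}\big)\tfrac{1}{\sigma_0} \ge \tfrac{L_{P\alpha}^2}{c_a}$ with $c_a + \delta \le 1$ is used: it keeps the quadratic form multiplying the distances nonnegative, so the one-step relation is a genuine descent rather than an accumulation.

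Then I would set $(P,\alpha) = (P^*,\alpha^*)$, multiply the resulting inequality by $\gamma_k$, and sum over $k$. The recursions $\gamma_{k+1} = \gamma_k(1+\mu\tau_k)$, $\sigma_k = \gamma_k\tau_k$ and $\theta_k = \sigma_{k-1}/\sigma_k$ are chosen so the weighted Bregman terms telescope, leaving $\gamma_k$ times the gap bounded by a constant depending only on $D_X(P^*,P_0)$ and $D_Y(\alpha^*,\alpha_0)$. The strong-convexity modulus $\mu$ of $f$ feeds the growth $\gamma_{k+1}=\gamma_k(1+\mu\tau_k)$, and a separate elementary lemma on this recursion shows $\tau_k = O(1/k)$ and hence $\gamma_k = \Omega(k^2)$; dividing through by $\gamma_k$ yields $L(P^*,\alpha_k) - L(P_k,\alpha^*) \le O(1/k^2)$, with nonnegativity inherited from the saddle-point optimality of $(P^*,\alpha^*)$. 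For claim~1, the same summed inequality certifies that the series of distance terms is finite and that the iterate distances to $(P^*,\alpha^*)$ are non-increasing up to vanishing corrections, which delivers convergence of $\{P_k,\alpha_k\}$.

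The hard part will be the second step: splitting the extrapolated coupling term and bounding the residual so that it is dominated by the Bregman distances precisely under the stated step-size condition. Making the constants line up — so the weighted sum telescopes cleanly instead of leaving an $O(k)$ error — is the delicate bookkeeping on which the $O(1/k^2)$ rate hinges, and it is the one place where every hypothesis of the theorem ($\mu>0$, the Lipschitz constants, and the $\tau_0,\sigma_0$ inequality) is simultaneously needed.
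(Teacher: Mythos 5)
The first thing to note is that the paper does not actually prove this theorem: it is imported verbatim from \cite{Hamedani2020APA} and the in-paper ``proof'' consists of the single line ``See \cite{Hamedani2020APA}''. So the meaningful comparison is with the argument in that reference, and your outline reconstructs its structure correctly. The one-step inequality obtained from the optimality conditions of Steps 4 and 5 via the three-point identity (your identity $\langle \nabla\phi(x^+)-\nabla\phi(\hat x), x-x^+\rangle = D(x,\hat x)-D(x,x^+)-D(x^+,\hat x)$ checks out), the splitting of the extrapolated coupling into a telescoping difference minus $\theta_k$ times the previous difference with the residual absorbed by Young's inequality under the condition $\bigl(\tfrac{1-\delta}{\tau_0}-L_{\alpha\alpha}\bigr)\tfrac{1}{\sigma_0}\ge \tfrac{L_{P\alpha}^2}{c_a}$, and the weighted summation are exactly the mechanism of the cited proof. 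Your growth claim is also right and worth making explicit: the update $\tau_{k+1}=\tau_k\sqrt{\gamma_k/\gamma_{k+1}}$ makes $\tau_k^2\gamma_k$ invariant, so $\gamma_{k+1}-\gamma_k=\mu\tau_k\gamma_k=\mu\tau_0\sqrt{\gamma_0}\,\sqrt{\gamma_k}$, whence $\sqrt{\gamma_k}$ grows linearly and $\gamma_k=\Omega(k^2)$. A small simplification you could exploit in this setting: the hypothesis bounds $\|\nabla_P\Phi(P,\alpha)-\nabla_P\Phi(\hat P,\hat\alpha)\|$ by $L_{P\alpha}\|\alpha-\hat\alpha\|$ uniformly in $P,\hat P$, so the coupling is effectively bilinear and the residual absorption involves only the $\alpha$-increments. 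The honest caveat is that what you have written is a plan rather than a proof --- the ``delicate bookkeeping'' you flag (getting the Young's-inequality constants to match $\delta$ and $c_a$ so the weighted sum telescopes without an $O(k)$ leftover, and upgrading the summability of the distance terms to actual convergence of the iterates in claim 1) is precisely the content of the reference's proof and is not carried out here. But as an outline it is sound, and it is strictly more informative than what the paper provides for this statement.
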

\begin{proof}
See~\cite{Hamedani2020APA}
\end{proof}
\textit{Remark:} As stated in~\cite{Hamedani2020APA},  the conditions of Theorem~\ref{thm:hamedani} are satisfied using $\tau_0 = \frac{1}{3L_{\alpha\alpha}}$ and  $\sigma_0 = \frac{L_{\alpha\alpha}}{2L_{P\alpha}^2}$.

\subsection{Proposed Booster Algorithm}
In this subsection, we propose the following algorithm applicable to our optimization problem. This algorithm is a specification of APD for the Generalized Kernel learning.

 \begin{algorithm}[H]
\begin{algorithmic}
\STATE \texttt{Initialize} $(\alpha_0, P_0) \in \mcl Y_{\star} \times \mcl X$, $\mu$ \\
$\sigma_{-1}=\sigma_0 = \frac{L_{\alpha\alpha}}{2L_{\alpha P}^2}$, $\tau_{-1}=\tau_0 = \frac{1}{3L_{\alpha\alpha}}$\\
$k=0$, $(P_{-1},  \alpha_{-1}) = (P_0, \alpha_0)$ and $\gamma_0 = \frac{\sigma_0}{\tau_0}$;
\WHILE{$ L( P_{k+1}, \alpha_k) -  L(P_k, \alpha_{k+1})>\epsilon$}

\STATE \text{1:} $\displaystyle \sigma_k = \gamma_k \tau_k,\quad \theta_k = \sigma_{k-1}/\sigma_{k}$
\STATE \text{2:} $x_k = \frac{1}{2} D(e_{\star} \odot \alpha_k)$
\STATE \text{3:} $S = (1+\theta_k) x_k - \theta_{k} x_{k-1}$
\STATE \text{4:} $\displaystyle P_{k+1} = \arg APD\_P(P_k, S, \sigma_k)$
\STATE \text{5:} $\displaystyle \alpha_{k+1} = \arg APD\_A(\alpha_k, P_{k+1}, \tau_k)$

\STATE \text{6:}  $\gamma_{k+1} = \gamma_k(1+\mu \tau_k)$, $\tau_{k+1} = \tau_k\sqrt{\frac{\gamma_k}{\gamma_{k+1}}}$, $k = k+1$
\ENDWHILE
\end{algorithmic}
\caption{APD algorithm} \label{APDour}
\end{algorithm}

where  $ L(P, \alpha):=-\lambda(e_{\star} \odot \alpha, P)-\kappa_{\star}(\alpha)$ and
\[D_{i,j}(\alpha):= \sum_{k,l=1}^m \alpha_k e_{*k} G_{i,j}(x_k,x_l) \alpha_l e_{*l}\]
is defined in Eqn.~(12) and the two subroutines $APD\_P$ and $APD\_A$ are defined as
\begin{align*}
  APD\_A(P, \alpha_k,  \tau)&:=  \max_{\alpha\in \mcl Y_{\star}} \lambda(e_{\star} \odot \alpha, P)+\kappa_{\star}(\alpha) -\frac{1}{\tau} D_{\mcl Y}(\alpha, \alpha_k),\\
APD\_P(P_{k}, S, \sigma)&:=  \min_{P\in \mcl X} \frac{1}{\sigma} D_{\mcl X}(P, P_k) - \left< S, P \right>
\end{align*}
where $D_{\mcl Y_{\star}}:=\frac{1}{2}\|\cdot\|_2^2$, and $D_{\mcl X}:=\frac{1}{2}\|\cdot\|_F^2$.

Furthermore,
\begin{equation}
\label{eqn:tau0}
\tau_0 = \frac{1}{3L_{\alpha\alpha}}, \;\; \text{and} \;\;\sigma_0 = \frac{L_{\alpha\alpha}}{2L_{\alpha P}^2}
\end{equation}
where
\begin{equation}\label{eqn:L}
L_{\alpha\alpha}:=\frac{n_P}{2}\sum_{ij}|D_{ij}(e_{\star})|^2,~~ L_{\alpha P}:=\frac{C}{n_P} L_{\alpha\alpha}
\end{equation}
where recall  that $n_P$ is determined by the size of $P$ ($P\in \S^{n_P}$) and $C>0$ can be chosen arbitrarily. Finally, we choose $\mu>0$ sufficiently small such that $-\lambda(\alpha\odot e_{\star}, P) - \mu\alpha^T \alpha$ is convex for all $P\in \mcl X$.

\subsection{$O\left(\frac{1}{k^2}\right)$ Convergence Proof for Algorithm~\ref{APDour}}
Formally, we state the theorem.
\begin{thm} \label{thm:convergence}
Algorithm~\ref{APDour} returns iterates $P_k$ and $\alpha_k$ such that,
$L(\alpha_k,P_{k+1})-L(\alpha_{k+1}, P_k) < O(\frac{1}{k^2})$.
\end{thm}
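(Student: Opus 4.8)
The plan is to recognize Algorithm~\ref{APDour} as the specialization of the accelerated primal–dual scheme of Algorithm~\ref{APD} to the saddle problem \eqref{KernelSVC_P}, and then to obtain the claimed rate directly from Theorem~\ref{thm:hamedani}. Concretely, after negating the objective of $OPT_P$ and using strong duality to exchange the order of optimization, the Generalized Kernel Learning problem takes the form \eqref{eq:APD_FORM} with $h(P)=0$ and coupling $\Phi(P,\alpha)=-\lambda(e_\star\odot\alpha,P)$, which is linear in $P$ and a positive-semidefinite quadratic form in $\alpha$ (with $\nabla_P\Phi=\frac12 D(\alpha)$ for $D$ as in \eqref{eqn:Dalpha}). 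I would take the Euclidean Bregman distances $D_{\mcl Y}=\frac12\norm{\cdot}_2^2$ and $D_{\mcl X}=\frac12\norm{\cdot}_F^2$ used in Algorithm~\ref{APDour}, which are $1$-strongly convex with respect to $\norm{\cdot}_2$ and $\norm{\cdot}_F$, and observe that $APD\_P$ and $APD\_A$ are exactly steps~4 and~5 of Algorithm~\ref{APD} for this choice. Once this identification is in place, the theorem reduces to verifying the hypotheses of Theorem~\ref{thm:hamedani} and invoking its conclusion.

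The verification proceeds in four checks. First, since $D(\alpha)$ is a positive-semidefinite-valued quadratic in $\alpha$ and every $P\in\mcl X$ satisfies $P\succeq 0$, the coupling $\Phi(P,\cdot)$ is a convex quadratic in $\alpha$ and $\Phi(\cdot,\alpha)$ is linear (hence concave) in $P$, while $-h\equiv 0$ is concave; this gives the convex–concave structure. Second, because the separable term $-\kappa_\star$ is only affine (classification) or piecewise affine (regression), strong convexity cannot come from it, so I would split off a quadratic, writing the $\alpha$-objective as $\tilde f(\alpha)=-\kappa_\star(\alpha)+\mu\norm{\alpha}_2^2$ together with $\tilde\Phi(P,\alpha)=-\lambda(e_\star\odot\alpha,P)-\mu\norm{\alpha}_2^2$, so that $\tilde f$ is strongly convex while $\tilde\Phi(P,\cdot)$ stays convex for every $P\in\mcl X$ by the choice of $\mu$ stipulated after \eqref{eqn:L}. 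Third, since $\nabla_P\Phi=\frac12 D(\alpha)$ is independent of $P$ and quadratic in $\alpha$, and $\nabla_\alpha\Phi(P,\alpha)$ is linear in $\alpha$, both differences are Lipschitz on the compact set $\mcl Y_\star$ (where $\abs{\alpha_i}\le C$) and on the trace-bounded set $\mcl X$; estimating the relevant operator norms via $\mathrm{trace}(P)=n_P$ and $\abs{\alpha_i}\le C$ yields the explicit constants $L_{\alpha\alpha},L_{\alpha P}$ in \eqref{eqn:L}. Fourth, with these constants the initialization \eqref{eqn:tau0} is exactly the choice certified in the remark following Theorem~\ref{thm:hamedani}, so the step-size inequality holds for suitable $\delta,c_a$ with $c_a+\delta\le 1$.

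With the hypotheses verified, part~(2) of Theorem~\ref{thm:hamedani} gives $0\le L(P^*,\alpha_k)-L(P_k,\alpha^*)\le O(1/k^2)$ for the gap anchored at the saddle point $(\alpha^*,P^*)$, where $L$ is precisely the objective used in the stopping test of Algorithm~\ref{APDour}. To reach the statement I would then bound the \emph{running} gap $L(P_{k+1},\alpha_k)-L(P_k,\alpha_{k+1})$ appearing in the theorem by this anchored quantity, using the saddle inequalities $L(\alpha^*,P)\le L(\alpha^*,P^*)\le L(\alpha,P^*)$ together with convergence of the iterates from part~(1).

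The main obstacle I anticipate is twofold. The delicate structural point is the strong-convexity requirement: since $\mcl X$ contains rank-deficient $P$ (indeed $OPT\_P$ returns a rank-one $P$), the $\alpha$-Hessian of $-\lambda(e_\star\odot\cdot,P)$ — the label-conjugated kernel Gram matrix — can be singular, so one must argue that the needed lower bound is only required along the feasible directions of $\mcl Y_\star$ (equivalently, on its affine hull) rather than on all of $\R^m$, and pin down $\mu$ accordingly. The second, more bookkeeping-heavy, obstacle is deriving the explicit constants in \eqref{eqn:L}, confirming the step-size inequality with them, and then cleanly bridging from Hamedani's anchored gap to the running gap of the statement, since the APD iterates $P_{k+1},\alpha_{k+1}$ are produced by proximal rather than exact argmax/argmin steps and so do not automatically inherit the one-sided saddle bounds.
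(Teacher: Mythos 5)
Your proposal follows essentially the same route as the paper's proof: the same splitting $f(\alpha)=-\kappa_\star(\alpha)+\mu\alpha^T\alpha$, $\Phi(P,\alpha)=-\lambda(e_\star\odot\alpha,P)-\mu\alpha^T\alpha$, $h\equiv 0$, the same Euclidean Bregman distances, the same Lipschitz constants \eqref{eqn:L} and initialization \eqref{eqn:tau0}, and the same appeal to Theorem~\ref{thm:hamedani}. The two obstacles you flag --- convexity of $\Phi(P,\cdot)$ in $\alpha$ when the label-conjugated Gram matrix is singular, and the passage from the anchored gap $L(P^*,\alpha_k)-L(P_k,\alpha^*)$ to the running gap in the statement --- are exactly the steps the paper asserts without detailed justification, so your plan matches the published argument while being more explicit about where the remaining work lies.
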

\begin{proof}
In this proof, we first show that Algorithm~\ref{APDour} returns iterates $P_k$ and $\alpha_k$ which satisfy Algorithm~\ref{APD}. Next, we show that if $\tau_0,\sigma_0$ are chosen as per Eqn.~\eqref{eqn:tau0}, then the conditions of Theorem~\ref{theorem:APD} are satisfied. First, let us define
\begin{align}
 \label{opt:theorem_eqn1}
   \notag f(\alpha) &:= -\kappa_{\star}(\alpha) + \mu \alpha^T \alpha \\
    \Phi(P, \alpha) &:= -\lambda(\alpha\odot e_{\star}, P) - \mu\alpha^T \alpha \\
    \notag h(P) &:= 0 \\[-7mm] \notag
\end{align}
for some sufficiently small $\mu>0$.
Now, suppose that $\{P_k,\alpha_k,\gamma_k,S_k,x_k,\sigma_k,\tau_k\}$ satisfy Algorithm~\ref{APDour}. Clearly, these iterations also satisfy Steps 1, 3 and 6 of Algorithm~\ref{APD}. Furthermore, these iterations satisfy the equation defined in Step 2 since
\begin{align*}
\nabla_P & \Phi(\alpha, P)_{ij} = \frac{\partial}{\partial P_{ij}}\left[ -\lambda(\alpha\odot e_{\star}, P) - \mu\alpha^T\alpha \right]\\
 &= \frac{\partial}{\partial P_{ij}} \left[\frac{1}{2}\sum_{i,j=1}^{n_P} P_{ij}  \sum_{k,l=1}^m (\alpha_k e_{*k}) G_{i,j}(x_k,x_l) (\alpha_l e_{*l})\right] \\
 &= \frac{1}{2}\sum_{k,l=1}^m (\alpha_k e_{*k}) G_{i,j}(x_k,x_l) (\alpha_l e_{*l}) = \frac{1}{2} D_{ij}(e_{\star} \odot \alpha_k).
\end{align*}

Next, the proposed iterations satisfy the equation defined in Step 4 of Algorithm~\ref{APD}  since by the definition of $APD\_P$
\begin{align*} P_{k+1} &= \arg APD\_P(P_{k}, S, \sigma_k)  =\arg \min_{P\in \mcl X} \frac{1}{\sigma_k} D_{\mcl X}(P, P_k) - \left< S, P \right>.
\end{align*}

Finally, the equality in Step 5 of Algorithm~\ref{APD} is satisfied  since
\begin{align*}
&\alpha_{k+1} = \arg APD\_A(P_{k+1}, \alpha_k,  \tau_k) \\
&~~ = \arg \max_{\alpha\in \mcl Y_{\star}} \lambda(e_{\star} \odot \alpha, P_{k+1})+\kappa_{\star}(\alpha) -\frac{1}{\tau_k} D_{\mcl Y}(\alpha, \alpha_k)\\
&~~ = \arg \min_{\alpha\in \mcl Y_{\star}} -\lambda(e_{\star} \odot \alpha, P_{k+1})-\kappa_{\star}(\alpha) +\frac{1}{\tau_k} D_{\mcl Y}(\alpha, \alpha_k)\\
&~~ = \arg\min_{\alpha\in\mathcal{Y_{\star}}} f(\alpha)  +  \Phi(P_{k+1},\alpha)  +  \frac{1}{\tau_k}D_Y(\alpha, \alpha_k).
\end{align*}

Therefore, we have that $\{P_k,\alpha_k,\gamma_k,S_k,x_k,\sigma_k,\tau_k\}$ satisfy Algorithm~\ref{APD}.

We next must show that $\Phi$ is concave in $P$, convex in $\alpha$ and $f$ is strongly convex.  $\Phi$ is linear in $P$ and thus concave in $P$. As defined, $\Phi$ is convex in $\alpha$ and clearly, $f$ is strongly convex for any $\mu>0$. Since $L_{\alpha\alpha}, L_{\alpha P}$ are as defined in Equation~\eqref{eqn:L}, then
\begin{align*}
     \|\nabla_P\Phi(P,\alpha)-\nabla_P\Phi(\hat{P},\hat{\alpha})\|_\mathcal{X^*} &
      \leq L_{P\alpha} \|\alpha - \hat{\alpha}\|_{\mathcal{Y}}  \\
     \|\nabla_\alpha\Phi(P,\alpha)-\nabla_\alpha\Phi(P,\hat{\alpha})\|_\mathcal{Y^*} &
      \leq L_{\alpha\alpha} \|\alpha - \hat{\alpha}\|_{\mathcal{Y}}
\end{align*}
as desired. Finally, we have that if $\tau_0$ and $\sigma_0$ are as defined in Equation~\eqref{eqn:tau0}, $\delta=\frac{1}{4}$, and $c_a=\frac{1}{2}$ , then $c_a+\delta \le 1$ and
\begin{align*}
     \left(\frac{1-\delta}{\tau_0} - L_{\alpha\alpha}\right)\frac{1}{\sigma_0} & \geq \frac{L_{P\alpha}^2}{c_a}
\end{align*}
as desired.

Therefore, we have by Theorem~\ref{theorem:APD} that
\begin{equation*}
    0\leq L(P^*, \alpha_k) - L(P_k, \alpha^*) \leq O\left(\frac{1}{k^2}\right),
\end{equation*}
and hence
\begin{align*}
L(\alpha_k,P_{k+1})-L(\alpha_{k+1}, P_k) &<L(P^*, \alpha_k) - L(P_k, \alpha^*)  \le  O\left(\frac{1}{k^2}\right).
\end{align*}
\end{proof}
We next define efficient algorithms to solve the subroutines $APD\_P$ and $APD\_D$.

\subsection{Solving \textit{APD\_P}}
The fourth step of the APD algorithm requires solving $APD\_P$.  For arbitrary matrix $S \in \S^{n_P}$ this optimization problem is formulated as follows.
\begin{align} \label{opt:APDP}
P_{k+1} = \arg\min_{P\in\mathcal{Y}} \;\;\frac{1}{\sigma_k} D_X(P, P_k) - \left< S,P \right> \end{align}
The following algorithm solves the optimization task~\ref{opt:APDP}\newline

 \begin{algorithm}[H]
	\begin{algorithmic}
		\STATE \texttt{Input} $P_k, S\in \S^{n_P}$, $\sigma_k, \varepsilon>0$ ;
		\STATE \texttt{Set} $r = +\infty$ and $A = P_k + \sigma_k S$,\\
		\STATE \texttt{Singular Value Decomposition:} $A = \sum_i \lambda_i p_i p_i^T$
		\STATE \texttt{Initialize} $y_l = \min_{i} \lambda_i$ and $y_u = \max_i \lambda_i$;
		\WHILE{$r>\varepsilon$}
		
		\STATE \text{1:}~~ $y = \frac{1}{2}(y_l + y_u)$
		\STATE \text{2:}~~ $r = \sum_i |\lambda_i - y|_+ -n_P$
		\STATE \text{3:}~~ update $y_u = y$ if $r\geq 0$, or $y_l = y$ otherwise
		\ENDWHILE
		\STATE \texttt{Return}
		$P = \sum_{i}|\lambda_i - y|_+ p_i p_i^T=\arg APD\_P(P_k,S,\sigma).$
	\end{algorithmic}
	\caption{$APD\_P$ Subroutine} \label{APD_SOL_P}
\end{algorithm}
\begin{lem}
	Let the optimization problem be as defined in~\eqref{opt:APDP}, then the algorithm~\ref{APD_SOL_P} returns the solution of \textit{APD\_P}.
\end{lem}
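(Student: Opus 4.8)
The plan is to recognize $APD\_P$ as a Euclidean projection onto $\mcl X$ and then to verify that Algorithm~\ref{APD_SOL_P} computes that projection in closed form. First I would substitute the definition $D_X(P,P_k)=\frac12\norm{P-P_k}_F^2$ into~\eqref{opt:APDP} and complete the square. Setting $A:=P_k+\sigma_k S\in\S^{n_P}$, a direct expansion shows
\begin{align*}
\frac{1}{\sigma_k}D_X(P,P_k)-\ip{S}{P}=\frac{1}{2\sigma_k}\norm{P-A}_F^2+c,
\end{align*}
where $c=\frac{1}{2\sigma_k}\left(\norm{P_k}_F^2-\norm{A}_F^2\right)$ is independent of $P$. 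Hence $\arg APD\_P(P_k,S,\sigma_k)=\arg\min_{P\in\mcl X}\norm{P-A}_F^2$, i.e. the orthogonal projection of the symmetric matrix $A$ onto $\mcl X=\{P\succeq 0:\text{trace}(P)=n_P\}$. Since $\mcl X$ is convex and compact and the objective is strictly convex, this projection exists and is unique, so it suffices to show the returned matrix equals $\arg\min_{P\in\mcl X}\norm{P-A}_F^2$.

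The main obstacle is showing that the projection shares an eigenbasis with $A$. Writing $A=\sum_i\lambda_i p_ip_i^T$ for the symmetric eigendecomposition computed by the algorithm, I would expand $\norm{P-A}_F^2=\norm{P}_F^2-2\ip{A}{P}+\norm{A}_F^2$ and observe that the eigenvectors of $P$ enter only through the cross term $\ip{A}{P}$. By von Neumann's trace inequality, for $P$ with eigenvalues $\mu_1\ge\cdots\ge\mu_{n_P}$ one has $\ip{A}{P}\le\sum_i\lambda_i\mu_i$ (eigenvalues in matching sorted order), with equality exactly when $A$ and $P$ are simultaneously diagonalizable. Therefore, for any fixed eigenvalue multiset $\{\mu_i\}$, the distance is minimized by taking $P=\sum_i\mu_i\,p_ip_i^T$, and the problem collapses to the scalar water-filling program
\begin{align*}
\min_{\mu\in\R^{n_P}}\ \sum_i(\mu_i-\lambda_i)^2\qquad\text{s.t.}\qquad \mu_i\ge 0,\ \sum_i\mu_i=n_P.
\end{align*}
This eigenvalue reduction plays exactly the role that the SVD plays in the earlier analytic solution of $OPT\_P$, and it carries the conceptual weight of the argument.

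Finally, I would solve this strictly convex program through its KKT conditions. Introducing a multiplier $y$ for the trace constraint and nonnegativity multipliers for the $\mu_i$, stationarity and complementary slackness give the thresholded form $\mu_i=\max\{\lambda_i-y,0\}$, with $y$ determined by the feasibility requirement $g(y):=\sum_i\max\{\lambda_i-y,0\}=n_P$. Substituting back yields precisely the returned matrix $P=\sum_i\max\{\lambda_i-y,0\}\,p_ip_i^T$. To justify the bisection in Steps~1--3, I would note that $g$ is continuous, piecewise linear, and nonincreasing in $y$ (strictly decreasing wherever it is positive), with $g(\max_i\lambda_i)=0$, so $g(y)=n_P$ has a unique root; the loop maintains a bracket of this root and drives the residual $r=g(y)-n_P$ to zero, updating the interval according to the sign of $r$. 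The one remaining routine check is that the initial interval $[\,\min_i\lambda_i,\max_i\lambda_i\,]$ indeed brackets the root, i.e. $g(\min_i\lambda_i)\ge n_P\ge g(\max_i\lambda_i)$, after which convergence of the bisection to the projection is immediate.
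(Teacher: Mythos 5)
Your reduction of \eqref{opt:APDP} to the Euclidean projection $\arg\min_{P\in\mcl X}\norm{P-A}_F^2$ with $A=P_k+\sigma_k S$ is exactly the first (and only explicit) step of the paper's proof. Where you diverge is in what happens next: the paper stops at \eqref{opt_FNO} and outsources the rest to a citation, simply asserting that Algorithm~\ref{APD_SOL_P} computes this projection, whereas you actually supply the missing argument --- von Neumann's trace inequality to show the minimizer is simultaneously diagonalizable with $A$, reduction to the scalar program $\min\{\sum_i(\mu_i-\lambda_i)^2 : \mu_i\ge 0,\ \sum_i\mu_i=n_P\}$, and the KKT conditions to obtain the thresholded eigenvalues $\mu_i=|\lambda_i-y|_+$ with $y$ fixed by the trace constraint. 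This is the standard and correct way to establish the fact the paper merely cites, so your write-up is strictly more self-contained than the paper's; the substance of the two arguments is otherwise the same.

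One caveat on the step you defer as a ``routine check'': the bracketing condition $g(\min_i\lambda_i)\ge n_P$ is \emph{not} automatic. Since $g(\min_i\lambda_i)=\mathrm{trace}(A)-n_P\min_i\lambda_i$, it fails whenever $A$ is near a positive multiple of the identity; for $A=I$ one gets $g(\lambda_{\min})=0<n_P$ while the true root is $y=0<\lambda_{\min}$, so the interval $[\min_i\lambda_i,\max_i\lambda_i]$ does not contain the root and the procedure as initialized returns $P=0$ rather than the correct projection $P=I$. The lower endpoint should be something like $\min\{\min_i\lambda_i,0\}$ (or $\max_i\lambda_i-n_P$) to guarantee a valid bracket. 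This is a defect of the algorithm's initialization rather than of your argument, and the paper's citation-only proof does not address it either, but it is precisely the point at which the claimed equivalence can break, so it should not be waved through as routine.
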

\begin{proof}
	Firstly, we should  reformulate Optimization Problem~\eqref{opt:APDP} as,
	\begin{align}
		&\notag \arg  \min_{\substack{P \in \S^{n_P} \\ \text{trace}(P) = n_P \\ P \succ 0}}    \frac{1}{\sigma} D_X(P, P_k) - \left< S, P \right>  =   \arg \min_{\substack{P \in \S^{n_P}\\ \text{trace}(P) =n_P \\ P \succ 0}}   \frac{1}{2\sigma} \|P - P_k\|_F^2 - \ip{S}{P}\\ \notag
		&= \arg \min_{\substack{P \in \S^{n_P}\\ \text{trace}(P) = n_P \\ P \succ 0}}  \frac{1}{2\sigma} \ip{P-P_k}{P-P_k} - 2\sigma \ip{S}{P} = \arg \min_{\substack{P \in \S^{n_P}\\ \text{trace}(P) = n_P \\ P \succ 0}} \frac{1}{2\sigma} \|P - (P_k+\sigma S)\|_F^2, \notag \\
		&\qquad = \arg \min_{\substack{P \in \S^{n_P}\\ \text{trace}(P) = n_P\\ P \succ 0}}  \|P - A\|_F^2,\label{opt_FNO}
	\end{align}
where $A = P_k + \sigma S$.

Having reduced $APD\_P$ to a convex distance minimization problems of the form of Eqn.~\eqref{opt_FNO}. According to~\cite{Harada2018PositiveSM}, we use a subroutine defined by Algorithm~\ref{APD_SOL_P} to solve problem~\eqref{opt_FNO} and therefore to find $P_{k+1}$ in Step 4 of Algorithm~\ref{APDour}.
\end{proof}

\subsection{Solving \textit{APD\_A}}
$APD\_A$ is a QP of the form
\begin{align*} 
  \min_{\substack{\alpha \in \mcl Y_{\star} }}\;\;   \frac{1}{2  }\alpha^T Q_* \alpha + c_*^T \alpha,
\end{align*}
where, $c_*^T \alpha = -\tau \kappa_{\star}(\alpha) - \alpha_p^T \alpha,$ and $\alpha^T Q_* \alpha = -\tau \lambda(e_{\star}\odot\alpha, P) + \alpha^T\alpha$.

QP's of this form can be solved using a slight variation of the algorithm proposed in Subsection~\ref{subsec:step1}.

\subsection{Combined Solution for General Kernel Learning}
We can see that the Frank-Wolfe GKL Algorithm converges quite quickly up to a certain value, but after that the convergence slows down and becomes linear. Moreover, APD algorithm return a smaller objective function after 3000-4000 iterations. But, the pure APD has non-monotonic convergence at the early stage. All this together prompted us to create a combined Frank-Wolfe GKL and APD algorithm. The tolerance for the Frank-Wolfe GKL was chosen according to the numerical results.

\begin{algorithm}[H]
\begin{algorithmic}
\STATE \texttt{INPUT} $\epsilon$ - tolerance;
\STATE \text{1:} $(\alpha_1, P_1) =  \text{Frank Wolfe GKL with tolerance } \epsilon$
\STATE \text{2:} $(\alpha_2, P_2) = \text{APD with initial guess } \alpha_1, P_1 \text{ and desired tolerance}$
\STATE \text{3:}~~ $\alpha = OPT\_A(P_2)$, $P = P_2$
\STATE \texttt{OUTPUT:}~ $\alpha$, $P$
\end{algorithmic}
\caption{Final version of GKL} \label{finalTKL}
\end{algorithm}

We assume, that the proposed algorithm~\ref{finalTKL} will show both fast initial convergence and $O\left(\frac{1}{k^2}\right)$ convergence in the worst case, which is necessary for application to arbitrary data sets.

\section{Computational Complexity of SVM Problem with Optimal Kernel}\label{sec:complexity_SVM}
In this section, we consider the computational complexity of SVM subproblem as a function of matrix $P$. Although, as discussed in Subsection~\ref{subsec:complexity}, the computation complexity of SVM subproblem does not depend on the size of matrix $P$, it implicitly depends on the kernel function. Note, that the proposed solution demonstrates that the optimal matrix $P$ is always a rank $1$ matrix -- See Section~\ref{sec:2step}. However, the proposed set of kernels includes many different kernels -- e.g. for matrix $P$ with different ranks. The computational complexity of SVM subproblem is uniquely determined by kernel function and therefore, in our case, depends on matrix $P$.  To investigate this issue, we generate random positive semidefinite matrices with different ranks and consider the number of iterations of SVM learning problem. We compute matrix $P$ randomly
\[
P = \frac{n_P}{r}\sum_{i = 1}^{r} \frac{1}{\|v_i\|_2} v_i v_i^T
\]
where $v$ is a normal distributed vector and $r$ is a desired rank of matrix $P$.

 In Figure~\ref{fig:SVM_complexity} we plot the number of iterations required to achieve the fixed tolerance $\varepsilon = 0.1$ for different ranks of matrix $P$. The data set used for these plots is California Housing (CA) in~\cite{pace1997sparse}. The results shows that the rank $1$ optimal solution of the proposed algorithm is significantly faster in comparison with other random positive semidefinite matrices $P$ with different ranks.

\begin{figure*}

\centering
    \begin{subfigure}[t]{0.45\textwidth}
        \centering
\includegraphics[width =1\textwidth]{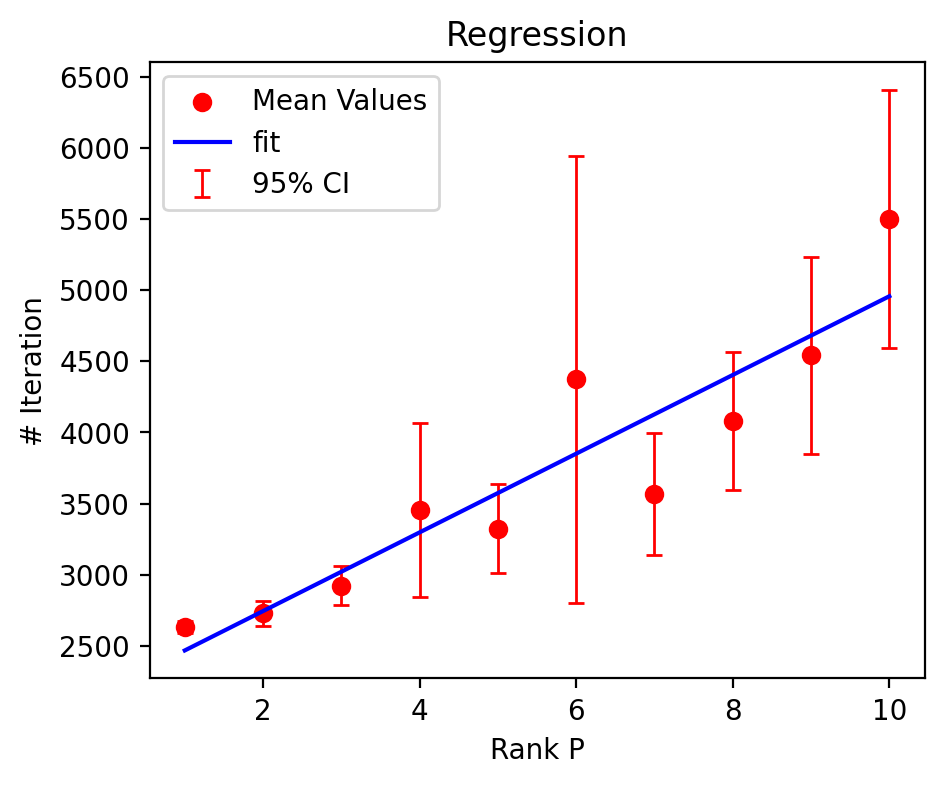}
        \caption{The number of iteration for random matrix $P$ for Support Vector Regression.}
    \end{subfigure}%
    ~
    \begin{subfigure}[t]{0.45\textwidth}
        \centering
\includegraphics[width =1\textwidth]{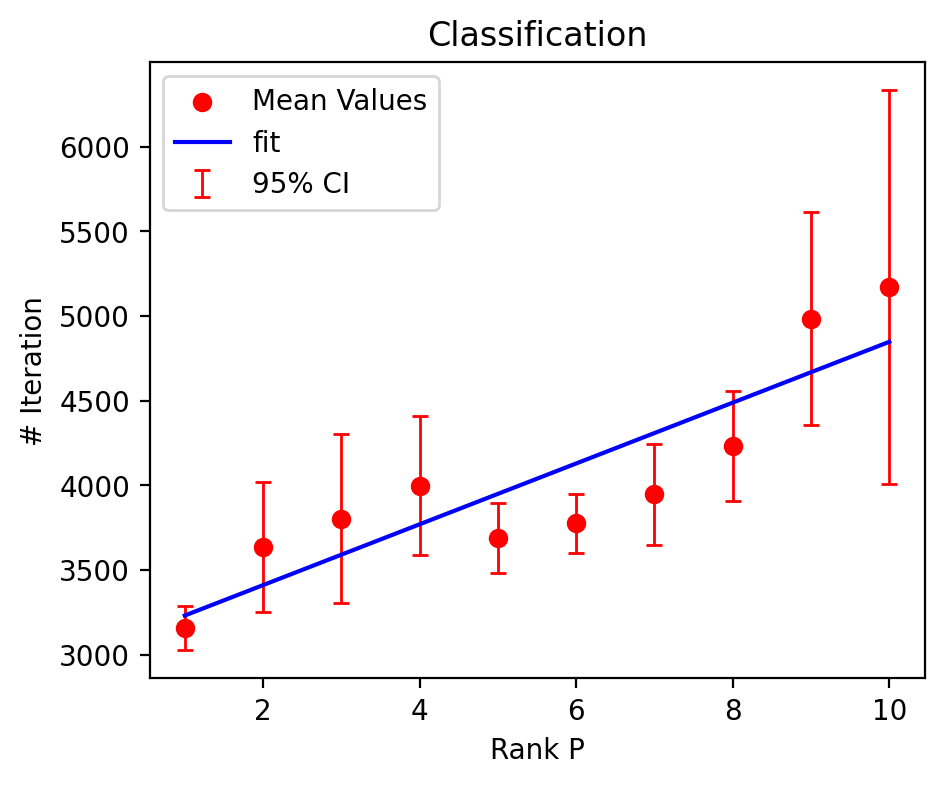}

        \caption{The number of iteration for random matrix $P$ for Support Vector Classification.}
    \end{subfigure}
    \caption{  \small The number of iterations of SVM subproblem required to achieve the desired tolerance $\varepsilon=0.1$ as a function of the rank $P$. The SVM subproblem has been solved using LibSVM implementation. The red dots and error bars represent average number of iterations of the SVM algorithm and 95$\%$ confidence interval using 20 trials for a) regression problem for California Housing (CA) data set in~\cite{pace1997sparse} and b) for classification problem for Shill Bid data set in~\cite{alzahrani2018scraping,alzahrani2020clustering}. We also included the blue line, that indicates the best linear fit of the average number of iterations.  }\label{fig:SVM_complexity}
\end{figure*} 

\section{Implementation and Documentation of Algorithms}
In this appendix material, we have provided a MATLAB implementation of the proposed algorithms which can be used to reproduce the numerical results given in Section~\ref{sec:scalability}. The primary executable is \texttt{PMKL.m}. The demo files \texttt{exampleClassification.m} and \texttt{exampleRegression.m} illustrate typical usage of this executable for classification and regression problems respectively. This software is available from Github~\citep{githubTKL}.

\subsection{Documentation for Included Software} \label{sec:Guidance}

Also included in the main material are the 5 train and test partitions used for each of the 12 data sets used in the numerical results section of the paper. The code \texttt{numericalTest.m} allows the user to select the data set and run the FW PMKL algorithm on the five partitions to calculate the average and standard deviation of the MSE (for regression) or TSA (for classification).

\textbf{The PMKL subroutine}\\
PMKL\footnote{\texttt{PMKL\_Boosted} is the combined Frank-Wolfe and Accelerated Primal Dual method that is used when high accuracy is required.  The algorithm usage is identical to the PMKL algorithm and can be used with these same instructions.} - Positive Matrix Kernel Learning,
\begin{verbatim}
>> f = PMKL(x,y,Type,C,params);
\end{verbatim}
yields an optimal solution to the minimax program
\[ \min_{P \in \mcl X} \max_{\alpha \in \mcl Y_{\star}} \lambda(e_{\star} \odot \alpha, P) + \kappa_{\star}(\alpha), \]
where $\mcl X:=\{P \in \S^{n_P}\;:\; \text{trace}(P) = n_P,\; P \succeq 0\}$,
\begin{align*}
\mcl Y_c:=\{\alpha \in \R^m\; : \; \alpha^T y = 0,\; \alpha_i \in [0,C] \}, \quad \mcl Y_r:=\{\alpha \in \R^m\;:\; \alpha^T e = 0, \; \alpha_i \in [-C, C]\},
\end{align*}
and where,
\begin{itemize}
    \item $\texttt{x} \in \R^{n \times m}$ is a matrix of $n$ rows corresponding to the number of features and $m$ columns corresponding to the number of samples where $x(:,i)$ is the i'th sample,
    \item  $\texttt{y} \in \R^m$ ($\texttt{y} \in [-1,1]^m$) is a row of outputs (labels) for each of the samples in $\texttt{x}$ where $\texttt{y(:,i)}$ is the i'th output corresponding to the i'th sample $\texttt{x(:,i)}$,
    \item \texttt{Type} is the string \texttt{'Classification'} for classification or \texttt{'Regression'} for regression,
    \item \texttt{C} is the penalty for miss-classifying a point for classification, or for predicting an output with less than $\epsilon$ accuracy for regression,
    \item \texttt{params} is a structure containing additional optional parameters such as the kernel function (default is the TK function), kernel parameters (degree of monomial basis for TK or GPK kernels), the domain of integration, the $\epsilon$-loss term for regression, the maximum number of iterations and the tolerance,
    \item The output \texttt{f} is an internal data structure containing the solutions $P^*$ and $\alpha^*$, as well as the other user selected parameters. This data structure defines the resulting regressor/classifier. This regressor/classifier can be evaluated using the \texttt{EvaluatePMKL} command as described below.
\end{itemize}
Default parameters of the params structure are
\begin{verbatim}
>> params.kernel  = 'TK'
>> params.delta  = .5
>> params.epsilon = .1
>> params.maxit = 100
>> params.tol = .01
\end{verbatim}
where \texttt{kernel} specifies the kernel function to use, \texttt{delta} determines the bounds of integration $[a,b] = [0-\delta,1+\delta]^n$, \texttt{epsilon} is the epsilon-loss of the support vector regression problem, \texttt{maxit} is the maximum number of iterations, and \texttt{tol} is the stopping tolerance.  The \texttt{PMKL.m} function can be run with only some of the inputs manually specified, as discussed next.

\textbf{Default Implementation}\\
To run the PMKL algorithm with all default values for the samples $\texttt{x}$ and outputs $\texttt{y}$ and to automatically select the type of problem (classification or regression) the MATLAB command is,
\begin{verbatim}
>> f = PMKL(x,y)
\end{verbatim}
where if $\texttt{y}$ only contains two unique values, the algorithm defaults to classification. Otherwise, the algorithm defaults to regression.

\textbf{Manual Selection of Classification or Regression}\\
To run the PMKL algorithm with all default values, for the samples $\texttt{x}$ and outputs $\texttt{y}$ but manually select the type of problem, the MATLAB command is,
\begin{verbatim}
>> f = PMKL(x,y,Type)
\end{verbatim}
where \texttt{Type = 'Classification'} for classification or \texttt{'Regression'} for regression.

\textbf{Manually Specifying the Penalty C}\\
To run the PMKL algorithm with all default values except for Type and the penalty term $\texttt{C}$, for the samples $\texttt{x}$ and outputs $\texttt{y}$ the MATLAB command is,
\begin{verbatim}
>> f = PMKL(x,y,Type,C)
\end{verbatim}
where the user must select a $\texttt{C}>0$.  It is recommended that the value of $\texttt{C}$ be selected via k-fold cross-validation with data split into training and validation sets.

\textbf{Manually Specifying Additional Parameters}\\
For help generating the params structure we have included the paramsTK.m function which allows the user to generate a params structure for TK kernels as follows.
\begin{verbatim}
>> params = paramsTK(degree,delta,epsilon,maxit,tol)
\end{verbatim}
An empty matrix can be used for any input where the default value is desired, and using the paramsTK framework is recommended when modifying the default values.

\textbf{The Evaluate Subroutine}
Once the optimal kernel function has been learned, you can evaluate the predicted output of a set of samples using the following function.
\begin{verbatim}
>> yPred = evaluatePMKL(f,xTest)
\end{verbatim}
The output of \texttt{evaluatePMKL} are the predicted outputs of the optimal support vector machine, trained on the data $x$ and $y$ with the designated kernel function optimized by the PMKL subroutine.
\begin{itemize}
    \item The input \texttt{f} is an internal data structure output from the PMKL function.
    \item $\texttt{xTest} \in \R^{n \times m}$ is a matrix of $n$ rows corresponding to the number of features and $m$ columns corresponding to the number of samples where $\texttt{xTest(:,i)}$ is the i'th sample,
    \item The output $\texttt{yPred} \in \R^{1 \times m}$ is a vector with $m$ columns corresponding to the number of samples in $\texttt{xTest}$.
\end{itemize}

%
\bibliography{kernel_methods}

\begin{thebibliography}{61}
\providecommand{\natexlab}[1]{#1}
\providecommand{\url}[1]{\texttt{#1}}
\expandafter\ifx\csname urlstyle\endcsname\relax
  \providecommand{\doi}[1]{doi: #1}\else
  \providecommand{\doi}{doi: \begingroup \urlstyle{rm}\Url}\fi

\bibitem[Alzahrani and Sadaoui(2018)]{alzahrani2018scraping}
A~Alzahrani and S~Sadaoui.
\newblock Scraping and preprocessing commercial auction data for fraud
  classification.
\newblock \emph{arXiv preprint arXiv:1806.00656}, 2018.

\bibitem[Alzahrani and Sadaoui(2020)]{alzahrani2020clustering}
A.~Alzahrani and S.~Sadaoui.
\newblock Clustering and labeling auction fraud data.
\newblock In \emph{Data Management, Analytics and Innovation}, pages 269--283.
  Springer, 2020.

\bibitem[Becker et~al.(2009)Becker, Sandwell, Smith, Braud, Binder, Depner,
  Fabre, Factor, Ingalls, Kim, et~al.]{becker2009global}
J.~Becker, D.~Sandwell, W.~Smith, J.~Braud, B.~Binder, J.~Depner, D.~Fabre,
  J.~Factor, S.~Ingalls, S.~Kim, et~al.
\newblock Global bathymetry and elevation data at 30 arc seconds resolution:
  {SRTM30\_PLUS}.
\newblock \emph{Marine Geodesy}, 32\penalty0 (4):\penalty0 355--371, 2009.

\bibitem[Bertsekas(2016)]{bertsekas1998nonlinear}
D.~Bertsekas.
\newblock \emph{Nonlinear {P}rogramming}.
\newblock Athena Scientific Optimization and Computation Series. Athena
  Scientific, 2016.

\bibitem[Boehmke and Greenwell(2019)]{boehmke2019hands}
B.~Boehmke and B.M. Greenwell.
\newblock \emph{{Hands-On Machine Learning with R}}.
\newblock Chapman \& Hall/CRC The R Series. CRC Press, 2019.

\bibitem[Borgwardt et~al.(2006)Borgwardt, Gretton, Rasch, Kriegel,
  Sch{\"o}lkopf, and Smola]{borgwardt2006}
K.~Borgwardt, A.~Gretton, M.~Rasch, H.~Kriegel, B.~Sch{\"o}lkopf, and A.~Smola.
\newblock Integrating structured biological data by kernel maximum mean
  discrepancy.
\newblock \emph{Bioinformatics}, 2006.

\bibitem[Breiman(2004)]{Breiman2004RandomF}
L.~Breiman.
\newblock Random forests.
\newblock \emph{Machine Learning}, 45:\penalty0 5--32, 2004.

\bibitem[Brian and Young(2007)]{borchers2007implementation}
B.~Brian and J.~G. Young.
\newblock Implementation of a primal--dual method for {SDP} on a shared memory
  parallel architecture.
\newblock \emph{Computational Optimization and Applications}, 37\penalty0
  (3):\penalty0 355--369, 2007.

\bibitem[Brooks et~al.(1989)Brooks, Pope, and Marcolini]{brooks1989airfoil}
T.~Brooks, D.~Pope, and M.~Marcolini.
\newblock \emph{Airfoil Self-noise and Prediction}.
\newblock NASA reference publication. National Aeronautics and Space
  Administration, Office of Management, Scientific and Technical Information
  Division, 1989.

\bibitem[Chambolle and Pock(2016)]{chambolle2016ergodic}
A.~Chambolle and T.~Pock.
\newblock On the ergodic convergence rates of a first-order primal--dual
  algorithm.
\newblock \emph{Mathematical {P}rogramming}, 159\penalty0 (1):\penalty0
  253--287, 2016.

\bibitem[Chang and Lin(2011)]{LibSVM}
C-C. Chang and C-J. Lin.
\newblock {LIBSVM}: a library for support vector machines.
\newblock \emph{ACM Transactions on Intelligent Systems and Technology},
  2:\penalty0 27:1--27:27, 2011.

\bibitem[Chen and Guestrin(2016)]{chen2016xgboost}
T.~Chen and C.~Guestrin.
\newblock {XGBoost}: a scalable tree boosting system.
\newblock In \emph{ACM SIGKDD International Conference on Knowledge Discovery
  and Data Mining}, 2016.

\bibitem[Colbert and Peet(2020)]{JMLR}
B.~Colbert and M.~Peet.
\newblock A convex parametrization of a new class of universal kernel
  functions.
\newblock \emph{Journal of Machine Learning Research}, 21\penalty0
  (45):\penalty0 1--29, 2020.

\bibitem[Colbert et~al.(2021)Colbert, Talitckii, and Peet]{githubTKL}
B.~Colbert, A.~Talitckii, and M.~Peet.
\newblock Tessellated kernel learning.
\newblock \url{https://github.com/CyberneticSCL/TKL-version-0.9}, 2021.

\bibitem[Cortes et~al.(2010)Cortes, Mohri, and Rostamizadeh]{cortes2010two}
C.~Cortes, M.~Mohri, and A.~Rostamizadeh.
\newblock Two-stage learning kernel algorithms.
\newblock In \emph{International Conference on Machine Learning}, 2010.

\bibitem[Dua and Graff(2017)]{UCI}
D.~Dua and C.~Graff.
\newblock {UCI} machine learning repository, 2017.
\newblock URL \url{http://archive.ics.uci.edu/ml}.

\bibitem[Fan(1953)]{fan1953minimax}
K.~Fan.
\newblock Minimax theorems.
\newblock \emph{Proceedings of the National Academy of Sciences of the United
  States of America}, 39\penalty0 (1):\penalty0 42, 1953.

\bibitem[Fang et~al.(1994)Fang, Loparo, and Feng]{fang1994inequalities}
Y.~Fang, K.~Loparo, and X.~Feng.
\newblock Inequalities for the trace of matrix product.
\newblock \emph{IEEE Transactions on Automatic Control}, 39\penalty0
  (12):\penalty0 2489--2490, 1994.

\bibitem[Frank and Wolfe(1956)]{frank1956algorithm}
M.~Frank and P.~Wolfe.
\newblock An algorithm for quadratic programming.
\newblock \emph{Naval {R}esearch {L}ogistics {Q}uarterly}, 3\penalty0
  (1-2):\penalty0 95--110, 1956.

\bibitem[Fukumizu et~al.(2007)Fukumizu, Gretton, Sun, and
  Sch{\"o}lkopf]{fukumizu2007kernel}
K.~Fukumizu, A.~Gretton, X.~Sun, and B.~Sch{\"o}lkopf.
\newblock Kernel measures of conditional dependence.
\newblock \emph{Advances in {N}eural {I}nformation {P}rocessing {S}ystems}, 20,
  2007.

\bibitem[Gohberg et~al.(2012)Gohberg, Goldberg, and Kaashoek]{gohberg_book}
I.~Gohberg, S.~Goldberg, and M.~Kaashoek.
\newblock \emph{Basic Classes of Linear Operators}.
\newblock Birkh{\"a}user Basel, 2012.

\bibitem[G{\"o}nen and Alpayd{\i}n(2011)]{gonen2011multiple}
M.~G{\"o}nen and E.~Alpayd{\i}n.
\newblock Multiple kernel learning algorithms.
\newblock \emph{Journal of Machine Learning Research}, 2011.

\bibitem[Hamedani and Aybat(2021)]{Hamedani2020APA}
E.~Hamedani and N.~Aybat.
\newblock A primal-dual algorithm with line search for general convex-concave
  saddle point problems.
\newblock \emph{SIAM Journal on Optimization}, 31\penalty0 (2):\penalty0
  1299--1329, 2021.

\bibitem[Harada(2018)]{Harada2018PositiveSM}
K.~Harada.
\newblock Positive semidefinite matrix approximation with a trace constraint.
\newblock 2018.
\newblock URL
  \url{http://www.optimization-online.org/DB_FILE/2018/08/6765.pdf}.

\bibitem[Hardy et~al.(1952)Hardy, Littlewood, and
  P{\'o}lya]{hardy1952inequalities}
G.~Hardy, J.~Littlewood, and G.~P{\'o}lya.
\newblock \emph{Inequalities}.
\newblock Cambridge Mathematical Library. Cambridge University Press, 1952.

\bibitem[Harrison and Rubinfeld(1978)]{harrison1978hedonic}
D.~Harrison and D.~Rubinfeld.
\newblock Hedonic housing prices and the demand for clean air.
\newblock \emph{Journal of Environmental Economics and Management}, 1978.

\bibitem[Ho and Kleinberg(1996)]{ho1996building}
T.~Ho and E.~Kleinberg.
\newblock Building projectable classifiers of arbitrary complexity.
\newblock In \emph{International Conference on Pattern Recognition}, volume~2,
  1996.

\bibitem[Hunter and Nachtergaele(2001)]{hunter2001applied}
J.~Hunter and B.~Nachtergaele.
\newblock \emph{Applied {A}nalysis}.
\newblock World {S}cientific, 2001.

\bibitem[Hussain et~al.(2011)Hussain, Wajid, Elzaart, and
  Berbar]{hussain2011comparison}
M.~Hussain, S.~Wajid, A.~Elzaart, and M.~Berbar.
\newblock A comparison of {SVM} kernel functions for breast cancer detection.
\newblock In \emph{International Conference Computer Graphics, Imaging and
  Visualization}, pages 145--150. IEEE, 2011.

\bibitem[Jaggi(2013)]{jaggi2013revisiting}
M.~Jaggi.
\newblock Revisiting {F}rank-{W}olfe: {P}rojection-free sparse convex
  optimization.
\newblock In \emph{International Conference on Machine Learning}, 2013.

\bibitem[Jain et~al.(2012)Jain, Vishwanathan, and Varma]{jain_2012}
A.~Jain, S.~Vishwanathan, and M.~Varma.
\newblock {SPF-GMKL}: generalized multiple kernel learning with a million
  kernels.
\newblock In \emph{ACM International Conference on Knowledge Discovery and Data
  Mining}, 2012.

\bibitem[Kaya et~al.(2012)Kaya, T{\"u}fekci, and G{\"u}rgen]{kaya2012local}
H.~Kaya, P.~T{\"u}fekci, and F.~G{\"u}rgen.
\newblock Local and global learning methods for predicting power of a combined
  gas \& steam turbine.
\newblock In \emph{{I}nternational {C}onference on {E}merging {T}rends in
  {C}omputer and {E}lectronics {E}ngineering}, pages 13--18, 2012.

\bibitem[Kaya et~al.(2019)Kaya, T{\"u}fekc{\.i}, and Uzun]{kaya2019predicting}
H.~Kaya, P.~T{\"u}fekc{\.i}, and E.~Uzun.
\newblock Predicting {CO} and {NO}x emissions from gas turbines: novel data and
  a benchmark {PEMS}.
\newblock \emph{Turkish Journal of Electrical Engineering \& Computer
  Sciences}, 27\penalty0 (6):\penalty0 4783--4796, 2019.

\bibitem[Lanckriet et~al.(2004)Lanckriet, Cristianini, Bartlett, El~Ghaoui, and
  Jordan]{lanckriet_2004}
G.~Lanckriet, N.~Cristianini, P.~Bartlett, L.~El~Ghaoui, and M.~Jordan.
\newblock Learning the kernel matrix with semidefinite programming.
\newblock \emph{Journal of Machine Learning Research}, 2004.

\bibitem[Lauriola and Aiolli(2020)]{lauriola2020mklpy}
I.~Lauriola and F.~Aiolli.
\newblock {MKLpy}: a python-based framework for multiple kernel learning.
\newblock \emph{arXiv preprint arXiv:2007.09982}, 2020.

\bibitem[McDermott and Forsyth(2016)]{mcdermott2016diagnosing}
J.~McDermott and R.~Forsyth.
\newblock Diagnosing a disorder in a classification benchmark.
\newblock \emph{Pattern Recognition Letters}, 73:\penalty0 41--43, 2016.

\bibitem[Micchelli et~al.(2006)Micchelli, Xu, and Zhang]{micchelli_2006}
C.~Micchelli, Y.~Xu, and H.~Zhang.
\newblock Universal kernels.
\newblock \emph{Journal of Machine Learning Research}, 2006.

\bibitem[Ni et~al.(2006)Ni, Kumar, and Nguyen]{ni2006learning}
K.~Ni, S.~Kumar, and T.~Nguyen.
\newblock Learning the kernel matrix for superresolution.
\newblock In \emph{Proceedings of the IEEE Workshop on Multimedia Signal
  Processing}, pages 441--446, 2006.

\bibitem[Pace and Barry(1997{\natexlab{a}})]{pace1997quick}
K.~Pace and R.~Barry.
\newblock Quick computation of spatial autoregressive estimators.
\newblock \emph{Geographical {A}nalysis}, 1997{\natexlab{a}}.

\bibitem[Pace and Barry(1997{\natexlab{b}})]{pace1997sparse}
K.~Pace and R.~Barry.
\newblock Sparse spatial autoregressions.
\newblock \emph{{Statistics \& Probability Letters}}, 1997{\natexlab{b}}.

\bibitem[Pedregosa et~al.(2011)Pedregosa, Varoquaux, Gramfort, Michel, Thirion,
  Grisel, Blondel, Prettenhofer, Weiss, Dubourg, Vanderplas, Passos,
  Cournapeau, Brucher, Perrot, and Duchesnay]{scikit-learn}
F.~Pedregosa, G.~Varoquaux, A.~Gramfort, V.~Michel, B.~Thirion, O.~Grisel,
  M.~Blondel, P.~Prettenhofer, R.~Weiss, V.~Dubourg, J.~Vanderplas, A.~Passos,
  D.~Cournapeau, M.~Brucher, M.~Perrot, and E.~Duchesnay.
\newblock {S}cikit-learn: machine learning in {P}ython.
\newblock \emph{Journal of Machine Learning Research}, 12:\penalty0 2825--2830,
  2011.

\bibitem[Qiu and Lane(2005)]{qiu2005multiple}
S.~Qiu and T.~Lane.
\newblock Multiple kernel learning for support vector regression.
\newblock \emph{Computer Science Department, The University of New Mexico,
  Albuquerque, NM, USA, Tech. Rep}, 2005.

\bibitem[Rakotomamonjy et~al.(2008)Rakotomamonjy, Bach, Canu, and
  Grandvalet]{rakotomamonjy_2008}
A.~Rakotomamonjy, F.~R. Bach, S.~Canu, and Y.~Grandvalet.
\newblock Simple{MKL}.
\newblock \emph{Journal of Machine Learning Research}, 2008.

\bibitem[Sch{\"o}lkopf et~al.(2001)Sch{\"o}lkopf, Herbrich, and
  Smola]{scholkopf2001generalized}
B.~Sch{\"o}lkopf, R.~Herbrich, and A.~Smola.
\newblock A generalized representer theorem.
\newblock In \emph{International {C}onference on {C}omputational {L}earning
  {T}heory}, pages 416--426, 2001.

\bibitem[Simon-Gabriel and Sch{\"o}lkopf(2018)]{simon2018kernel}
C-J Simon-Gabriel and B.~Sch{\"o}lkopf.
\newblock Kernel distribution embeddings: universal kernels, characteristic
  kernels and kernel metrics on distributions.
\newblock \emph{Journal of Machine Learning Research}, 19\penalty0
  (1):\penalty0 1708--1736, 2018.

\bibitem[Smola and Sch{\"o}lkopf(2004)]{smola2004tutorial}
A.~Smola and B.~Sch{\"o}lkopf.
\newblock A tutorial on support vector regression.
\newblock \emph{Statistics and {C}omputing}, 14\penalty0 (3):\penalty0
  199--222, 2004.

\bibitem[Sonnenburg et~al.(2010)Sonnenburg, R{\"a}tsch, Henschel, Widmer, Behr,
  Zien, De~Bona, Binder, Gehl, and Franc]{sonnenburg2010shogun}
S.~Sonnenburg, G.~R{\"a}tsch, S.~Henschel, C.~Widmer, J.~Behr, A.~Zien,
  F.~De~Bona, A.~Binder, C.~Gehl, and V.~Franc.
\newblock The {SHOGUN} machine learning toolbox.
\newblock \emph{Journal of Machine Learning Research}, 11\penalty0
  (60):\penalty0 1799--1802, 2010.

\bibitem[Sriperumbudur et~al.(2011)Sriperumbudur, Fukumizu, and
  Lanckriet]{sriperumbudur2011universality}
B.~Sriperumbudur, K.~Fukumizu, and G.~Lanckriet.
\newblock {U}niversality, characteristic kernels and {RKHS} embedding of
  measures.
\newblock \emph{Journal of Machine Learning Research}, 12\penalty0 (7), 2011.

\bibitem[Steinwart(2001)]{steinwart2001influence}
I.~Steinwart.
\newblock On the influence of the kernel on the consistency of support vector
  machines.
\newblock \emph{Journal of Machine Learning Research}, 2\penalty0
  (Nov):\penalty0 67--93, 2001.

\bibitem[Steinwart and Christmann(2008)]{steinwart2008support}
I.~Steinwart and A.~Christmann.
\newblock \emph{Support {V}ector {M}achines}.
\newblock Information Science and Statistics. Springer New York, 2008.

\bibitem[Tanabe et~al.(2008)Tanabe, Ho, Nguyen, and Kawasaki]{tanabe2008simple}
H.~Tanabe, B.T. Ho, C.H. Nguyen, and S.~Kawasaki.
\newblock Simple but effective methods for combining kernels in computational
  biology.
\newblock In \emph{International Conference on Research, Innovation and Vision
  for the Future in Computing and Communication Technologies}, 2008.

\bibitem[T{\"u}fekci(2014)]{tufekci2014prediction}
P.~T{\"u}fekci.
\newblock Prediction of full load electrical power output of a base load
  operated combined cycle power plant using machine learning methods.
\newblock \emph{International Journal of Electrical Power \& Energy Systems},
  60:\penalty0 126--140, 2014.

\bibitem[Wang et~al.(2013)Wang, Xiao, and Zhou]{wang2013}
H.~Wang, Q.~Xiao, and D.~Zhou.
\newblock An approximation theory approach to learning with $\ell_1$
  regularization.
\newblock \emph{Journal of Approximation Theory}, 2013.

\bibitem[Waugh(1995)]{waugh1995extending}
S.~Waugh.
\newblock \emph{{E}xtending and benchmarking {C}ascade-{C}orrelation:
  extensions to the {C}ascade-{C}orrelation architecture and benchmarking of
  feed-forward supervised artificial neural networks}.
\newblock PhD thesis, University of Tasmania, 1995.

\bibitem[Willard(2012)]{willard2012general}
S.~Willard.
\newblock \emph{General Topology}.
\newblock Dover Books on Mathematics. Dover Publications, 2012.

\bibitem[Wolberg et~al.(1990)Wolberg, Mangasarian, Coleman, and
  Li]{mangasarian1990pattern}
W.~Wolberg, O.~Mangasarian, T.~Coleman, and Y.~Li.
\newblock Pattern recognition via linear programming: Theory and application to
  medical diagnosis.
\newblock \emph{{Large-Scale Numerical Optimization, SIAM Publications,
  Citeseer}}, pages 22--30, 1990.

\bibitem[Wolberg et~al.(1995)Wolberg, Mangasarian, Street, and
  Street]{misc_breast_cancer_wisconsin}
W.~Wolberg, O.~Mangasarian, N.~Street, and W.~Street.
\newblock {Breast Cancer Wisconsin (Diagnostic)}.
\newblock UCI Machine Learning Repository, 1995.

\bibitem[Xu et~al.(2010)Xu, Jin, Yang, King, and Lyu]{xu2010simple}
Z.~Xu, R.~Jin, H.~Yang, I.~King, and M.R. Lyu.
\newblock Simple and efficient multiple kernel learning by group {L}asso.
\newblock In \emph{International Conference on Machine Learning}, pages
  1175--1182, 2010.

\bibitem[Yang et~al.(2011)Yang, Xu, Ye, King, and Lyu]{yang2011efficient}
H.~Yang, Z.~Xu, J.~Ye, I.~King, and M.R. Lyu.
\newblock Efficient sparse generalized multiple kernel learning.
\newblock \emph{IEEE Transactions on Neural Networks}, 22\penalty0
  (3):\penalty0 433--446, 2011.

\bibitem[Ye and Tse(1989)]{ye1989extension}
Y.~Ye and E.~Tse.
\newblock An extension of {K}armarkar's projective algorithm for convex
  quadratic programming.
\newblock \emph{Mathematical Programming}, 44\penalty0 (1-3):\penalty0
  157--179, 1989.

\bibitem[Yeh et~al.(2009)Yeh, Yang, and Ting]{yeh2009knowledge}
I.~Yeh, K.~Yang, and T.~Ting.
\newblock Knowledge discovery on {RFM} model using {B}ernoulli sequence.
\newblock \emph{Expert Systems with Applications}, 36, 2009.

\end{thebibliography}

\end{document}